\title{Efficient Calibration for Decision Making}
\author{Parikshit Gopalan\\
Apple
\and Konstantinos Stavropoulos\thanks{Work done during an internship at Apple.}\\
UT Austin\and Kunal Talwar\\
Apple
\and Pranay Tankala\samethanks\\
Harvard}
\date{}
\newcommand*\samethanks[1][\value{footnote}]{\footnotemark[#1]}
\theoremstyle{plain}
\newtheorem{theorem}{Theorem}[section]
\newtheorem{lemma}[theorem]{Lemma}
\newtheorem{corollary}[theorem]{Corollary}
\newtheorem{proposition}[theorem]{Proposition}
\newtheorem*{claim}{Claim}
\theoremstyle{definition}
\newtheorem{definition}[theorem]{Definition}
\theoremstyle{remark}
\newtheorem{remark}[theorem]{Remark}
\numberwithin{equation}{section}
\def\A{\mathcal{A}}
\def\C{\mathcal{C}}
\def\D{\mathcal{D}}
\def\I{\mathcal{I}}
\def\J{\mathcal{J}}
\def\K{\mathcal{K}}
\def\L{\mathcal{L}}
\def\M{\mathcal{M}}
\def\W{\mathcal{W}}
\def\X{\mathcal{X}}
\newcommand*{\N}{{\mathbb{N}}}
\newcommand*{\R}{{\mathbb{R}}}
\DeclareMathOperator*{\argmin}{arg\,min}
\newcommand{\pmo}{\{+1,-1\}}
\newcommand{\intr}{\mathsf{Int}}
\let\eps\varepsilon
\let\phi\varphi
\DeclareMathOperator*{\pr}{\mathbb{P}}
\DeclareMathOperator*{\E}{\mathbb{E}}
\let\hat\widehat
\DeclareMathOperator{\poly}{poly}
\DeclareMathOperator{\sign}{sign}
\DeclareMathOperator{\proj}{\pi}
\newcommand{\cube}[1]{\{\pm 1\}^{#1}}
\newcommand{\ignore}[1]{} 
\newcommand{\eat}[1]{} 
\newcommand{\tilda}{\tilde}
\newcommand*{\vv}{\mathbf{v}}
\newcommand{\fr}[1]{\frac{1}{#1}}
\newcommand{\lt}{\left}
\newcommand{\rt}{\right}
\newcommand{\Lip}{\mathrm{Lip}}
\newcommand{\cdl}{\mathsf{CDL}}
\newcommand{\ce}{\mathsf{CE}}
\newcommand{\smce}{\mathsf{smCE}}
\newcommand{\ece}{\mathsf{ECE}}
\newcommand{\vc}{\mathsf{VCdim}}
\newcommand{\Be}{\mathsf{Ber}} % there's a \Ber macro below, which is fine as long as \Be and \Ber match
\newcommand{\erm}{\mathsf{ERM}}
\newcommand{\accept}{\mathsf{Accept}}
\newcommand{\reject}{\mathsf{Reject}}
\newcommand{\threshold}{\mathsf{thr}}
\newcommand{\tk}{\tilde{\kappa}}
\newcommand{\calMA}{\mathrm{calMA}}
\newcommand{\propce}{\ce_{\intr}}%{\ce_\proper}
\DeclarePairedDelimiter{\abs}{\lvert}{\rvert}
\definecolor{navy}{rgb}{0, 0, 0.75}
\newcommand{\pranay}[1]{{\color{navy}\textbf{Pranay:} #1}}
\newcommand{\pg}[1]{{\color{red}\textbf{Parikshit:} #1}}
\newcommand{\kunal}[1]{{\color{Green}\textbf{Kunal:} #1}}
\newcommand{\kostas}[1]{{\color{purple}\textbf{Kostas:} #1}}
\newcommand{\pranay}[1]{}
\newcommand{\pg}[1]{}
\newcommand{\kunal}[1]{}
\newcommand{\kostas}[1]{}
\newcommand{\AL}{\mathsf{AL}}
\newcommand{\prev}{\mathsf{prev}}
\newcommand{\zo}{\{0,1\}}
\newcommand{\ty}{\tilde{y}}
\newcommand{\Ber}{\mathsf{Ber}}
\newcommand{\smcdl}{\mathsf{smCDL}}
\newcommand{\Kall}{{\K^*}} % previously, we were using K* and K_all in different places
\begin{document}

\maketitle
\begin{abstract}

 A decision-theoretic characterization of perfect calibration is that an agent seeking to minimize a proper loss in expectation cannot improve their outcome by post-processing a perfectly calibrated predictor. Hu and Wu (FOCS’24) use this to define an approximate calibration measure called calibration decision loss ($\mathsf{CDL}$), which measures the maximal improvement achievable by any post-processing over any proper loss. Unfortunately, $\mathsf{CDL}$ turns out to be intractable to even weakly approximate in the offline setting, given black-box access to the predictions and labels. 
 
 We suggest circumventing this by restricting attention to structured families of post-processing functions~$\mathcal{K}$. We define the calibration decision loss relative to~$\mathcal{K}$, denoted $\mathsf{CDL}_\mathcal{K}$ where we consider all proper losses but restrict post-processings to a structured family $\mathcal{K}$. We develop a comprehensive theory of when $\mathsf{CDL}_\mathcal{K}$ is information-theoretically and computationally tractable:

\begin{itemize}
\item {\bf Complexity characterization.} The sample complexity of estimating~$\mathsf{CDL}_\mathcal{K}$ is determined by the VC dimension of~$\mathsf{thr}(\mathcal{K})$, the concept class consisting of thresholds applied to any $\kappa \in \mathcal{K}$. Computationally, estimating $\mathsf{CDL}_\mathcal{K}$ reduces to agnostically learning~$\mathsf{thr}(\mathcal{K})$. This  implies that estimating $\mathsf{CDL}$ relative to $1$-Lipschitz post-processings is information-theoretically hard.

\item {\bf Quantitative characterization.} Augmenting $\mathsf{thr}(\mathcal{K})$ with indicators of intervals of the form $[0,a]$ yields a family of weight functions $\mathcal{K}'$ such that~$\mathsf{CDL}_\mathcal{K}$ is characterized, up to a quadratic factor, by the weighted calibration error restricted to~$\mathcal{K}'$. This significantly generalizes prior bounds that were for specific choices of~$\mathcal{K}$.

\item {\bf Omniprediction.}  If $\mathsf{thr}(\mathcal{K})$ is efficiently learnable there exists a {\em single} post-processing that performs competitively with the best post-processing in~$\mathcal{K}$ for {\em every} proper loss.  Classical recalibration algorithms including the Pool Adjacent Violators (PAV) algorithm and Uniform-mass binning give similar {\em omniprediction} guarantees for natural classes of post-processings with monotonic structure.
\end{itemize}

In addition to introducing new definitions and algorithmic techniques to the theory of calibration for decision making, our results give rigorous guarantees for some widely used recalibration procedures in machine learning.
\end{abstract}

\thispagestyle{empty}
\newpage

\tableofcontents
\thispagestyle{empty}
\newpage

\setcounter{page}{1}

\section{Introduction}

Consider the setting of binary classification, where we see examples $(x,y)$ drawn from a distribution $\D$ on $\X \times \zo$. A predictor $P:\X \to [0,1]$ estimates the probability that the label $y = 1$ for a given $x$. While predictions take values in the interval $[0,1]$, the labels are binary. What does it mean for a predictor to be good in such a setting? 

The notion of (perfect) calibration, which originates in the forecasting literature \cite{dawid1985calibration} requires that every predicted value $p$ in the range of $P$, we have $\E[y|P(x)=p] = p$. The Bayes optimal predictor, defined as $p^*(x) = \E[y|x]$ is calibrated, but calibration is strictly weaker than Bayes optimality. Despite this, calibration gives important guarantees for downstream decision makers who make decisions based on the predictions, where they can trust a calibrated predictor and  act {\em as though it were indeed the Bayes optimal}.

Consider an agent who uses the predictions $P(x)$ to choose an action $a \in \A$, so as to minimize their expected loss $\E[\ell(a,y)]$ for some loss function $\ell : \A \times \{0, 1\} \to \R$. Such an agent can respond according to the best-response function $\kappa_\ell:[0,1] \to \A$, where if we denote by $\Ber(p)$ the Bernoulli distribution with parameter $p$, then
\[ \kappa_\ell(p) = \argmin_{a \in \A}\E_{\ty \sim \Ber(p)}  [\ell(a,\ty)]. \]
If we wished to minimize loss for the labels $\ty \sim \Ber(p)$, where $p = P(x)$, then $P$ is Bayes optimal. So this corresponds to the agent trusting the predictor $P$ even for labels $y$, as though it were the Bayes optimal predictor. 

Calibration gives two important guarantees to any such agent, for every loss function $\ell$:

\begin{itemize}
    \item {\bf No surprises: } The expected loss suffered $\E[\ell(\kappa(p), y)]$ under the true labels equals the expected loss $\E[\ell(\kappa(p), \ty)]$ they would expect to suffer if $P$ was indeed Bayes optimal. 

    \item {\bf No regrets: } The expected loss $\E[\ell(\kappa(p), y)]$ under the true labels is indeed minimized by playing the best response $\kappa^*$ over any other function $\kappa: [0,1] \to \A$.
\end{itemize}

While our work focuses on calibration guarantees for decision making, in the the broader context, recent interest in calibration has been driven by the numerous surprising applications of calibration and its generalizations to algorithmic fairness \cite{hkrr2018}, learning  \cite{omni}, complexity theory \cite{CDV}, pseudorandomness \cite{OI}, cryptography \cite{HuVadhan} and other areas of theoretical computer science.

Perfect calibration is an idealized notion; we cannot realistically expect it from predictors in the real world for computational and information-theoretic reasons. This has motivated the formulation of approximate notions of calibration \cite{FosterV98, kakadeF08, ZhaoKSME21, BlasiokGHN23, when-does, BlasiokN24, HuWu24, OKK25, RosselliniSBRW25}.  To be useful, approximate notions of calibration should be computationally efficient, and yield some relaxed form of the guarantees above. We refer here to the efficiency of auditing for calibration error, where the goal is to estimate a calibration measure to within a prescribed additive error, from random samples of the form $(p, y)$. Following common practice in the literature (see e.g. \cite{BlasiokGHN23}), we henceforth drop $x$ from our notation and consider the joint distribution $(p, y)$. While $x$ influences how $p = P(x)$ and $y = y|x$ are jointly distributed, we do not mention it explicitly, to emphasize the fact that calibration measures (like loss functions) are typically independent of the feature space.

\subsection{Approximate Calibration From Indistinguishability.}

This is a view of calibration  as a notion of indistinguishability between the real world and a simulation that the predictor $P$ proposes. It is based on the outcome indistinguishability framework of \cite{OI} and developed in \cite{GopalanKSZ22, GH-survey}. The real world is modeled by the joint distribution $J =(p, y)$ and the simulation by $\tilde{J} = (p, \ty)$. Perfect calibration is equivalent to the two distributions being identical.

This naturally suggests relaxations that only require the distributions to be close, not identical, but also restrict the set of distinguishers to {\em fool}, in analogy with cryptography and pseudorandomness. This restriction turns out to be crucial for efficient auditing. This relaxation is captured by the definition of weight-restricted calibration \cite{GopalanKSZ22}, where we consider a family of functions $\W \subseteq \{[0,1] \mapsto [-1,1]\}$ and require that the distributions $(p,y)$ and $(p, \ty)$ be indistinguishable to all predictors of the form $\{f(p, y) = w(p)y\}_{w \in \W}$.\footnote{All bounded distinguishers $f(p,y)$ can be assumed to have the form $w(p)y$, see \cite{GH-survey}.} Formally, for a distribution $J = (p, y)$, we define the $\W$-restricted calibration error as
\[ \ce_{\mathcal{W}}(J)= \max_{w \in \W}\E[w(p)(y - \ty)] = \max_{w \in \W}\E[w(p)(y -p)] .\]
The No Surprises property, which is referred to as decision OI in the literature \cite{gopalan2023loss}, can be seen as a form of indistinguishability between these distributions. 

A sequence of works in recent years \cite{GopalanKSZ22, BlasiokGHN23, when-does, BlasiokN24, OKK25, RosselliniSBRW25} have painted a complete picture of  weight families for which $\W$-restricted calibration is meaningful and tractable; we refer the reader to the recent survey \cite{GH-survey}. We highlight a few results that are relevant to us:
\begin{enumerate}
    \item {\bf Complexity of auditing.} The tractability of estimating $\ce_\W$ is tightly captured by the learnability of the class $\W$ \cite{GopalanHR24}. 
    \item {\bf Expected calibration Error.} The expected calibration error $\ece(J)$ corresponds to $\W = \W^*$ being all bounded functions. Since $\W^*$ has unbounded VC dimension, $\ece$ cannot be audited from finitely many samples \cite{GopalanHR24}.
    \item {\bf Decision OI.} Decision OI is characterized by $\ce_\intr$ where $\intr$ is the set of indicators of intervals in $[0,1]$ \cite{OKK25}.\footnote{This notion is called Proper calibration by \cite{OKK25} and cutoff calibration by \cite{RosselliniSBRW25}, we will refer to it as Interval-restricted calibration $\ce_\intr$ in keeping with the weight-restricted calibration nomenclature.}
    \item {\bf Smooth calibration.} Considering the family of $1$-Lipschitz weight functions gives smooth calibration \cite{kakadeF08}, which is robust under small perturbations of the predictor, and captures an intuitive measure of calibration error called the distance to calibration \cite{BlasiokGHN23}.
\end{enumerate}
Since both intervals and Lipschitz functions are efficiently learnable in $1$ dimension, the latter two notions can be audited efficiently (see also \cite{hu2024testing}).  

\subsection{Approximate Calibration From No Regret.} An alternate view of approximate calibration comes from relaxing the no regret property of perfect calibration. We will restrict our attention to (bounded) proper loss functions. Informally, for a loss function to be proper, a predictor with knowledge of the true outcome distribution should not be able to further decrease its loss by dishonestly predicting some alternate distribution. Formally, these are loss functions where the action space $\A$ is the space of predictions $[0,1]$, and when $y \sim \Ber(q)$, the prediction $p$ that minimizes the expected loss $\E[\ell(p, y)]$ is $p = q$. Restricting to proper losses is without loss of generality since an arbitrary loss function $\ell$ can be converted to a proper loss $\ell'$ by composing it with the best response $\ell' = \ell \circ \kappa_\ell^* $.\footnote{In other words, a predictor that knows that an agent will best-respond to their predictions can view the agent as optimizing a proper loss.} We let $\L^*$ denote the set of all bounded proper loss functions, and $\Kall = \{[0,1] \mapsto [0,1]\}$ denote the family of all post-processings.

The starting point is the following characterization of calibration due to \cite{FosterV98}: $J = (p,y)$ is perfectly calibrated iff for every proper loss $\ell \in \L^*$ and post-processing $\kappa \in \Kall$,
\[ \E[\ell(p, y) \leq \E[\ell(\kappa(p), y)].\]
This means that in every miscalibrated predictor, this inequality is violated for some $\ell, \kappa$ pair. The recent work of Hu and Wu \cite{HuWu24}, building on \cite{kleinberg2023u}, suggests using the magnitude of this violation as a measure of miscalibration, which they call the {\em calibration decision loss}. Formally, for a family of post-processings $\K \subseteq \Kall$,\footnote{We assume $\K$ contains the identity function, which ensure the positivity of $\cdl_\K$.} we define the {\em calibration decision loss relative to $\K$} as 
\[ \cdl_{\K}(J) = \max_{\kappa \in K}\E[\ell(p, y) - \ell(\kappa(p), y)]. \]
This leads to a natural family of calibration measures parametrized by $\K$, which measures how much regret (excess loss) a decision maker could possibly suffer for a lack of calibration, relative to a baseline set of post-processings $\K$. In this work we propose studying the complexity of $\cdl_\K$ for general families $\K$ of post-processing functions. 

Motivated by the online prediction setting, \cite{HuWu24} were primarily interested in the case $\K = \Kall$, where they show a tight connection with the $\ece$:
\[\ece(J)^2 \lesssim \cdl_{\Kall}(J) \lesssim \ece(J).\]
In the online setting, they showed surprisingly that $\cdl$ admits much lower regret rates than $\ece$. However, for the auditing problem in the offline setting which is our main focus, this tight connection to $\ece$ means that $\cdl_{\Kall}$ cannot be audited from finitely many samples. One might hope to circumvent this intractability by suitably restricting the family of post-processings $\K$, in analogy to the weight-restricted calibration setting. The family of monotone post-processings $\M_+$ was considered in  \cite{RosselliniSBRW25}, who bound the calibration decision loss relative to $\M_+$ by $\ce_\intr$, where $\intr$ is the family of indicators of intervals, showing that $\cdl_{\M_+}(J) \leq 2\ce_{\intr}(J)$.\footnote{While not stated in terms of $\cdl$, Proposition 3.2 in their paper is equivalent to the above inequality.} 

In addition to being a natural question from a theoretical viewpoint, understanding $\cdl_\K$ for various classes $\K$ is relevant to how calibration errors are measured and remediated in practice \cite{guo2017calibration}. Popular methods for recalibration such as Isotonic Regression \cite{zadrozny2001obtaining, zadrozny2002transforming} and the Pool Adjacent Violators (PAV) algorithm \cite{ayer1955empirical} and Platt scaling \cite{Platt}, try to find the best post-processing from a family $\K$ (see \cite{Niculescu-MizilC05} for more details about these methods). Isotonic regression (which is the problem solved by PAV) considers monotone post-processings, whereas Platt scaling considers a parametrized subclass of monotone functions consisting of sigmoids.  Implicitly, such methods consider the predictor calibrated if $\cdl_\K$ is small. Other families of post-processings like recalibration based on Uniform-mass binning \cite{zadrozny2001obtaining,gupta2021distribution,sun2023minimum} and vector/matrix scaling have been proposed \cite{guo2017calibration}. Methods that find the best post-processing from a small class have also shown to be effective in practice (see e.g.~\cite{BertaHJB25}). However, there was a lack of theory to guide the choice of $\K$. 

Other results in the literature on decision making from calibration correspond to studying $\cdl$ for restrictions of post-processings, loss functions or both. For instance, \cite{when-does} consider Lipschitz post-processings, and show that the maximum improvement to the expected squared loss $\ell_2(p, y) = (y - p)^2$ from such post-processings is quadratically related to the smooth calibration error.   But in contrast to weight-restricted calibration, there wasn't a comprehensive understanding of when calibration decision loss relative to post-processings in $\K$ is a tractable measure. 

\paragraph{Our work:}  In this work, we seek to understand the families of post-processings $\K$ for which $\cdl_\K$ gives a tractable calibration measure that has strong guarantees for downstream decision makers. This raises several natural questions:
\begin{enumerate}
    \item {\bf Complexity characterization.} For what families of post-processings $\K$ is estimating $\cdl_{\K}$ tractable in terms of sample and computational complexity? Is there a complexity measure for $\K$ that governs tractability?
    \item {\bf Specific post-processings.} Is $\cdl$ estimation tractable for  monotone post-processings and Lipschitz post-processings?\footnote{We invite the reader to make a guess about these two specific families before reading further.} What is the most-expressive family $\K$ for which we can estimate $\cdl_\K$ efficiently?
    \item {\bf Efficient post-processing.} If a predictor has large calibration decision loss relative to $\K$, how should we post-process it?
    \item {\bf Relation to weight-restricted calibration.} How does calibration decision loss relative to $\K$ relate to weight-restricted calibration? Does small calibration error in one sense imply small error in the other?
\end{enumerate}

The main contribution of this paper is to develop a comprehensive theory of when calibration decision loss relative to $\K$ is tractable. We use this theory to answer the questions raised above. Some highlights from our results include:
\begin{itemize}
    \item We show that allowing  all Lipschitz post-processings results in a $\cdl$ notion that is information-theoretically hard to estimate, requiring unbounded sample size. This is in stark contrast to weighted calibration, where allowing all Lipschitz weight functions yields smooth calibration, which is not only tractable but captures distance to calibration and anchors the notion of consistent calibration measures as defined by \cite{BlasiokGHN23}.
    \item We show that restricting post-processings to monotone functions and their generalizations yield  tractable notions of $\cdl$. Our theoretical results justify the central role such families play in practice \cite{Platt, ayer1955empirical, zadrozny2002transforming}. 
    \item We establish tight connections between calibration decision loss and weight restricted calibration for any valid post-processing class $\K$, unifying and generalizing the previous results of \cite{kleinberg2023u, HuWu24, RosselliniSBRW25}.
    \item We introduce new algorithmic techniques from the omniprediction literature to post-processing. We give rigorous new guarantees for some commonly used recalibration procedures in the machine learning literature like Isotonic regression/Pool Adjacent Violators \cite{ayer1955empirical, zadrozny2002transforming} and Uniform mass binning  \cite{zadrozny2001obtaining, gupta2021distribution,sun2023minimum}.
\end{itemize}

\section{Overview of Our Results}

In this section, we present the key definitions and an overview of our main results, and highlight some of the key ideas behind them, without getting into the technical details. 

\subsection{Definitions}
\paragraph{Proper loss functions.}

\begin{definition}[Proper Losses]
\label[definition]{definition:proper-losses}
    A loss $\ell:[0,1]\times\{0,1\}\to \R$ is \emph{proper} if for all $p,q\in[0,1]$,
    \[
        \scalebox{1.2}{$\mathbb{E}$}_{y \sim \Be(q)}\bigl[\ell(p,y) - \ell(q,y)\bigr] \ge 0.
    \]
    We let $\L^*$ be the class of proper losses such that $\abs{\ell(p, 1) - \ell(p, 0)} \le 1$ for all $p \in [0, 1]$.\footnote{Note that our definition of $\L^*$ includes all proper loss functions with range $[0,1]$. The exact range ($[-1,1]$ versus $[0,1]$) is not crucial, it will only change the definition of $\cdl$ by a factor of $2$. }
\end{definition}

Of particular importance  are the \emph{V-shaped} losses studied in \cite{HartlineSLW23, kleinberg2023u}, which are all functions
    \[
        \ell_{v}(p, y) = -\sign(p-v)(y-v),
    \]
where $v \in [0, 1]$. These are proper losses, and moreover they form a basis for $\L^*$; a precise formulation which builds on \cite{kleinberg2023u} is given in \cref{theorem:uv-cal}. 

\paragraph{Post-processing functions.} We say that a class of post-processings $\kappa : [0, 1] \to [0, 1]$ is \emph{valid} if it contains the identity function and satisfies a certain \emph{translation invariance} under shifts of either axis (the precise definition is  in \cref{def:valid-post}). Most natural classes of post-processings we know that have been considered previously, including $\Kall$, Lipschitz, monotone and bounded degree polynomial post-processings, are valid classes. For a valid post-processing class, $\K$, let 
\[
        \threshold(\K) = \Bigl\{p \mapsto \sign_+\Bigl(\kappa(p) - \frac{1}{2}\Bigr) \;\Big|\; \kappa \in \K \Bigr\}.
    \] 
While the choice of $1/2$ might seem arbitrary, the translation invariance of $\K$ implies that any constant in $(0,1)$ will do.

We will pay special attention to the class of \emph{monotonic} post-processing functions, as well as a natural generalization of this class. To define the class, recall that an \emph{interval} $I \subseteq [0, 1]$ with no further specification may be open, closed, half-open, or a singleton.

\begin{definition}[Generalized Monotonicity]
\label[definition]{definition:generalized-monotonicity}
    Given an integer $r \in \N$ and a function $\kappa : [0, 1] \to [0, 1]$, we say $\kappa \in \M_r$ if for all values $v \in \R$, the \emph{$v$-superlevel set} of $\kappa$,
    \[
        \kappa^{-1}\bigl([v, 1]\bigr) = \bigl\{p \in [0, 1] \,\big|\, \kappa(p) \ge v\bigr\},
    \]
    can be expressed as the union of at most $r$ disjoint intervals $I_1, \ldots, I_r \subseteq [0, 1]$. Then, $\M_r$ is a valid post-processing class. In addition, let $\M_+$ and $\M_-$ denote the sets of monotonically nondecreasing and nonincreasing functions $\kappa : [0, 1] \to [0, 1]$, respectively.
\end{definition}

We call $\M_r$ a class of ``generalized'' monotonic functions because the monotonic functions $\M_+$ and $\M_-$ are both subsets of $\M_1$, and because $\M_r \subseteq \M_s$ for all $r \le s$. We also note that $\M_r$ and $\M_+$ are valid post-processing classes, but $\M_-$ is not: Although $\M_-$ is translation invariant, it does not contain the identity function. Generalized monotone functions are the broadest class of weight functions that admit efficient algorithms for $\cdl$ by our results. In Section \ref{sec:why-gmr} we discuss why they constitute a natural class of post-processings to consider, both from a theoretical and practical viewpoint. In \Cref{fig:crossings} we give an example of a function which is (very) non-monotone, but is a $3$-generalized monotone function.

\begin{figure}
    \centering
    \includegraphics[width=0.3\linewidth]{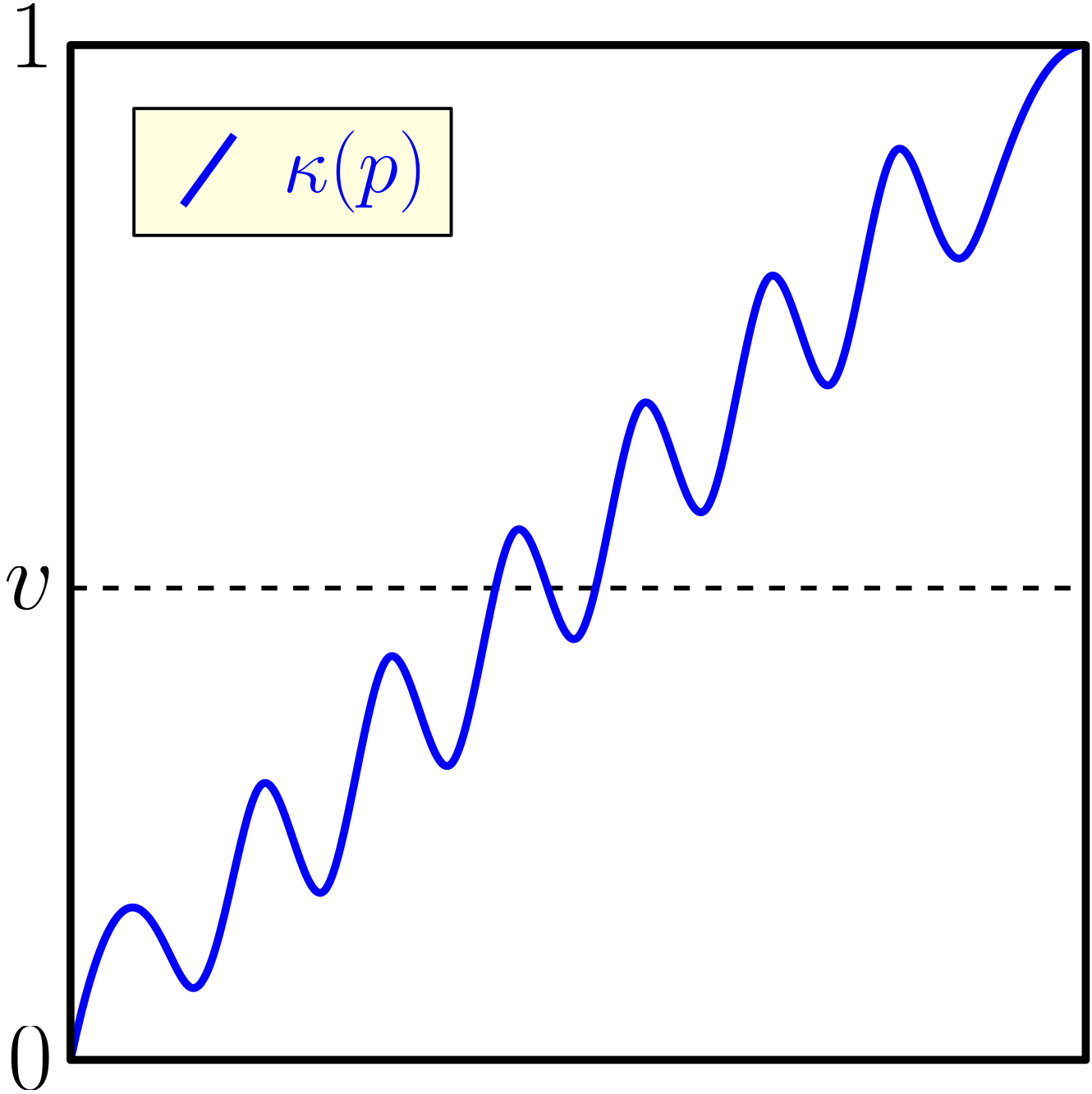}
    \caption{Function $\kappa:[0,1]\to [0,1]$ that crosses every threshold $v\in [0,1]$ of its range at most $3$ times. The monotonicity of the function $\kappa(p)$ changes $14$ times as $p$ grows from $0$ to $1$. Although $\kappa$ is non-monotone, we have $\kappa\in \M_r$ for $r=3$.}
    \label{fig:crossings}
\end{figure}

\paragraph{Calibration Decision Loss.} Next we define \emph{calibration decision loss} ($\cdl$), introduced by \cite{HuWu24}. We study a version of $\cdl$ that is parameterized by a \emph{valid post-processing class $\K$}.

\begin{definition}[Calibration Decision Loss relative to $\K$]
\label[definition]{definition:cdl}
    Given valid post-processing class $\mathcal{K}$ and a joint distribution $J$ over pairs $(p, y) \in [0, 1] \times \{0, 1\}$, the \emph{calibration decision loss} is
    \[
        \cdl_{\K}(J) = \sup_{\ell, \kappa} \, \E\bigl[\ell(p, y) - \ell(\kappa(p), y)\bigr],
    \]
    where the supremum is taken over all $\ell \in \L^*$ and $\kappa \in \K$. 
    Given a subset $\L \subset \L^*$ we define the $(\L, \K)$-calibration decision loss as
     \[
        \cdl_{\L, \K}(J) = \sup_{\ell \in \L, \kappa \in \K} \, \E\bigl[\ell(p, y) - \ell(\kappa(p), y)\bigr].
    \]
    Given a particular $\ell \in \L^*$, we also define a loss-specific version of $\cdl$, called the \emph{calibration fixed decision loss}, as
    \[
        \cdl_{\ell, \mathcal{K}}(J) = \sup_{\kappa} \, \E\bigl[\ell(p, y) - \ell(\kappa(p), y)\bigr].
    \]
    
\end{definition}

For any valid $\K$, $\cdl_\K$ is always non-negative since the identity function is in $\K$.  
When $\K = \Kall$ is the family of \emph{all} functions $\kappa : [0, 1] \to [0, 1]$, then our notion of $\cdl$ coincides with that of \cite{HuWu24}.

In the remainder of this section, we describe the three main tasks related to calibration decision loss that we study in our work: efficient testing/auditing,  relationship to weight-restricted calibration and efficient omniprediction, and our results for each of them.

\subsection{Testing and Auditing}
The first problem we study is the characterization of post-processing classes $\K$ that allow for efficient \emph{testing} of $\cdl_\K$, which we define as follows.

\begin{definition}[Testing $\cdl$]
\label[definition]{definition:testing-cdl}
    We say that an algorithm $\A$ is an $(\alpha,\beta)$-tester for $\cdl_{\K}$, if the algorithm, upon receiving a large enough set of i.i.d. examples from some unknown distribution $J$, outputs either $\accept$ or $\reject$, and satisfies the following properties with probability at least $2/3$.
    \begin{enumerate}
        \item If $\cdl_\K(J) > \alpha$, then $\A$ outputs $\reject$,
        \item If $\cdl_\K(J) \le \beta$, then $\A$ outputs $\accept$.
    \end{enumerate}
\end{definition}

As is typical, the probability here is taken over the sampling process, and the internal randomness of the tester. The tester's output is unconstrained  if $\cdl_\K(J) \in (\beta, \alpha)$.
We also consider a relaxation of the testing problem, called \emph{auditing}, which we define in terms of the \emph{expected calibration error} $\ece(J) = \E \bigl\lvert \E[y|p] - p \bigr\rvert$. Recall that Hu and Wu \cite{HuWu24} showed that $\cdl_\K(J) \le 2 \cdot \ece(J)$. The following definition weakens the second requirement to accepting predictors with small $\ece$.

\begin{definition}[Auditing for $\cdl$]
\label[definition]{definition:auditing-cdl}
    We say that an algorithm $\A$ is an $(\alpha,\beta)$-auditor for $\cdl_{\K}$, if the algorithm, upon receiving a large enough set of i.i.d. examples from some unknown distribution $J$, outputs either $\accept$ or $\reject$, and satisfies the following property, with probability at least $2/3$.
    \begin{enumerate}
        \item If $\cdl_\K(J) > \alpha$, then $\A$ outputs $\reject$,
        \item If $\ece(J) \le \beta/2$, then $\A$ outputs $\accept$.
    \end{enumerate}
    \end{definition}
   
\paragraph{Sample complexity of testing and auditing.}
   Our first result shows that the VC dimension of $\threshold(\K)$ governs the sample efficiency of calibration testing and auditing for $\cdl_\K$. 
   
\begin{theorem}[Sample Complexity Bounds]\label[theorem]{theorem:sample-complexity=intro}
    Let $\K$ be a valid post-processing class, and let $d = \vc(\threshold(\K))$. Then, 
    \begin{enumerate}
        \item For any $\alpha,\eps\in(0,1)$, there is an $(\alpha,\alpha-\eps)$-tester for $\cdl_{\K}$ with sample complexity $\tilde{O}(d/\eps^2)$.
        \item Any $(1/8 , 0)$-auditor for $\cdl_{\K}$ requires $\Omega(\sqrt{d})$ samples.
    \end{enumerate}
\end{theorem}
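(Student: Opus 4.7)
The plan is to handle the two parts separately. For the upper bound, the approach is to reduce estimation of $\cdl_\K$ to a uniform convergence problem over a function class whose complexity is controlled by $\vc(\threshold(\K)) = d$; the main obstacle is managing the joint variation of $v$ (the proper-loss parameter) and $\kappa \in \K$ without blowing up the effective complexity past $\tilde O(d)$. For the lower bound, the approach is a birthday-paradox indistinguishability argument between a perfectly calibrated null and a family of random alternatives whose $\cdl_\K$ is provably large.

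\textbf{Upper bound.} The first step is to invoke \cref{theorem:uv-cal}, which writes every $\ell \in \L^*$ as a positive combination of V-shaped losses $\{\ell_v\}_{v \in [0,1]}$ with total mass bounded by an absolute constant. Combining this with the identity $\ell_v(p,y) - \ell_v(\kappa(p),y) = (y-v)(\sign(\kappa(p)-v) - \sign(p-v))$ yields, up to a constant factor,
\[
\cdl_\K(J) \;\asymp\; \sup_{v \in [0,1],\, \kappa \in \K}\; \E_J\!\bigl[(y-v)\bigl(\sign(\kappa(p)-v) - \sign(p-v)\bigr)\bigr].
\]
By the translation invariance in \cref{def:valid-post}, for every fixed $v$ the family $\{p \mapsto \mathbbm{1}[\kappa(p) \ge v] : \kappa \in \K\}$ is a relabeling of $\threshold(\K)$ and hence has VC dimension $d$. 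After an $\eps$-net discretization of $v$ (valid since the integrand is $1$-Lipschitz in $v$) and applying the standard VC uniform convergence bound, $n = \tilde O(d/\eps^2)$ i.i.d.\ samples suffice to estimate the supremum within additive error $\eps/2$ with probability $\ge 2/3$; the tester outputs $\reject$ if the empirical estimate exceeds $\alpha - \eps/2$ and $\accept$ otherwise.

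\textbf{Lower bound.} Let $p_1, \ldots, p_d$ be distinct points in $(1/2,\, 1/2 + 1/d^{10}]$ shattered by $\threshold(\K)$---such a configuration exists (up to relabeling) by translation invariance of $\K$ on the $x$-axis---with witnesses $\kappa_B \in \K$ satisfying $\sign(\kappa_B(p_i) - 1/2) = +1$ iff $i \in B$. Define the \emph{null} $J_0$ by $p \sim \unif\{p_1, \ldots, p_d\}$ and $y \mid p = p_i \sim \Ber(p_i)$, so that $\ece(J_0) = 0$; and for each $B$ with $|B| = d/2$, a \emph{hard alternative} $J_B$ by $p \sim \unif\{p_i\}$ and $y \mid p = p_i = \mathbbm{1}[i \in B]$ deterministically. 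A direct computation using $\ell_{1/2}$ together with $\kappa_B$ (and the fact that $p_i > 1/2$) gives $\cdl_\K(J_B) \ge 1/2 > 1/8$. For indistinguishability: by the birthday paradox the $n$ drawn indices are all distinct with probability $1 - O(n^2/d) = 1 - o(1)$ when $n = o(\sqrt d)$, and conditioned on distinct indices the label vector $(y_k)$ is $o(1)$-close in TV to i.i.d.\ $\Ber(1/2)$ both under the uniform mixture $J_\star = \E_B[J_B]$ (by the sampling-without-replacement vs.\ with-replacement comparison for a length-$d$ string with $d/2$ ones) and under $J_0$ (since $p_i = 1/2 + O(d^{-10})$). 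Le Cam's two-point method then rules out any $(1/8, 0)$-auditor using $o(\sqrt d)$ samples.
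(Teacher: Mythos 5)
Your approach tracks the paper's closely: reduce to V-shaped losses via \cref{theorem:uv-cal}, use translation invariance to control the class $\{p\mapsto \sign_+(\kappa(p)-v)\}$ by $\threshold(\K)$, discretize $v$, and apply uniform convergence. One imprecision: the integrand $(y-v)(\sign(\kappa(p)-v)-\sign(p-v))$ is \emph{not} $1$-Lipschitz in $v$ --- the sign factors jump. The paper's proof of \cref{lemma:uniform-convergence-cdl} handles this correctly by discretizing only the $(y-v)$ factor (which is Lipschitz) and absorbing the dependence of the sign factors on $v$ into the VC bound (adding only $O(1)$ dimension for the $\sign(p-v)$ half-line class). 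Your $\eps$-net must be split this way for the argument to be correct, but this is a fixable detail.

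\textbf{Lower bound --- a genuine gap.} Your construction places the $d$ shattered points in $(1/2,\,1/2+d^{-10}]$, citing ``translation invariance of $\K$ on the $x$-axis.'' But the translation invariance in \cref{def:valid-post} only permits rigid \emph{shifts} $p\mapsto [p+s]_0^1$ (followed by clipping); it does not allow \emph{compressing} an arbitrary shattered set into an interval of width $d^{-10}$. In general the shattered set can be spread over all of $[0,1]$, and the best one can do by shifting is move at least $d/2$ of the points into, say, $[1/4,3/4]$. Your indistinguishability argument critically requires all $p_i\approx 1/2$: you compare the labels under $J_0$ (independent $\Ber(p_i)$) to those under the uniform mixture $\E_B[J_B]$ (hypergeometric, close to i.i.d.\ $\Ber(1/2)$), and this comparison only gives small TV if every $p_i$ is near $1/2$. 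Without the compression, those two label distributions can be far apart even conditioned on no collision, so Le Cam's method does not apply. The paper sidesteps this entirely by using a \emph{randomized} family of alternatives: each $J_1$ has $\E[y\mid p=q_i]=y_i$ where $y_i\sim\Ber(q_i)$ independently, so the sample distribution under the mixture of alternatives is \emph{identical} to that under $J_0$ conditioned on all samples being distinct --- no TV-approximation is needed, and $q_i\in[1/4,3/4]$ suffices (one then shows $\cdl_\K(J_1)\ge 1/8$ with probability $5/6$ over the random $(y_i)$ by separately bounding the supremum term and the baseline term, as in \cref{lem:main-dist}). You should replace your fixed family $\{J_B\}$ with this randomized family, and the $d^{-10}$-compression step disappears.
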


Some  implications of this result:
\begin{itemize}
\item Auditing is easier than testing, since any $(\alpha, \beta)$-tester for $\cdl_\K$ is also an $(\alpha, \beta)$-auditor. Hence our lower bound for auditing also applies to testers. The case $\beta =0$ corresponds to perfect calibration, so our lower bound holds even for algorithms that must only accept perfectly calibrated predictors, and reject predictors with large $\cdl_\K$. 

\item  A key corollary is that auditing $\cdl_{\Lip}$ is not possible with finitely many samples. This follows from the observation that $\threshold(\Lip)$ has unbounded VC dimension (see \cref{corollary:lipschitz-impossibility}). 

\item The lower bound makes use of V-shaped losses. These losses are not strongly convex unlike common proper losses used in practice, and one may wonder if the lower bound can be circumvented by assuming strong convexity. The answer is No: we prove that essentially the same lower bound carries over even for strongly convex losses (see \Cref{thm:test-audit-sc}).

\item V-shaped losses are discontinuous in the prediction value $p$. If we restrict our attention to loss functions that are Lipschitz continuous as a function of $p$, and only consider the CDL for such losses and Lipschitz post-processings, then we show that this measure is quadratically related to the smooth calibration error, hence it is efficiently auditable (see \cref{thm:cdl-smooth}). 
\end{itemize}

\paragraph{Computationally efficiency from learnability.} We show that efficient algorithms for testing and auditing $\cdl_\K$ can be derived from efficient learning algorithms for $\threshold(\K)$.

We will use the standard primitive of agnostic learning, first introduced by \cite{KearnsSS94}. It is known to be equivalent to  seemingly weaker primitives like weak agnostic learning  \cite{feldman2009distribution, kk09}.

\begin{definition}[Agnostic Learning]\label[definition]{definition:agnostic-learning}
    Let $\C\subseteq\{[0,1]\to \cube{}\}$. We say that an algorithm $\A$ is an $\eps$-agnostic  learner for $\C$ if, upon receiving a large enough set of i.i.d. examples from some unknown distribution $D$ over $[0,1]\times\cube{}$, $\A$ outputs some hypothesis $h:[0,1]\to\cube{}$ such that, with probability at least $0.9$ over the samples and the internal randomness of $\A$, we have:
    \[
        \pr_{(p,z)\sim D}[h(p)\neq z] \le \min_{f\in \C}\pr_{(p,z)\sim D}[f(p)\neq z] + \eps
    \]
    Moreover, if $h \in \C$ we say that $\A$ is proper.
\end{definition}

Our main algorithmic result is a reduction from $\cdl_\K$ testing to proper agnostic learning  for $\threshold(\K)$. For $\cdl_\K$ auditing, improper agnostic learning suffices.

\begin{theorem}[Testing and Auditing from Agnostic Learning, \Cref{theorem:testing-through-proper-agnostic-learning}]\label[theorem]{thm:testing-intro}
    Let $\K$ be a valid post-processing class. Let $\AL$ be an $\eps$-agnostic learner for $\threshold(\K)$. Then, for any $\alpha\in(0,1)$, there is an $(\alpha,\alpha- 3\eps)$-auditor for $\cdl_\K$ that makes $\tilde{O}(1/\eps)$ calls to $\AL$. Moreover, if $\AL$ is proper, then there an $(\alpha, \alpha -3\eps)$-tester for $\cdl_\K$ of similar complexity.
\end{theorem}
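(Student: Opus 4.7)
The plan is to reduce the computation of $\cdl_\K$ to a small number of agnostic learning calls on $\threshold(\K)$ via the V-shaped loss representation of proper losses. By \cref{theorem:uv-cal}, every proper loss decomposes as a mixture of V-shaped losses $\ell_v(p,y) = -\sign(p-v)(y-v)$, and the sup over $\ell \in \L^*$ of a linear functional is attained at an extreme point, so
\[
\cdl_\K(J) = \sup_{v \in [0,1],\, \kappa \in \K} \E\bigl[(y-v)(\sign(\kappa(p) - v) - \sign(p-v))\bigr].
\]
Since $\sign(p-v)$ does not depend on $\kappa$, the inner maximization is the correlation problem $\max_{h \in \threshold_v(\K)} \E[(y-v)h(p)]$, where $\threshold_v(\K) := \{p \mapsto \sign(\kappa(p) - v) : \kappa \in \K\}$. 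By the translation invariance built into the definition of a valid post-processing class, $\threshold_v(\K)$ coincides with $\threshold(\K)$ up to a reparameterization of $\K$, so $\AL$ can be invoked at every level $v$.

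Next I would discretize $v$ and reduce to unweighted agnostic classification. Using an adaptive grid $V$ of $O(1/\eps)$ empirical $\eps$-quantiles of the marginal of $p$, moving from $v$ to the nearest grid point changes $\ell_v$ by $O(\eps)$ in expectation: the continuous part contributes $|\Delta v|$, while sign flips occur only for $p$ inside a single bin of probability $\eps$. For each $v \in V$, using $(y - v) = |y-v| \cdot (2y-1)$ for $v \in (0,1)$, the correlation objective is a weighted agnostic classification problem with binary label $z = 2y - 1$ and weight $|y - v| \in \{v, 1-v\}$; a standard rejection-sampling reduction (keep each sample with probability $|y-v|$) turns it into a single unweighted call to $\AL$, with overhead $O(1/\min(v, 1-v))$ that stays bounded for $v \in V \subseteq [\eps, 1-\eps]$.

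Given the hypothesis $h_v$ returned by $\AL$ at each level, I would empirically estimate $\hat G_v \approx \E[(y-v)(h_v(p) - \sign(p-v))]$ to accuracy $\eps$ and output $\reject$ iff $\max_{v \in V} \hat G_v$ exceeds $(\alpha + \beta)/2$ with $\beta = \alpha - 3\eps$. If $\cdl_\K(J) > \alpha$, some grid level witnesses a true gain $> \alpha - \eps$, $\AL$ finds $h_v$ with correlation within $\eps$ of optimal, and empirical concentration loses another $\eps$, so $\hat G_v > \alpha - 3\eps$ and the algorithm rejects. For the \emph{tester}, properness of $\AL$ gives $h_v = \sign(\kappa_v - v)$ for some $\kappa_v \in \K$, so $\hat G_v$ is an unbiased estimate of the $\cdl$ gain of a specific $\kappa_v \in \K$; hence $\hat G_v \le \cdl_\K(J) + \eps$, and we accept whenever $\cdl_\K(J) \le \alpha - 3\eps$. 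For the \emph{auditor}, the key observation is that for every $h : [0,1] \to \{\pm 1\}$ and every $v$,
\[
\E\bigl[(y-v)(h(p) - \sign(p-v))\bigr] \le 2\,\ece(J),
\]
which follows because the sup over $h$ is attained at $h^\star(p) = \sign(\E[y|p] - v)$, which disagrees with $\sign(p-v)$ only when $v$ lies between $p$ and $\E[y|p]$, and on that event the integrand is at most $2|\E[y|p] - p|$. Thus if $\ece(J) \le \beta/2$, even improper $h_v$ satisfy $\hat G_v \le \beta + \eps$, and we accept.

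The main obstacle is cleanly handling the discretization of $v$ without any Lipschitz hypothesis on the distribution of $p$, which forces the adaptive quantile-based grid and a careful accounting of three sources of additive slack (discretization, learning, concentration) summing to $3\eps$. The ECE-based bound above is the conceptual separation between auditing and testing: improper hypotheses already suffice to rule out large gains when $\ece$ is small, but certifying membership of $h_v$ in $\threshold_v(\K)$ via proper $\AL$ is precisely what upgrades the auditor into a tester.
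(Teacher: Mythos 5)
Your overall architecture matches the paper's closely: decompose proper losses into V-shaped $\ell_v$ losses via \cref{theorem:uv-cal}, use translation invariance of $\K$ to replace the $v$-threshold $\sign(\kappa(p)-v)$ by a $\frac12$-threshold of some $\kappa'\in\K$, discretize $v$, invoke $\AL$ once per grid point, estimate each gain empirically, and distinguish auditor from tester by whether $\AL$ is proper. Your direct proof of $\E[(y-v)(h(p)-\sign(p-v))]\le 2\,\ece(J)$ is correct and is exactly the bound the paper cites from \cite{kleinberg2023u,HuWu24}; your rejection-sampling reduction from weighted to unweighted agnostic learning is a fine alternative to the paper's randomized-rounding trick and matches the stated call complexity.

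There is, however, a genuine gap in the discretization step. You build the grid $V$ from $\eps$-quantiles of the marginal of $p$ alone, and then assert that moving $v$ to the nearest grid point incurs error ``the continuous part contributes $|\Delta v|$, while sign flips occur only in a single bin of probability $\eps$.'' But quantiles control only the sign-flip term $\Pr[p\text{ between }v,v']$, not $|\Delta v|$ itself: if the marginal of $p$ has a large gap or a point mass (e.g.\ all mass at $p=1/2$), consecutive quantiles can be $\Omega(1)$ apart, so $|\Delta v|$ need not be $O(\eps)$. Since the $(y-v)$ factor in $\ell_v$ is Lipschitz in $v$ and appears everywhere, the discretization error $\bigl|\cdl_{v,\K}(J)-\cdl_{v',\K}(J)\bigr|$ has a term of order $|v-v'|$ that a quantile-only grid fails to control. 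The fix---and what the paper does---is to take $V$ to be the \emph{union} of $O(1/\eps)$ equally-spaced points (ensuring $|\Delta v|\le\eps$, ``close in length'') with $O(1/\eps)$ quantile/sample points (ensuring the sign-flip probability is $\le\eps$, ``close in probability''). This does not change the $\tilde{O}(1/\eps)$ call count, so the theorem still holds, but your argument as written does not establish it.
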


This result lets us translate efficient agnostic learning algorithms for classes of Boolean functions on $[0,1]$ to efficient algorithms for $\cdl_\K$ estimation for valid post-processing classes.

\begin{itemize}
\item For the class $\M_+$ of monotone post-processings, $\threshold(\M_+)$ is the family of intervals $[a,1]$, and it is a classic result that intervals are efficiently agnostically learnable. The efficient tester that results from \Cref{thm:testing-intro} strengthens and extends the result of \cite{RosselliniSBRW25} who showed that $\cdl_{\M_+}$ can be bounded by ensuring low weight-restricted calibration error for intervals.

\item  More generally, for  $\kappa \in \M_r$, the sets $\kappa(p) \geq v$ can be expressed as the union of at most $r$ disjoint intervals. We show that this class admits an efficient proper agnostic learner. As a corollary, we obtain an efficient tester for $\cdl_{\M_r}$.  
\end{itemize}

\subsection{Calibration Decision Loss and Weight-Restricted Calibration}

The next problem we study is the relation between $\cdl_\K$ and \emph{weight-restricted} calibration measures, defined below. The standard definition \cite{GopalanKSZ22, GH-survey} either takes the absolute value of the expectation or assumes that $\W$ is closed under negation, we intentionally will do neither. 

\begin{definition}[Weight-Restricted Calibration Error]
\label[definition]{def:weighted-ce}
    Given a distribution $J$ over $(p, y) \in [0, 1] \times \{0, 1\}$ and a class of weight functions $\W  \subseteq \{[0, 1] \to [-1, +1]\}$, the \emph{$\W$-restricted calibration error} is
    \[
        \ce_{\mathcal{W}}(J) = \sup_{w \in \mathcal{W}}\, \E\bigl[w(p) (y-p)\bigr].
    \]
    In the case that $\W$ is closed under negations, we have $\ce_\W(J) \ge 0$.
\end{definition}

Several works \cite{kleinberg2023u, HuWu24, when-does, RosselliniSBRW25} have proved results that can be viewed as instances of this general question for specific choice of loss families $\L$ and post-processings $\K$. Such characterizations are valuable because weight-restricted calibration error measures have been well studied in the literature, and we understand the relation between various measures fairly well \cite{BlasiokGHN23, GH-survey}. Moreover, we would like notions of approximate calibration to simultaneously give small calibration decision loss and strong indistinguishability, making it natural to ask to what extent one implies the other.

We significantly extend and complete this line of work with a general characterization that holds for all valid classes $\K$.

\begin{theorem}
\label[theorem]{thm:cdl-vs-ce-intro}
Given a valid post-processing class $\K$, let
\[
    \threshold'(\K) = \threshold(\K) \cup \Bigl\{p \mapsto -\sign_+(p - v) \;\Big|\; v \in \R \Bigr\}.
\]
    Then \[\ce_{\threshold'(\K)}(J)^2 \lesssim \cdl_\K(J) \lesssim \ce_{\threshold'(\K)}(J).\]
\end{theorem}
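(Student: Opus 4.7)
The plan is to prove both bounds by reducing to V-shaped losses $\ell_v(p, y) = -\sign(p-v)(y-v)$, which by \cref{theorem:uv-cal} span $\L^*$ as a Schervish-type mixture. For the upper bound $\cdl_\K(J) \lesssim \ce_{\threshold'(\K)}(J)$, the key identity is
\[
    \E\bigl[\ell_v(p, y) - \ell_v(\kappa(p), y)\bigr] = 2\,\E\bigl[(y-v)(\sign_+(\kappa(p) - v) - \sign_+(p - v))\bigr].
\]
Splitting $y - v = (y - p) + (p - v)$, a short sign-analysis case check on $\{p \ge v\}$ versus $\{p < v\}$ shows the $(p-v)$ contribution is pointwise nonpositive. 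The $(y-p)$ contribution equals $2\,\E[(y-p)u] - 2\,\E[(y-p)u']$ where $u := \sign_+(\kappa - v) \in \threshold(\K) \subseteq \threshold'(\K)$ by translation invariance of $\K$, and $-u' := -\sign_+(p - v) \in \threshold'(\K)$ by construction, so each expectation has magnitude at most $\ce_{\threshold'(\K)}(J)$. Integrating against the Schervish mixture (whose total mass is bounded because $|\ell(p, 1) - \ell(p, 0)| \le 1$) completes the upper bound.

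For the lower bound $\ce_{\threshold'(\K)}(J)^2 \lesssim \cdl_\K(J)$, write $\eta = \ce_{\threshold'(\K)}(J)$ and let $w \in \threshold'(\K)$ attain the supremum. A preliminary observation is that validity of $\K$ places $\pm\Ind[p \ge v] \in \threshold'(\K)$ for every $v$ (the $+$ form from shifted identities $p \mapsto p + c$, the $-$ form by construction), so the function $\psi(v) := \E[(y - p)\Ind[p \ge v]]$ is uniformly bounded in $[-\eta, \eta]$. I split into two cases. In Case B, with $w = -\sign_+(p - v_0)$ so that $\psi(v_0) = -\eta$, I would pair $\ell_{v_0}$ with the shift $\kappa(p) = p + c \in \K$; the improvement equals $2(\psi(v_0 - c) + \eta) - 2\int_0^c u\, g(v_0 - u)\, du$ (for $c > 0$, where $g$ is the density of $p$), and tuning $c$ on the order of $\eta$ via AM-GM on this linear-gain vs.\ quadratic-cost expression produces $\Omega(\eta^2)$. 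In Case A, with $w = \sign_+(\kappa_0 - 1/2)$ for some $\kappa_0 \in \K$, I would pair $\ell_v$ with the translate $\kappa_v := \kappa_0 + (v - 1/2) \in \K$ so that $\sign_+(\kappa_v - v) = w$; the improvement then factors as $2(\eta - \psi(v)) - 2\,\E[|p - v|\Ind_E]$ with $E := \{w \ne \Ind[p \ge v]\}$, which I would optimize over $v$.

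The main obstacle is Case A when $w$ has a non-interval level-set structure, since the disagreement cost $\E[|p - v|\Ind_E]$ can potentially cancel the gain $\eta - \psi(v)$. The resolution should exploit the Lipschitz-type inequality $|\psi(v) - \psi(v')| \le \pr[\min(v,v') \le p < \max(v,v')]$, which forces a quantitative tradeoff: a large disagreement mass forces $\psi$ to vary substantially across the disagreement region, yielding a $v$ where the gain dominates the cost. A pigeonhole over a discretization of $[0, 1]$ at resolution $\Theta(\eta)$ should then produce a witness; alternatively, one could rewrite Case A using the squared loss (a uniform Schervish mixture of V-shaped losses) and invoke Cauchy--Schwarz directly on $\kappa_0 - p$ viewed as an element of $L^2(J)$.
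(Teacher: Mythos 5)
The upper bound is correct and is essentially the paper's argument: you reduce to V-shaped losses (\cref{corollary:uv-cal}), split the factor $y-v$ into $(y-p) + (p-v)$ (your sign analysis on the $(p-v)$ part is exactly the content of the decision OI lemma, \cref{thm:loss-oi}, specialized to V-shaped losses), and bound the two remaining terms by $\ce_{\threshold'(\K)}$ by checking that both $\sign_+(\kappa - v)$ and $-\sign_+(p-v)$ land in $\threshold'(\K)$. There is a spurious factor of $2$ in your displayed identity, and the phrase ``each expectation has magnitude at most $\ce_{\threshold'(\K)}$'' should just be a one-sided bound (since $\ce_\W$ here is defined without absolute values and $\threshold'(\K)$ is not assumed closed under negation), but both are cosmetic since only the one-sided bound is used.

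The lower bound, however, has a genuine gap, which you yourself flag. Your Case A (where the extremal weight $w=\sign_+(\kappa_0-1/2)$ has a non-interval superlevel set) is the crux of the theorem, and the plan you sketch---pair $\ell_v$ with the single global translate $\kappa_v = \kappa_0 + (v-1/2)$ and hope a Lipschitz/pigeonhole argument on $\psi$ controls the disagreement cost $\E[|p-v|\Ind_E]$---does not obviously close. If the superlevel set $\{\kappa_0 \ge 1/2\}$ is spread across $[0,1]$ (e.g.\ a union of many short intervals alternating with gaps), then for every $v$ the event $E = \{w \neq \Ind[p\ge v]\}$ can have mass $\Theta(1)$ and the cost $\E[|p-v|\Ind_E]$ can be $\Theta(1)$, potentially swamping an $O(\eta)$ gain; the Lipschitz property of $\psi$ constrains how fast the gain varies but does not by itself prevent this. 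The paper's proof handles exactly this case with a structurally different argument (\cref{thm:upper-ce-bound}): it partitions $[0,1]$ into $m=\Theta(1/\sqrt{\eps})$ pieces of mass $\lesssim 1/m$, bounds the contribution of each piece by a ``small interval'' lemma (\cref{thm:small-intervals}) plus a per-piece construction of a \emph{locally} translated and truncated post-processing $\kappa'$ that isolates that piece (their Case 2), and then telescopes; the quantitative tradeoff you are hoping for emerges from the bucketing, not from any single global translate. Your fallback of using the squared loss and Cauchy--Schwarz on $\kappa_0 - p$ also does not work here, because $\K$ is only assumed translation-invariant, so $p \mapsto [p + \lambda w(p)]_0^1$ (or any scaled additive perturbation) need not lie in $\K$. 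Finally, your Case B computation assumes $p$ has a density $g$, which the general statement does not; the paper's small interval lemma is stated and used so as to handle atoms.
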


While the class $\threshold(\K)$ contains only the \emph{upper} thresholds of functions in $\K$, the class $\threshold'(\K)$ also includes all \emph{lower} thresholds of the identity function. It also contains upper thresholds of the identity function by the inclusion of $\threshold(\K)$, since $\K$ is translation invariant and contains the identity. Since our definition of weight-restricted calibration error did not involve absolute values, the decision for $\threshold'(\K)$ to not include lower thresholds of all functions in $\K$ is deliberate and consequential.

This change in the definition of the appropriate concept class for characterizing testability and for the characterization in terms of weighted calibration error may at first seem counterintuitive. This arises because $\cdl$ is fundamentally an asymmetric notion (meaning we do not assume $\kappa \in \K$ implies $1 - \kappa \in \K$); e.g. monotone increasing post-processings are significantly more natural than monotone decreasing ones. Our definition of $\threshold'(\K)$  precisely captures how much we need to enhance $\threshold(\K)$ to be able to relate it to weighted calibration. Our results on testing, auditing and omniprediction are characterized by naturally symmetric notions such as VC Dimension, and agnostic learning, and would hold equally well if we used $\threshold'(\K)$ instead of $\threshold(\K)$, or even if we added all lower thresholds of functions in $\K$; this would not change the VC dimension  or agnostic learnability. But \Cref{thm:cdl-vs-ce-intro} would no longer hold with $\threshold(\K)$ in place of $\threshold'(\K)$, as explained in the proof sketch below.

This result generalizes and significantly extends results that were previously known in the literature. For instance, when $\K = \Kall$, then $\threshold(\K)$ consists of all functions $\W^* = \{[0,1] \mapsto [-1,1]\}$ and the corresponding weight-restricted calibration notion is just $\ece$. Thus in this case, we recover the result of \cite{HuWu24, kleinberg2023u} which shows that
 \[ \ece(J)^2 \lesssim \cdl_{\Kall} (J) \lesssim \ece(J).\]
 When $\K = \M_+$ is all monotone increasing functions, then $\threshold(\M_+)$ contains all indicators of intervals $[a,1]$ for $a \in [0, 1]$. Let $\intr$ denote the collection of these indicators, along with their complements. In this case, our result shows that
  \[\ce_{\intr}(J)^2 \lesssim \cdl_{\M_+}(J) \lesssim \ce_{\intr}(J).\]
The upper bound was shown in the work of \cite{RosselliniSBRW25}, while the lower bound is new. 
For the class of generalized monotone functions $\M_r$, our theorem shows that $\cdl_{\M_r}$ is quadratically related to a generalized version of $\cdl_{\intr}$, in which single intervals are replaced by unions of at most $r$ intervals.

\paragraph{Proof Sketch.} \cref{thm:cdl-vs-ce-intro} is a highly general result, which transforms a bound on the $\threshold'(\K)$-restricted calibration error into a bound on $\cdl_\K$, and vice versa. In the language of outcome indistinguishability, the key insight underlying the proof is that the two types of functions in the weight class $\threshold'(\K)$, namely upper thresholds of $\K$ and lower thresholds of the identity, are precisely what we need to move, respectively, \emph{to} and \emph{from} the world of simulated outcomes. This perspective builds on the \emph{loss OI} framework of \cite{gopalan2023loss}, particularly their study of \emph{decision OI}.

In slightly more detail, we first consider a weight function obtained by composing a post-processing function $\kappa(p)$ with a proper loss function's \emph{negated discrete derivative} $-\partial \ell(p) = \ell(p, 0) - \ell(p, 1)$. In the worst case---that is, the case of a V-shaped loss function---this composite function corresponds exactly to some \emph{upper threshold} of $\kappa$. Calibration with respect to this composite weight function ensures that moving from real outcomes $y$ to simulated outcomes $\tilde{y}$ can only \emph{improve} the loss attained by $\kappa(p)$, up to an $\eps$ slack factor:
\[
    \E\Bigl[\ell\bigl(\kappa(p), y\bigr)\Bigr] \ge \E\Bigl[\ell\bigl(\kappa(p), \tilde{y}\bigr)\Bigr] - \eps.
\]
Next, we observe that in the simulated world, where $\tilde{y} \sim \Ber(p)$, the predictor $p$ is Bayes optimal by definition. In particular, it outperforms $\kappa(p)$:
\[
    \E\Bigl[\ell\bigl(\kappa(p), \tilde{y}\bigr)\Bigr] \ge \E\bigl[\ell(p, \tilde{y})\bigr].
\]
Finally, we will show that weight functions corresponding to \emph{lower thresholds} of the identity allow us to move back to the world of real outcomes, again up to a small slack factor:
\[
    \E\bigl[\ell(p, \tilde{y})\bigr] \ge \E\bigl[\ell(p, y)\bigr] - \eps.
\]
Combining this chain of inequalities, we deduce that $p$ outperforms $\kappa(p)$ \emph{in the real world, as well}. Phrased differently, $\threshold'(\K)$-restricted calibration ensures that $\cdl_\K$ is small.

The proof of the converse implication---that small $\cdl_\K$ implies small $\threshold'(\K)$-restricted calibration error (up to a quadratic gap)---relies on similar ideas, but is much subtler. For simplicity, consider the most fundamental post-processing class for which the result was not previously known: monotonically increasing functions $\K = \M_+$. In this case, we are given that $\cdl_{\M_+} \le \eps$ and are tasked with proving that $\ce_{\intr} \lesssim \sqrt{\eps}$. To do so, we consider an arbitrary interval $I \subseteq [0, 1]$ and break it into $m = O(1/\sqrt{\eps})$ subintervals $I_1, \ldots, I_m$ with roughly equal probability mass under the distribution of predictions. We then show that the calibration error restricted to a particular subinterval $I_j = [a, b]$ can be bounded by the product of its length and mass, plus an $\eps$ slack term. Since each subinterval has mass $\approx 1/m$ and their lengths sum to at most $1$, our total bound becomes roughly $1/m + m\eps = O(\sqrt{\eps})$. We give our full proof of \cref{thm:cdl-vs-ce-intro} in \cref{sec:cdl-vs-ce}.

\medskip

We conclude by observing that some assumption on $\K$ like validity is necessary for the characterization to hold: the characterization does not hold for the class $\M_-$ of non-increasing post-processings. This class  is translation invariant, but does not contain the identity, so it is not valid by our definition.

\subsection{Post-Processing and Omniprediction}

Here we ask the question: if the predictor $P$ suffers large calibration decision loss relative to $\K$, how should we remedy it? We would ideally like to post-process it in an efficient manner that gives guarantees for every loss in $\L^*$, competitive to baselines from the set $\K$. But the issue is that there might be several losses in $\ell \in \L^*$ that witness large calibration decision loss, each with its own post-processing $\kappa = \kappa^\ell$, which might not be good for a different loss.

To circumvent this, we could allow post-processings that need not themselves lie in $\K$. For instance, if we could take $\kappa(p) = \E[y|p]$ to be the perfect recalibration, then we would have a guarantee for all losses. However, this function will be inefficient to compute (it is as hard as estimating $\ece$). Under what conditions can we efficiently find a post-processing that gives guarantees for all losses?

We formulate this as a problem of efficient \emph{omniprediction} \cite{omni, gopalan2023loss}. The notion of omniprediction originating in supervised learning \cite{omni} asks for a predictor that is simultaneously competitive with the best hypothesis from a class $\C$ of hypotheses, for any loss from a family $\L$. The power of this notion comes from the fact that the best hypothesis in $c$ can depend on the loss $\ell \in \L$, whereas our predictor is oblivious to $\ell$. 
In our context, the goal is to learn a post-processing function $\widehat{\kappa}$ that outperforms any other in $\K$ with respect to \emph{all} possible decision tasks. This is a departure from its standard definition, where the baseline is a hypothesis class $\C$ of functions on the feature space $\X$, and is similar to its formulation in the recent work of \cite{HartlineWuYangFORC2025}, that shows omniprediction guarantees for smooth calibration. 

\begin{definition}[Omniprediction]
\label[definition]{definition:omnipredictors}
    We say that a function $\hat \kappa:[0,1]\to [0,1]$ is an \emph{$(\eps,\K)$-omnipredictor} for some distribution $J$ over pairs $(p, y)$ if for all $\ell \in \L^*$ and $\kappa \in \K$,
    \[
        \E\bigl[\ell(\hat\kappa(p), y)\bigr] \le \E\bigl[\ell(\kappa(p), y)\bigr] + \eps.
    \]
    We say that  $\A$ learns an $(\eps,\K)$-omnipredictor with probability $1 -\delta$ if, upon receiving a large enough set of i.i.d. samples from some unknown distribution $J$, the algorithm $\A$ outputs an $(\eps,\K)$-omnipredictor for $J$ with probability at least $1-\delta$.
\end{definition}

In this section, we will suppress the dependence on $\eps, \delta$ and say that an algorithm learns a $\K$-omnipredictor if it returns an $(\eps, \K)$-omnipredictor with high probability.\footnote{The results will typically involve a reduction to some other algorithm, whose parameters will be suitably chosen functions of $\eps, \delta$. The formal theorem statements make this dependence explicit.}  We show a range of omniprediction guarantees for various classes $\K$, either by using techniques from the omniprediction literature \cite{gopalan2023loss}, or by a new analysis of well-known recalibration procedures that have been proposed in the machine learning literature \cite{ayer1955empirical, zadrozny2001obtaining,gupta2021distribution,sun2023minimum}. We start with the most general result. 

\paragraph{Omniprediction from agnostic learning.}

For all valid post-processing classes $\K$, we prove that an omnipredictor can be efficiently learnt, under the assumption that $\threshold(\K)$ is agnostically learnable. Thus the same assumption we require for efficient auditing of $\cdl_\K$ is in fact sufficient to ensure omniprediction. 

\begin{theorem}[Omniprediction from Agnostic Learning, Informal Version of \Cref{theorem:omniprediction-from-wAL}]\label[theorem]{theorem:omniprediction-from-wAL-intro}
    For every valid post-processing class $\K$, there is an efficient reduction from learning a  $\K$-omnipredictor to  agnostic learning for $\threshold(\K)$. 
\end{theorem}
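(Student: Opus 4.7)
The plan is to convert the agnostic learner for $\threshold(\K)$ into a multicalibration-style boosting procedure (in the spirit of \cite{hkrr2018}) that produces a post-processing $\hat\kappa : [0,1] \to [0,1]$ which is simultaneously calibrated on its own level sets and multi-accurate with respect to $\threshold(\K)$. I would then show that any such $\hat\kappa$ is automatically an $(\eps,\K)$-omnipredictor, adapting the V-shape argument used in the proof of \Cref{thm:cdl-vs-ce-intro}.

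For the boosting step I would maintain $\hat\kappa_t$ taking values on an $O(\eps)$-grid, initialized to a constant. At each round, for every value $q$ in the range of $\hat\kappa_t$ with nontrivial mass, I would invoke the agnostic learner on the conditional distribution $(p, y) \mid \hat\kappa_t(p) = q$: by the standard randomized-label reduction, an agnostic-learning advantage for $\threshold(\K)$ yields some $f \in \threshold(\K)$ witnessing $\lvert\E[f(p)(y - q) \mid \hat\kappa_t(p) = q]\rvert \ge \eps$. If such a witness is found, I would update $\hat\kappa_t$ on the subset $\{p : \hat\kappa_t(p) = q,\, f(p) = 1\}$ by an additive $\pm\Theta(\eps)$ step rounded back to the grid; otherwise the process halts. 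The potential $\Phi_t = \E[(y - \hat\kappa_t(p))^2]$ drops by $\Omega(\eps^2)$ per successful update, so termination occurs in $O(1/\eps^2)$ rounds, and the output $\hat\kappa$ is $O(\eps)$-multicalibrated with respect to $\threshold(\K)$; in particular it is $O(\eps)$-calibrated on its own level sets (take $f \equiv 1$) and $O(\eps)$-multi-accurate (marginalize over levels).

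To see that multicalibration implies omniprediction, I would use the V-shape decomposition of $\L^*$ to reduce to proving, for every $v \in [0,1]$ and $\kappa \in \K$, that $\E[\ell_v(\hat\kappa(p), y) - \ell_v(\kappa(p), y)] \le O(\eps)$. Since $\ell_v(p, y) = -\sign(p - v)(y - v)$, this is equivalent to
\[
\E[\sign_+(\kappa(p) - v)(y - v)] \le \E[\sign_+(\hat\kappa(p) - v)(y - v)] + O(\eps).
\]
Let $f(p) = \sign_+(\kappa(p) - v)$, which lies in $\threshold(\K)$ by translation invariance of $\K$. Decomposing
\[
\E[f(p)(y - v)] = \E[f(p)(y - \hat\kappa(p))] + \E[f(p)(\hat\kappa(p) - v)],
\]
the first term is $O(\eps)$ by multi-accuracy. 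For the second, since $f(p) \in \{0,1\}$ we have $f(p)(\hat\kappa(p) - v) \le \sign_+(\hat\kappa(p) - v)(\hat\kappa(p) - v)$; and because $\sign_+(\hat\kappa(p) - v)$ depends only on $\hat\kappa(p)$, calibration of $\hat\kappa$ lets me replace $\hat\kappa(p) - v$ by $y - v$ inside this expectation up to $O(\eps)$ slack, yielding the desired bound. The extension to arbitrary $\ell \in \L^*$ follows by integrating against the positive measure on $v$ coming from the V-shape representation, whose total mass is bounded for losses in $\L^*$.

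The main obstacle is the bookkeeping needed so that the grid rounding, the per-level agnostic-learning calls, and the removal of low-mass level sets all compose without degrading either the $\Omega(\eps^2)$ potential drop or the uniform per-level multicalibration guarantee that the concluding argument relies on. These steps are routine individually in the multicalibration literature, but they must be adapted carefully to our setting, where the ``features'' are the predictions themselves and the baseline class $\K$ enters the algorithm only through oracle access to an agnostic learner for $\threshold(\K)$.
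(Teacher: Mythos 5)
Your proposal follows the same high-level strategy as the paper: use the agnostic learner for $\threshold(\K)$ to produce a post-processing $\hat\kappa$ that is calibrated on its own level sets and multiaccurate with respect to $\threshold(\K)$, then argue via the V-shape decomposition of $\L^*$ that these two properties together imply omniprediction. Your concluding argument --- decomposing $\E[f(p)(y-v)]$ into a multiaccuracy term $\E[f(p)(y-\hat\kappa(p))]$ and a term where calibration lets you swap $\hat\kappa(p)$ for $y$ --- is a compressed, V-shape-specific version of the paper's proof of \cref{lemma:calMA-implies-omniprediction}, which routes explicitly through the simulated distribution $(p,\tilde{y})$ with $\tilde{y}\sim\Ber(\hat\kappa(p))$ and then appeals to propriety of $\ell$; the two are mathematically equivalent.

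Where you diverge is in constructing $\hat\kappa$: you rebuild a full HKRR-style multicalibration loop, invoking the agnostic learner on each conditional distribution $(p,y)\mid\hat\kappa_t(p)=q$. The paper instead invokes the \emph{calibrated multiaccuracy} procedure of \cite{gopalan2023loss} (\cref{theorem:calMA}), which alternates cheap global recalibration steps with global multiaccuracy boosting and so needs only $O(\log(1/\delta)/\eps^2)$ learner calls on the unconditional distribution. Your per-level variant certainly suffices (multicalibration is strictly stronger than calibrated multiaccuracy, and the latter is all the final argument uses), but it is overkill and the stated $\Omega(\eps^2)$ potential drop per round is too optimistic: a successful update on a level set of mass $\Theta(\eps)$ with conditional advantage $\Theta(\eps)$ reduces $\Phi_t=\E[(y-\hat\kappa_t(p))^2]$ by only $\Omega(\eps^3)$, so termination takes $O(1/\eps^3)$ rounds, each making $O(1/\eps)$ conditional calls with an additional $O(1/\eps)$ sample-overhead from rejection sampling. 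This is still $\poly(1/\eps)$ and hence consistent with the informal theorem statement, just weaker than what the paper proves. Two small nits: $\sign_+$ takes values in $\{-1,+1\}$, not $\{0,1\}$ (the inequality $f(p)(\hat\kappa(p)-v)\le \sign_+(\hat\kappa(p)-v)(\hat\kappa(p)-v)$ holds either way since its right-hand side is $\lvert\hat\kappa(p)-v\rvert$); and your integration step needs to also cover $\ell_v^-$, which requires a limiting argument over $v$ since $\sign_-(\kappa(p)-v)$ is a pointwise limit of functions in $\threshold(\K)$ rather than a member of it.
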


We follow the loss OI framework of \cite{gopalan2023loss} which is an indistinguishability-based approach to omniprediction. The key difference between their setting and ours is that they compete against a baseline of hypotheses $\C = \{c:\X \to \pmo\}$ where $\X$ denotes the feature space. Whereas in our calibration setting, we do not have a feature space $\X$, we compete against a baseline of post-processing functions $\kappa(p)$.  Nevertheless, we show how one can adapt their techniques to the setting of calibration to learn omnipredictors efficiently.

\paragraph{Pool Adjacent Violators is an omnipredictor.} 

The Pool Adjacent Violators algorithm \cite{ayer1955empirical} solves the problem of isotonic regression: given samples from $J = (p,y)$, it finds a monotone post-processing of a predictor $p$ that minimizes the square loss among all monotone post-processings. Given a sample of \{$(y_i, p_i)\}$ pairs, it starts from the Bayes optimal predictor on the sample $\kappa(p) = \E[y|p]$, and pools/merges any adjacent pair that violates monotonicity, till it reaches a monotone predictor. We present the algorithm formally in Algorithm \ref{algorithm:pav}. 

Various works have observed that it actually gives guarantees for broader classes of loss functions (including convex proper losses); see \cite{zadrozny2002transforming, Brmmer2013ThePA} and references therein. We will show that it is an omnipredictor for all of $\L^*$ relative to monotone post-processings. This general statement is new to our knowledge. For instance  \cite{Brmmer2013ThePA} shows the result for a subset of scoring rules that they call regular proper scoring rules, defined in terms of certain integrals. This does not capture all proper losses (indeed it is unclear to us what subset they capture), and they use considerably more complex arguments. We present a simple argument that relies on the {\em closer is better} property of proper losses (see \cref{thm:proper-losses-partial-improvement}) which states for a proper loss, moving the prediction $p$ closer to the Bayes optimal $\E[y|p]$ can only help. Formally, we establish the following:

\begin{theorem}[Omniprediction through PAV, Informal Version of \Cref{theorem:omniprediction-through-pav}]\label[theorem]{theorem:omniprediction-through-pav-intro}
    Pool Adjacent Violators (PAV) with sufficiently many samples learns a $\M_+$ omnipredictor.
\end{theorem}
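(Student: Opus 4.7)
The plan is to prove the theorem in two stages: (a) an empirical optimality result showing that PAV's output $\hat\kappa$ minimizes the empirical proper loss over all monotone predictors on any sample, and (b) a uniform convergence argument lifting this to the population. The key empirical claim to establish is
\[
    \frac{1}{n}\sum_{i=1}^n \ell(\hat\kappa(p_i), y_i) \;\le\; \frac{1}{n}\sum_{i=1}^n \ell(\kappa(p_i), y_i) \qquad \text{for all } \ell \in \L^*,\ \kappa \in \M_+.
\]

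For the empirical step I would exploit two structural facts about PAV. First, $\hat\kappa$ partitions the indices (sorted by $p_i$) into consecutive blocks $B_1, \ldots, B_m$, on each of which it takes the bin-mean $\mu_j = \bar y(B_j)$, with $\mu_1 < \cdots < \mu_m$. Second, within each block every prefix $P \subseteq B_j$ satisfies $\bar y(P) \ge \mu_j$ and every suffix $S \subseteq B_j$ satisfies $\bar y(S) \le \mu_j$ (otherwise PAV would not have been forced to merge the two pieces of $B_j$). Since $\kappa$ is non-decreasing, it splits each $B_j$ into a prefix $B_j^L = \{i \in B_j : \kappa(p_i) \le \mu_j\}$ and a suffix $B_j^R = \{i \in B_j : \kappa(p_i) > \mu_j\}$. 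On $B_j^L$ one has $\kappa(p_i) \le \mu_j \le \bar y(B_j^L)$, so moving $\kappa(p_i)$ up to $\mu_j$ is a step toward the bin's Bayes-optimal target; by \cref{thm:proper-losses-partial-improvement} (``closer is better''), this can only decrease the aggregate proper loss whenever $\kappa$ is constant on $B_j^L$. To remove the constancy assumption I would invoke the integral representation of any proper loss as a non-negative mixture of V-shaped losses $\ell_v(p,y) = -\sign(p-v)(y-v)$ (\cref{theorem:uv-cal}) and verify the per-block inequality for each $\ell_v$ separately. For V-shaped losses the restricted sums admit a closed form, and a short case analysis depending on whether $v$ lies below, above, or inside the range of $\kappa|_{B_j^L}$ reduces the desired inequality to the prefix property $\bar y(P) \ge \mu_j$ applied to the prefix $P \subseteq B_j^L$ on which $\kappa < v$. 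A symmetric argument on $B_j^R$ uses the suffix property, and summing over both pieces and all blocks yields the empirical claim.

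The lifting step should then be routine: upper thresholds of any $\kappa \in \M_+$ are indicators of intervals $[a,1]$, a class of VC dimension $O(1)$, so the composite class $\{(p,y) \mapsto \ell_v(\kappa(p), y) : v \in [0,1],\, \kappa \in \M_+\}$ has controlled complexity. Combined with the V-shaped integral representation and uniform boundedness of $\L^*$, standard Rademacher/VC bounds yield two-sided empirical-to-population gaps of at most $\eps/2$ uniformly over $(\ell, \kappa) \in \L^* \times \M_+$ using $n = \poly(1/\eps, 1/\delta)$ samples, which combined with the empirical optimality gives
\[
    \E_J[\ell(\hat\kappa(p), y)] \;\le\; \E_J[\ell(\kappa(p), y)] + \eps.
\]
The main obstacle is the per-block comparison when $\kappa$ is not constant inside a block: the clean statement of ``closer is better'' only handles constant predictors, and I expect the V-shaped reduction plus PAV's prefix/suffix property to be essential to covering all of $\L^*$ simultaneously with a single block-by-block argument.
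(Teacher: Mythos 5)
Your proposal is correct and its overall architecture (empirical optimality over $\M_+$ for every $\ell \in \L^*$, then uniform convergence via the VC dimension of $\threshold(\M_+)$) matches the paper's. The interesting divergence is in how you establish the empirical optimality of PAV's output. The paper (\cref{theorem:pav-guarantees}) argues \emph{inductively along the PAV merge sequence}: an exchange argument shows that whenever two adjacent blocks with $\bar y_j \ge \bar y_{j+1}$ are merged, some loss-optimal monotone post-processing can be assumed constant across the union, and a short case analysis on where $\kappa_j < \kappa_{j+1}$ sit relative to $\bar y_{j+1} \le \bar y_j$ reduces everything to \cref{thm:proper-losses-partial-improvement}. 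You instead argue \emph{directly from the final block structure}, invoking the classical prefix/suffix averaging property of isotonic regression (every prefix of a PAV block has average $\ge \mu_j$, every suffix $\le \mu_j$) together with the V-shaped basis (\cref{theorem:uv-cal}). Worked out cleanly, your per-block comparison for $\ell_v^+$ is: letting $P_v = \{i \in B_j : \kappa(p_i) < v\}$ (a prefix by monotonicity of $\kappa$) and $S_v = B_j \setminus P_v$, the loss gap equals $-2\sum_{P_v}(y_i-v)$ when $\mu_j \ge v$ and $2\sum_{S_v}(y_i-v)$ when $\mu_j < v$, and both are $\le 0$ by the prefix/suffix property combined with the case hypothesis (so you need $\bar y(P_v) \ge v$, which follows from $\bar y(P_v) \ge \mu_j \ge v$, not merely $\bar y(P_v) \ge \mu_j$ as you wrote---a minor imprecision). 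The paper's exchange argument is more self-contained (it does not presuppose any structural characterization of the isotonic solution, only the pooling rule itself), while yours is more modular (it isolates a single clean structural fact and composes it with the V-shaped basis). One simplification to your write-up: the intermediate split into $B_j^L / B_j^R$ at level $\mu_j$ is unnecessary once you commit to the V-shaped reduction; decomposing each block directly at the loss threshold $v$ is tighter and avoids the ``constancy'' caveat entirely.
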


This shows that the class of monotone post-processings admits a {\em proper omnipredictor}: for every distribution $J = (p, y)$, there is a single post-processing $\kappa^* \in \M_+$ with the guarantee that for every proper loss $\ell \in \L^*$, and $\kappa \in \M_+$, 
\[ \E[\ell(\kappa^*(p), y)] \leq \E[\ell(\kappa(p), y)].\]
Other than $\Kall$, this is the only natural class of post-processings we know that has this property.

\paragraph{Omniprediction through Bucketing and Recalibration.}

We next analyze bucketed recalibration through uniform-mass binning. Binning is a long-established technique for measuring calibration \cite{miller1962statistical,sanders1963subjective}. The method of uniform-mass binning was introduced by \cite{zadrozny2001obtaining} as the first binning-based approach not only for measuring calibration, but also for obtaining a calibrated predictor. Rather than choosing equal-width bins, we choose bin boundaries as quantiles, so that every bin has roughly the same mass. It remains empirically competitive to this day \cite{naeini2015obtaining,guo2017calibration,roelofs2022mitigating}, and its calibration properties have been theoretically studied as well \cite{gupta2021distribution,sun2023minimum}.

Informally, uniform-mass binning tries to divide $[0,1]$ into bins that each have probability $\eps$ of containing the prediction $p$. If $p$  has a continuous distribution, we can do this by taking the $\eps$-quantiles as bin  boundaries. For general distributions, we might need to allow {\em singleton} buckets consisting of a single point to account for point-masses that might be larger than $\eps$. This gives a partition of $[0,1]$ into $O(1/\eps)$ buckets that are either singletons, or have probability bounded by $\eps$. 

We show that uniform-mass binning followed by recalibration yields omniprediction with respect to the class of generalized monotone post-processings.

\begin{theorem}[Omniprediction from Uniform-mass binning, Informal Version of \Cref{theorem:omniprediction-through-recal}]\label[theorem]{theorem:omniprediction-through-recal-intro}
    For all $r\ge 1$, Uniform-mass binning (with $O(r^2)$ bins) and recalibration (\Cref{algorithm:recal}) learns an $\M_r$-omnipredictor.
\end{theorem}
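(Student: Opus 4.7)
My plan is to establish the omniprediction inequality $\E[\ell(\hat\kappa(p), y) - \ell(\kappa(p), y)] \le \eps$ for every $\ell \in \L^*$ and every $\kappa \in \M_r$, where $\hat\kappa$ denotes the bucket-recalibrated predictor. The first move is to reduce to V-shaped losses via \cref{theorem:uv-cal}: each $\ell \in \L^*$ can be written as a mixture of $\ell_v$'s with bounded total-variation norm, so it suffices to prove a uniform bound on $\E[\ell_v(\hat\kappa(p), y) - \ell_v(\kappa(p), y)]$ for every threshold $v \in [0,1]$ and then integrate.

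\textbf{Per-bucket case split.} Let $B_1, \ldots, B_B$ denote the uniform-mass buckets and let $p^*_i := \E[y \mid B_i]$ and $\hat p_i \approx p^*_i$ be the bucket mean and its empirical estimate that $\hat\kappa$ outputs on $B_i$. Using the identity $\E[\ell_v(q, y) \mid B_i] = -\sign(q - v)(p^*_i - v)$ for constants $q$, together with its analogue $\E[\ell_v(\kappa(p), y) \mid B_i] = -\E_p[\sign(\kappa(p) - v)(\mu(p) - v) \mid B_i]$ for $\mu(p) := \E[y \mid p]$, the per-bucket gap $D_i(v) := \E[\ell_v(\hat\kappa, y) \mid B_i] - \E[\ell_v(\kappa, y) \mid B_i]$ splits into two cases based on whether $v$ lies in the range $[\min_{p \in B_i}\kappa(p), \max_{p \in B_i}\kappa(p)]$. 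When $v$ is outside this range, $\sign(\kappa(p) - v)$ is a constant $s$ on $B_i$, so $D_i(v) = -|p^*_i - v| + s(p^*_i - v) \le 0$ and $\hat\kappa$ weakly beats $\kappa$ on this bucket; when $v$ is inside this range, the trivial bound $|D_i(v)| \le 2$ is all I need.

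\textbf{Aggregation via $\M_r$ structure, and main obstacle.} The crucial step is to count ``bad'' buckets where $v$ lies inside $\kappa$'s range. Each bad bucket must contain a boundary point of the superlevel set $\kappa^{-1}([v, 1])$, and since $\kappa \in \M_r$ forces this set to be a union of at most $r$ intervals, there are at most $2r$ such boundary points, hence at most $2r$ bad buckets per $v$. Combined with the near-uniform mass bound $\Pr[B_i] = O(1/B)$, this yields $\E[\ell_v(\hat\kappa(p), y) - \ell_v(\kappa(p), y)] \le O(r/B)$, and integrating against the V-shaped decomposition of $\ell$ preserves this bound, giving the $\eps$ omniprediction guarantee once $B$ is a suitable polynomial in $r$ and $1/\eps$. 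The main obstacles are (i) the event that $\hat p_i$ and $p^*_i$ land on opposite sides of $v$ due to sampling noise, which breaks the ``outside-range'' step and must be controlled via concentration while absorbing a small slack into $\eps$; and (ii) singleton buckets whose mass exceeds $1/B$, which arise from point masses in the distribution of $p$ and must be handled separately -- though on such singletons $\hat\kappa$ is automatically perfectly calibrated, making the analysis there easier.
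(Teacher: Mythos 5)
Your plan follows the same route as the paper: reduce to V-shaped losses, split buckets into ``disagreement'' (where $\sign(\kappa(p)-v)$ changes inside the bucket) and ``agreement'' buckets, count disagreement buckets by $2r$ via the $\M_r$ structure, use the uniform-mass property to bound their total weight, and argue that on agreement buckets the recalibrated value is near-optimal for $\ell_v$. Two points need repair before this becomes a proof.

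First, the agreement-bucket inequality $D_i(v) = -|p^*_i - v| + s(p^*_i - v) \le 0$ uses $\sign(\hat p_i - v)(p^*_i - v) = |p^*_i - v|$, i.e.\ $\sign(\hat p_i - v) = \sign(p^*_i - v)$. You flag this as obstacle (i), but ``control via concentration'' cannot make this event hold for all $v$: whenever $v$ lies strictly between $\hat p_i$ and $p^*_i$, the signs disagree with probability $1$, and there is a positive-length window of such $v$'s. The actual resolution---which is what the paper's proof implements---is not to show the signs agree, but to observe that in the disagreement case $|p^*_i - v| \le |\hat p_i - p^*_i|$, so the per-bucket excess is at most $2|\hat p_i - p^*_i|$. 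One then bounds $|\hat p_i - p^*_i|\cdot\Pr[p \in B_i]$ uniformly over buckets (this is the content of \cref{lemma:convergence-bucketed-recalibration}), and sums the resulting $O(\gamma)$ contribution over all $O(1/\eps')$ buckets. This accumulation is what forces $\gamma = O(\eps\eps')$ and drives the $O(r^2\log(1/\delta)/\eps^4)$ sample complexity; it is not a ``small slack'' that can be absorbed without being tracked explicitly. (Equivalently, as in the paper, one can decompose using $\hat\kappa_j$ rather than $p^*_i$ as the pivot: $-(\hat\kappa_j - v)\sign(\hat\kappa_j - v) = -|\hat\kappa_j - v|$ holds tautologically, and the residual $p^*_i - \hat\kappa_j$ is the concentration term.)

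Second, the reason singletons pose no problem is not that $\hat\kappa$ is ``automatically perfectly calibrated'' on them---it is still an empirical average and carries sampling noise. The correct observation, and the one the argument actually needs, is that a singleton $I_j = \{a\}$ can never be a disagreement bucket, since $\sign(\kappa(p) - v)$ is trivially constant on a one-point set. This is essential: your bound on bad-bucket mass (that it is $O(1/B)$) would be false for singletons, which may carry large mass, so if a singleton could be a disagreement bucket the whole bookkeeping would fail. Once you note that $|H| \le 2r$ \emph{and} every $j \in H$ is a non-singleton (hence small-mass) bucket, the aggregation goes through exactly as you sketch.
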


The proof proceeds by comparing the bucket-wise recalibration $\hat\kappa$ to any hypothesis in $\kappa \in \M_r$, for a specific V-shaped loss $\ell_v$. We rely on two observations:
\begin{enumerate}
    \item If $\sign(\kappa(p) -v)$ is constant for a bucket $I_j$, then $\hat\kappa$ does at least as well as $\kappa$ for the loss $\ell_v$, up to sampling error (which is small by uniform convergence for intervals). 
    \item If $\sign(\kappa(p) -v)$ is not constant for bucket $I_j$, this means that the graph of $\kappa(p)$ crosses the value $v$ in this bucket, so the bucket is not a singleton.
    \begin{itemize}
        \item By the definition of $\M_r$, this can happen for only $2r$ buckets, which contain the endpoints of the $r$ intervals that constitute the set $\kappa(p) \geq v$. 
        \item Since we use Uniform-mass bucketing, the total probability assigned to these buckets is small, which bounds how much worse $\hat\kappa$ is than $\kappa$.
    \end{itemize}
\end{enumerate}

\subsection*{Summary}

Our work presents a comprehensive theory of the complexity of $\cdl_\K$ for binary classification. It delineates classes of post-processings (e.g. Lipschitz) that are intractable and classes that are efficient (e.g. generalized monotone functions). It proves a tight relationship to weight-restricted calibration. It introduces techniques from the literature on omniprediction for post-processing predictors, and proves rigorous new guarantees for some well-known algorithms used in practice. 

We leave open the question of understanding efficient $\cdl$ for the multiclass setting where the number of labels $k$ grows, as is the case in image classification. It is known that the problem in the weight-restricted setting becomes much harder for large $k$, indeed notions like smooth calibration and distance to calibration require sample complexity $\exp(k)$ \cite{GopalanHR24}. It is a challenging question to formulate efficient and meaningful notions of $\cdl$ for this setting.

\section{Preliminaries}
\label[section]{sec:preliminaries}

In this section, we briefly review relevant concepts related to proper losses, calibration error, post-processing classes, and relevant fundamental concepts from learning theory. Proofs for results in this section appear in \cref{sec:prelim-proofs}.

\paragraph{Mathematical Miscellany} Given scalars $t, a, b \in \R$ with $ a\le b$, we write $[t]_a^b$ to denote the projection of $t$ onto the closed interval $[a, b]$. When referring to an \emph{interval} $I \subseteq \R$ with no further specification, we mean that $I$ may be open, closed, half-open, or a singleton. Phrased differently, an interval $I$ may have the form $[a, b]$, $[a, b)$, $(a, b]$, or $(a, b)$ for some $a \le b$. This includes singleton sets of the form $\{a\}$. We shall sometimes write $f(x) \lesssim g(x)$ to mean that $f(x) = O(g(x))$. Similarly, $f(x) \gtrsim g(x)$ means $f(x) = \Omega(g(x))$. To avoid ambiguity, we will only use this notation when both functions $f(x)$ and $g(x)$ under consideration are nonnegative.

\paragraph{Proper Losses} Since the class $\L^*$ of proper losses will be our focus throughout this work, it will be instructive to review the following standard characterization of the functions that it contains.

\begin{lemma}[Proper Loss Characterization]
\label[lemma]{thm:proper-characterization}
    If $\varphi : [0, 1] \to \R$ is concave and $\varphi' : [0, 1] \to [-1, +1]$ is the derivative of $\varphi$ (or an arbitrary superderivative if $\varphi$ is nondifferentiable), then $\L^*$ contains
    \[
        \ell(p, y) = \varphi(p) + \varphi'(p)(y - p).
    \]
    Conversely, every $\ell \in \L^*$ has this form.
\end{lemma}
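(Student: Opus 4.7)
The plan is to prove this via the classical Savage-style characterization of proper scoring rules, handling both directions by relating a loss function $\ell$ to its expected loss under Bernoulli labels.

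For the forward direction, suppose $\ell$ has the stated form. The expected loss at prediction $p$ under $y \sim \Ber(q)$ is
\[
    \E_{y \sim \Ber(q)}[\ell(p,y)] = \varphi(p) + \varphi'(p)(q - p),
\]
which is exactly the value at $q$ of the supporting line to $\varphi$ at $p$. Since $\varphi$ is concave, this line lies weakly above $\varphi$ everywhere and meets $\varphi$ at $q = p$, so $p = q$ is a minimizer of the expected loss, establishing properness. For the boundedness constraint in the definition of $\L^*$, I would compute $\ell(p,1) - \ell(p,0) = \varphi'(p)$ directly from the formula, and the assumption $\varphi' \in [-1,+1]$ gives $|\ell(p,1) - \ell(p,0)| \le 1$.

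For the converse, assume $\ell \in \L^*$ is an arbitrary proper loss. The key move is to define the Bayes risk $\varphi(q) := \E_{y \sim \Ber(q)}[\ell(q, y)] = q\, \ell(q,1) + (1-q)\, \ell(q,0)$ and observe that the function $G(p,q) := q\, \ell(p,1) + (1-q)\, \ell(p,0) = \ell(p,0) + q\bigl(\ell(p,1) - \ell(p,0)\bigr)$ is affine in $q$ for each fixed $p$. Properness asserts $\varphi(q) = \min_p G(p, q)$, so $\varphi$ is the pointwise infimum of a family of affine functions and is therefore concave. Moreover, for each fixed $p$, the line $q \mapsto G(p, q)$ is a supporting hyperplane to $\varphi$ at $p$ (it agrees with $\varphi$ at $q = p$ and dominates it elsewhere), so its slope $\ell(p,1) - \ell(p,0)$ is a valid superderivative of $\varphi$ at $p$; choose this as $\varphi'(p)$ (this is where I handle the possibly nondifferentiable case). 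Evaluating $G(p, y)$ at $y \in \{0,1\}$ then yields $\ell(p, y) = \varphi(p) + \varphi'(p)(y - p)$ in either case. The constraint $|\ell(p,1) - \ell(p,0)| \le 1$ translates directly into $\varphi'(p) \in [-1,+1]$.

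The main technical subtlety I anticipate is the nondifferentiable case, where $\varphi$ may have a set of superderivatives rather than a unique derivative; the resolution is that properness itself hands us a canonical superderivative, namely the slope $\ell(p,1) - \ell(p,0)$ of the supporting affine function $G(p, \cdot)$. Everything else is a routine computation with the Bayes risk. I would present the argument compactly by first establishing the $G$-affineness identity, then extracting concavity and the supporting-line property from properness in two short paragraphs.
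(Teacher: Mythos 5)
Your proof is correct and is precisely the classical Savage-style argument that the paper treats as standard. In fact, the paper omits a proof of this lemma entirely -- \cref{sec:prelim-proofs} in the appendix proves the downstream facts (\cref{thm:proper-losses-partial-improvement}, \cref{thm:prop-loss-range}, \cref{corollary:uv-cal}) that depend on the characterization, but the characterization itself is presented as a ``standard characterization'' with no accompanying argument. Your handling of the two subtleties -- verifying $\ell(p,1)-\ell(p,0)=\varphi'(p)$ to connect the boundedness constraint across the two formulations, and observing that properness supplies a canonical superderivative via the slope of the supporting line $q \mapsto G(p,q)$ when $\varphi$ is nondifferentiable -- is exactly right, and the pointwise-infimum-of-affine-functions argument cleanly yields concavity of the Bayes risk.
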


When we first defined $\L^*$, we insisted that the \emph{discrete partial derivative} with respect to the binary outcome, namely $\partial \ell(p) = \ell(p, 1) - \ell(p, 0)$, be bounded in absolute value (see \cref{definition:proper-losses}). The characterization in \cref{thm:proper-characterization} shows that this corresponds to a bound on $\varphi'(p) = \partial \ell(p)$. The characterization also leads to the following 
useful lemma, we call the {\em closer is better} lemma which states that shifting one's prediction toward the truth can only improve one's loss. To state it, recall that $ \ell(p, q) = \E_{y \sim \Be(q)}[\ell(p, y)]$.

\begin{lemma}[Closer is better]
\label[lemma]{thm:proper-losses-partial-improvement}
    If $\ell \in \L^*$ and $a \le b \le c$, then $\ell(b, c) \le \ell(a, c)$ and $\ell(b, a) \le \ell(c, a)$.
\end{lemma}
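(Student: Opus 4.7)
The plan is to apply the characterization of \Cref{thm:proper-characterization}, which writes any $\ell \in \L^*$ as $\ell(p, y) = \varphi(p) + \varphi'(p)(y - p)$ for a concave $\varphi$, with $\varphi'$ an arbitrary superderivative at points where $\varphi$ is not differentiable. Taking expectation over $y \sim \Be(q)$ gives the closed form $\ell(p, q) = \varphi(p) + \varphi'(p)(q - p)$. Both halves of the lemma then become one-variable monotonicity statements about sections of $\ell$.

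For the first inequality, fix $c$ and set $f(p) = \ell(p, c) = \varphi(p) + \varphi'(p)(c - p)$. I will show that $f$ is nonincreasing on $[0, c]$, so that $a \le b \le c$ yields $\ell(a, c) = f(a) \ge f(b) = \ell(b, c)$. Given $p_1 \le p_2 \le c$, expand
\[
    f(p_1) - f(p_2) = \bigl[\varphi(p_1) - \varphi(p_2)\bigr] + \varphi'(p_1)(c - p_1) - \varphi'(p_2)(c - p_2).
\]
The tangent-above inequality $\varphi(p_2) \le \varphi(p_1) + \varphi'(p_1)(p_2 - p_1)$, valid for any superderivative at $p_1$, gives $\varphi(p_1) - \varphi(p_2) \ge \varphi'(p_1)(p_1 - p_2)$. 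Substituting and collecting like terms leaves
\[
    f(p_1) - f(p_2) \ge (c - p_2)\bigl[\varphi'(p_1) - \varphi'(p_2)\bigr],
\]
which is nonnegative since $c - p_2 \ge 0$ and any selection of superderivatives of a concave function is nonincreasing on $[0,1]$, so $\varphi'(p_1) \ge \varphi'(p_2)$.

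The second inequality is symmetric. Fix $a$ and set $g(p) = \ell(p, a)$; the claim reduces to showing $g$ is nondecreasing on $[a, 1]$ and then applying this with $p_1 = b$, $p_2 = c$. For $a \le p_1 \le p_2$, using $\varphi(p_2) - \varphi(p_1) \ge \varphi'(p_2)(p_2 - p_1)$ and the analogous algebra yields
\[
    g(p_2) - g(p_1) \ge (a - p_1)\bigl[\varphi'(p_2) - \varphi'(p_1)\bigr],
\]
where both factors are nonpositive (because $a \le p_1$ and $\varphi'$ is nonincreasing), so the product is nonnegative.

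The only real subtlety I anticipate is the nondifferentiable case, where $\varphi'$ is a chosen superderivative rather than a genuine derivative. But the argument uses only two standard consequences of concavity: the tangent-above inequality at every point for the chosen slope, and monotonicity of any superderivative selection (which follows by adding two tangent inequalities at $p_1$ and $p_2$). Hence no regularity beyond what \Cref{thm:proper-characterization} provides is needed, and both inequalities follow.
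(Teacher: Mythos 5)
Your proof is correct and takes essentially the same route as the paper's: both invoke the characterization $\ell(p,q) = \varphi(p) + \varphi'(p)(q-p)$ from \cref{thm:proper-characterization} and reduce the claim to the tangent-above inequality for $\varphi$ plus monotonicity of the superderivative selection. The paper decomposes $\ell(a,c)-\ell(b,c)$ exactly into a sum of two nonnegative terms, whereas you lower-bound the same difference via an intermediate monotonicity statement for the sections $p\mapsto\ell(p,c)$ and $p\mapsto\ell(p,a)$; the algebra and the underlying concavity facts are identical.
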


The preceding lemma can be used to show a bound on the range of any $\ell \in \L^*$:

\begin{lemma}[Bound on $\abs{\partial \ell}$ Implies Bound on $\abs{\ell}$]
\label[lemma]{thm:prop-loss-range}
    If $\ell \in \L^*$, then there exists $c \in \R$ such that 
    \[
        c - 1 \le \ell(p, y) \le c + 1
    \]
    for all inputs $(p, y) \in [0, 1] \times \{0, 1\}$.
\end{lemma}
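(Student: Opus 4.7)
The plan is to reduce the question to the four ``corner values'' $\ell(0,0), \ell(0,1), \ell(1,0), \ell(1,1)$ and then run a short concavity calculation built on the two lemmas that immediately precede this statement.

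First I would use \Cref{thm:proper-losses-partial-improvement} to argue that the sup and inf of $\ell$ over $[0,1]\times\{0,1\}$ are both attained on the corners $\{0,1\}^2$. Setting $a = 0$ in the inequality $\ell(b, a) \le \ell(c, a)$ (for $a \le b \le c$) gives that $p \mapsto \ell(p, 0)$ is nondecreasing; symmetrically, $p \mapsto \ell(p, 1)$ is nonincreasing. Hence if I write
\[
    L = \min\{\ell(0, 0),\ \ell(1, 1)\}, \qquad U = \max\{\ell(0, 1),\ \ell(1, 0)\},
\]
then $L \le \ell(p, y) \le U$ for every $(p, y) \in [0,1]\times\{0,1\}$, and it suffices to prove $U - L \le 2$, since then $c := (U + L)/2$ satisfies the conclusion.

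Next I would invoke \Cref{thm:proper-characterization} to express $\ell(p, y) = \varphi(p) + \varphi'(p)(y - p)$ with $\varphi$ concave on $[0,1]$ and $\varphi'(0), \varphi'(1) \in [-1, +1]$ (using $\partial \ell = \varphi'$ and the bound in \Cref{definition:proper-losses}). The four corner values become $\varphi(0),\ \varphi(0) + \varphi'(0),\ \varphi(1) - \varphi'(1),\ \varphi(1)$. It then remains to check that each of the four candidate differences $U_i - L_j$ with $U_i \in \{\ell(0,1), \ell(1,0)\}$ and $L_j \in \{\ell(0,0), \ell(1,1)\}$ is at most $2$. Two of them simplify directly to $\varphi'(0)$ and $-\varphi'(1)$, each bounded by $1$. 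The remaining two,
\[
    \varphi(1) - \varphi(0) - \varphi'(1) \qquad \text{and} \qquad \varphi(0) - \varphi(1) + \varphi'(0),
\]
are both nonnegative and bounded above by $\varphi'(0) - \varphi'(1) \le 2$, using the tangent-line inequalities $\varphi(1) \le \varphi(0) + \varphi'(0)$ and $\varphi(0) \le \varphi(1) - \varphi'(1)$ that come from concavity.

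I expect the only slightly delicate step is keeping track of which direction each concavity inequality pushes; individually each bound is one line, but ensuring the right pair of tangent-line inequalities is applied to each of the two nontrivial differences requires care. With $U - L \le 2$ in hand, choosing $c = (U + L)/2$ yields $\ell(p, y) \in [L, U] \subseteq [c - 1, c + 1]$ for all $(p, y)$, completing the proof.
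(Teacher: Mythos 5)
Your proof is correct, and it arrives at the same conclusion as the paper's, but it takes a more explicit route. Both arguments start by invoking \Cref{thm:proper-losses-partial-improvement} to observe that $\ell(\cdot,0)$ is nondecreasing and $\ell(\cdot,1)$ is nonincreasing. After that point the paper stays at the level of these two monotonicity facts plus the defining bound $\lvert \ell(p,1)-\ell(p,0)\rvert \le 1$: it chains $\ell(p,0)\le \ell(q,0)\le \ell(q,1)+1$ (and the symmetric variants) to get $\lvert\ell(p,0)-\ell(q,1)\rvert\le 1$, then applies the triangle inequality to conclude that all values of $\ell$ lie in an interval of length $2$. You instead reduce to the four corner evaluations, pass to the representation $\ell(p,y)=\varphi(p)+\varphi'(p)(y-p)$ from \Cref{thm:proper-characterization}, and deduce $U-L\le 2$ by combining the tangent-line inequalities from concavity with $\varphi'\in[-1,1]$. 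Your version makes the role of concavity explicit at the cost of invoking the characterization lemma, whereas the paper never needs $\varphi$ at all and is consequently a bit leaner; on the other hand, your corner-value reduction sharpens the structural picture (the minimum must be one of $\ell(0,0),\ell(1,1)$ and the maximum one of $\ell(0,1),\ell(1,0)$), which the paper leaves implicit. Both are sound, and neither has a gap.
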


Another consequence of the characterization in \cref{thm:proper-characterization} is the following description of the ``boundary'' of the convex set $\L^*$. Specifically, it is roughly the case that any loss in $\L^*$ is a convex combination of a special type of proper loss functions, which we call \emph{V-shaped} losses.

\begin{definition}[V-Shaped Loss]
    Given a scalar $s \in [-1, +1]$, consider the modified sign function
    \[
        \sign_s(t) = \begin{cases} +1 &\text{if $t > 0$,} \\ s &\text{if $t = 0$,} \\ -1 & \text{if $t < 0$}. \end{cases}
    \]
    The \emph{V-shaped} losses are the functions
    \[
        \ell_{v,s}(p, y) = -\sign_s(p-v)(y-v),
    \]
    where $v \in [0, 1]$ and $s \in [-1, +1]$ are any values. Each loss $\ell_{v,s}$ belongs to the class $\L^*$. We pay particular attention to the V-shaped losses $\ell_v := \ell_{v, 0}$ and $\ell_v^+ := \ell_{v, (+1)}$, and $\ell_v^- := \ell_{v, (-1)}$. We also write $\sign_+ = \sign_{(+1)}$ and $\sign_- = \sign_{(-1)}$ for brevity.
\end{definition}

\begin{lemma}[Modification from \cite{kleinberg2023u}]\label[lemma]{theorem:uv-cal}
    Let $\L_0$ be the set containing the V-shaped losses $\ell_v^+$ and $\ell_v^-$ for all $v \in [0, 1]$, as well as constant functions. Then for all $\ell \in \L^*$ and $\eps > 0$, there exists a convex combination $\ell_0$ of finitely many functions in $\L_0$ such that $\lvert \ell(p) - \ell_0(p) \rvert \le \eps$ at all $p \in [0, 1]$.
\end{lemma}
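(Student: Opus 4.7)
The plan is to apply the characterization of \cref{thm:proper-characterization} to write any $\ell \in \L^*$ as $\ell(p,y) = \varphi(p) + \varphi'(p)(y-p)$ for some concave $\varphi : [0,1] \to \R$ whose superderivative $\varphi'$ is non-increasing and takes values in $[-1,+1]$. I will construct $\ell_0$ of the same form with $\varphi_0$ a piecewise linear concave function, and realize $\ell_0$ as a convex combination of elements of $\L_0$. Since $|y - p| \le 1$, the pointwise bound $|\ell(p, y) - \ell_0(p, y)| \le |\varphi(p) - \varphi_0(p)| + |\varphi'(p) - \varphi_0'(p)|$ reduces the task to uniformly approximating $\varphi$ and $\varphi'$ simultaneously --- in particular both outcomes $y=0$ and $y=1$ are handled together, which is why $\L_0$ must contain both $\ell_v^+$ and $\ell_v^-$.

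For a small $\eta > 0$, I first approximate $\varphi'$ by a non-increasing step function $g : [0,1] \to [-1+\eta, 1-\eta]$ with $m = O(1/\eta)$ jumps at points $v_1 < \cdots < v_m$. Since $\varphi'$ is monotone, this can be done by partitioning the \emph{range} $[-1+\eta, 1-\eta]$ into subintervals of length $\eta$ and using the generalized inverse of $\varphi'$ to choose the $v_k$; the clipping of extremal values costs at most an extra $\eta$ of pointwise error, giving $\|g - \varphi'\|_\infty \le 2\eta$. Crucially, the clipping also forces $\max(g(0), -g(1)) \le 1 - \eta$, which will leave convex-combination mass at least $\eta$ free for a shift constant below.

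Next I express $g$ as the derivative of the Bayes risk of a convex combination of elements of $\L_0$. Each $\ell_v^+$ contributes $-\sign_+(p-v)$ to the derivative and $-|p-v|$ to the Bayes risk, while the boundary losses $\ell_0^+$ and $\ell_1^-$ have constant derivatives $-1$ and $+1$ on $[0,1]$, respectively. A direct computation writes $g(p) = A + \sum_k \mu_k \bigl(-\sign_{s_k}(p - v_k)\bigr)$ where $\mu_k = \tfrac{1}{2}\bigl|g(v_k^-) - g(v_k^+)\bigr|$, $A = (g(0) + g(1))/2$, and $s_k \in \{+1, -1\}$ is chosen to match the one-sided continuity of $\varphi'$ at each $v_k$; the total mass is $\sum_k \mu_k + |A| = \max(g(0), -g(1)) \le 1 - \eta$. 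Assigning weight $\mu_k$ to $\ell_{v_k}^{s_k}$, weight $\max(A,0)$ to $\ell_1^-$, weight $\max(-A,0)$ to $\ell_0^+$, and the residual $\lambda_c \ge \eta$ to a constant loss with value $c_0 \in \R$, I obtain a valid convex combination. Because $\lambda_c > 0$ and $c_0$ is unrestricted, I can (uniquely) choose $c_0$ to enforce $\varphi_0(0) = \varphi(0)$; the resulting $\ell_0$ has $\varphi_0' = g$ almost everywhere and the desired initial condition.

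The error bound now follows from the fundamental theorem of calculus: $\varphi_0(p) - \varphi(p) = \int_0^p \bigl(g(t) - \varphi'(t)\bigr)\, dt$ gives $\|\varphi_0 - \varphi\|_\infty \le 2\eta$, and combined with $\|\varphi_0' - \varphi'\|_\infty \le 2\eta$ this yields $\|\ell - \ell_0\|_\infty \le 4\eta$. Setting $\eta = \eps/4$ completes the proof with $O(1/\eps)$ V-shaped losses in the combination. The main obstacle is the weight-budget accounting in the third step: the extremal example $\ell = \ell_{1/2}^+ + \tfrac{1}{2}$ --- for which $\varphi'$ equals $-\sign_+(p-1/2)$ and the target differs from a single V-shaped loss only by an additive constant --- shows that a fresh V-shaped decomposition can saturate the unit convex-combination budget and leave no room for the required constant shift. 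The clipping in the second step is precisely the device that breaks this degeneracy, trading an $O(\eta)$ sup-norm error in $g$ for $\ge \eta$ units of convex weight dedicated to a constant of arbitrary magnitude, which then grants full freedom to align $\varphi_0$ with $\varphi$ at $p = 0$.
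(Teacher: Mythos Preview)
The paper does not include its own proof of this lemma; it is stated as a modification of a result from \cite{kleinberg2023u} and used as a black box. Your plan is the natural one and is essentially correct: represent $\ell$ via \cref{thm:proper-characterization}, approximate the non-increasing $\varphi'=\partial\ell$ by a step function, realize the step function as the discrete derivative of a finite convex combination of V-shaped losses plus a constant, and use the slack from clipping to absorb the additive constant needed to align $\varphi_0(0)$ with $\varphi(0)$. The weight-budget accounting you give, $\sum_k\mu_k+|A|=\max(g(0),-g(1))\le 1-\eta$, is correct and is exactly the subtle point.

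There is one small but genuine imprecision. You write ``$s_k\in\{+1,-1\}$ is chosen to match the one-sided continuity of $\varphi'$ at each $v_k$,'' but $\varphi'=\partial\ell$ is a \emph{specific} superderivative determined by $\ell$, and at a jump point it may take a value strictly between its one-sided limits. Concretely, if $\varphi'$ jumps from $a$ to $b$ at $v_k$ with $\varphi'(v_k)=c\in(b,a)$, then with $s_k\in\{+1,-1\}$ you can only force $g(v_k)\in\{g(v_k^-),g(v_k^+)\}$, and the pointwise error $|g(v_k)-\varphi'(v_k)|$ can be as large as $a-b$, not $O(\eta)$. The example $\ell=\ell_{1/2,0}$ (so $\varphi'(1/2)=0$ while the one-sided limits are $\pm1$) already exhibits this. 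The fix is immediate within your framework: replace ``$s_k\in\{+1,-1\}$'' by ``$s_k\in[-1,+1]$,'' implemented as a convex combination $\alpha_k\,\ell_{v_k}^{+}+(1-\alpha_k)\,\ell_{v_k}^{-}$ with total weight $\mu_k$; since $\partial\ell_{v_k}^{\pm}(v_k)=\mp1$, a suitable $\alpha_k$ makes $g(v_k)$ match $\varphi'(v_k)$ exactly, and the total mass $\sum_k\mu_k+|A|$ is unchanged. With this correction your bound $\|\varphi_0'-\varphi'\|_\infty\le 2\eta$ holds at \emph{every} $p$, and the rest of the argument (FTC for the $\varphi$-part, then $\|\ell-\ell_0\|_\infty\le 4\eta$) goes through as written.
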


The preceding lemma can be used to reduce a search over all $\ell \in \L^*$ to a search over a much smaller set. For example, in the original definition of $\cdl$, it suffices to take the supremum over just the set of $\ell_v^+$ and $\ell_v^-$ losses, rather than all of $\L^*$. In fact, the following corollary further shows that it suffices to take the supremum over \emph{just} the set of $\ell_v^+$ losses \emph{or} the set of $\ell_v^-$ losses:

\begin{corollary}\label[corollary]{corollary:uv-cal}
    Given a distribution $J$ over pairs $(p, y)$, post-processings $\K$, and $\L \subseteq \L^*$, let
    \[
        \cdl_{\L,\, \K}(J) = \sup_{\ell, \kappa} \, \E\bigl[\ell(p, y) - \ell(\kappa(p), y)\bigr],
    \]
    where the supremum is taken over all $\ell \in \L$ (not $\L^*$) and $\kappa \in \K$. then
    \[
        \cdl_{\K}(J) = \cdl_{\L^+,\, \K}(J) = \cdl_{\L^-,\, \K}(J),
    \]
    where $\L^+ = \{\ell_v^+ \,|\,v \in [0, 1]\}$ and $\L^- = \{\ell_v^- \,|\, v \in [0, 1]\}$.
\end{corollary}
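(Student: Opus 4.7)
The plan is to sandwich $\cdl_\K(J)$ between $\cdl_{\L^+, \K}(J)$ and $\cdl_{\L^-, \K}(J)$ from both sides. The easy direction $\cdl_{\L^\pm, \K}(J) \le \cdl_\K(J)$ is immediate from $\L^\pm \subseteq \L^*$ since the supremum in $\cdl_\K$ is over a larger set. For the opposite direction, I would first appeal to \cref{theorem:uv-cal} to reduce from general proper losses to V-shaped ones. Given $\ell \in \L^*$ and $\eps > 0$, the lemma supplies a convex combination $\ell_0 = \sum_i \lambda_i \ell_i$ of functions in $\L_0 = \L^+ \cup \L^- \cup \{\text{constants}\}$ with $\lvert \ell - \ell_0 \rvert \le \eps$ pointwise. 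Constant functions contribute zero to $\ell(p,y) - \ell(\kappa(p), y)$, and $\cdl$ of a convex combination is at most the maximum of its components' $\cdl$'s (by pulling the max outside the $\sup_\kappa$). A triangle-inequality argument then yields
\[
    \cdl_\K(J) \le \max\bigl(\cdl_{\L^+, \K}(J),\; \cdl_{\L^-, \K}(J)\bigr) + 2\eps,
\]
so letting $\eps \to 0$ gives $\cdl_\K(J) \le \max(\cdl_{\L^+, \K}(J), \cdl_{\L^-, \K}(J))$.

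It then remains to show $\cdl_{\L^+, \K}(J) = \cdl_{\L^-, \K}(J)$. The core step is the pointwise limit $\ell_{v+\eps}^+(p,y) \to \ell_v^-(p,y)$ as $\eps \to 0^+$ for each $v \in [0, 1)$. At $p \ne v$, both sides agree with $-\sign(p-v)(y-v)$ in the limit. At $p = v$, the shift puts $p < v + \eps$, so $\sign_+(p - v - \eps) = -1$, giving $\ell_{v+\eps}^+(v, y) = y - v - \eps \to y - v = \ell_v^-(v, y)$. Since the integrand $\ell_{v+\eps}^+(p,y) - \ell_{v+\eps}^+(\kappa(p), y)$ is uniformly bounded by $2$ in absolute value, dominated convergence applied under the joint distribution of $(p, y)$ gives, for each fixed $\kappa \in \K$,
\[
    \E\bigl[\ell_v^-(p,y) - \ell_v^-(\kappa(p), y)\bigr] = \lim_{\eps \to 0^+} \E\bigl[\ell_{v+\eps}^+(p,y) - \ell_{v+\eps}^+(\kappa(p), y)\bigr] \le \cdl_{\L^+, \K}(J).
\]
Taking the supremum over $\kappa$ yields $\cdl_{\ell_v^-, \K}(J) \le \cdl_{\L^+, \K}(J)$ for all $v < 1$. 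For the boundary case $v = 1$, a direct check shows $\ell_1^-(p, y) = y - 1$ is independent of $p$, so $\cdl_{\ell_1^-, \K}(J) = 0$. The symmetric argument, using $\ell_{v-\eps}^-(p,y) \to \ell_v^+(p,y)$ for $v > 0$ together with the boundary case $\ell_0^+(p,y) = -y$, gives $\cdl_{\L^+, \K}(J) \le \cdl_{\L^-, \K}(J)$. Combining with the previous step establishes all three equalities in the corollary.

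The main technical subtlety is the pointwise limit claim and why it captures the right phenomenon. The functions $\ell_v^+$ and $\ell_v^-$ differ only at the single point $p = v$, but if the distribution of either $p$ or $\kappa(p)$ places positive mass on $\{v\}$, that single point can contribute nontrivially to the CDL. Approximating $\ell_v^-$ by $\ell_{v+\eps}^+$ sidesteps this by nudging the ambiguous threshold infinitesimally, forcing $p = v$ to lie strictly on one side of the shifted threshold so that the sign picked up at $p = v$ matches $\sign_-(0) = -1$ in the limit. Getting the signs and the direction of the shift correct—and separately verifying the boundary cases $v \in \{0, 1\}$ where the approximating family runs out of room—is the delicate but routine part of the argument.
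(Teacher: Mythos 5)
Your proof is correct and takes essentially the same approach as the paper's: reduce to the set $\L_0$ of V-shaped and constant losses via \cref{theorem:uv-cal}, then use one-sided pointwise limits ($\ell_{v+\eps}^+ \to \ell_v^-$ and $\ell_{v-\eps}^- \to \ell_v^+$) together with bounded convergence to show each family's supremum is attained by the other, handling the degenerate endpoints $\ell_1^-$ and $\ell_0^+$ (and constants) as $p$-independent. The only difference is that you make the convex-combination and approximation-error bookkeeping explicit, whereas the paper compresses that into the assertion that such losses are ``redundant.''
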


In \cref{def:weighted-ce}, we gave a broad definition of \emph{weighted calibration error} measures, which is parameterized by a class of weight functions $\W$. The larger the class $\W$, the larger the corresponding weighted calibration error. When $\W$ comprises all functions with range contained in $[-1, +1]$, we arrive at the standard notion of \emph{expected calibration error (ECE)}.

\begin{definition}[Expected Calibration Error]
    Given a distribution $J$ over $(p, y) \in [0, 1] \times \{0, 1\}$,
    \[
        \ece(J) = \E \bigl\lvert \E[y - p \,|\, p] \bigr\rvert.
    \]
\end{definition}

As mentioned previously, \cite{kleinberg2023u,HuWu24} showed that the calibration decision loss $\cdl_{\Kall}$ with respect to the class of \emph{all} post-processing functions $\Kall = \{[0, 1] \to [0, 1]\}$ is quadratically related to $\ece$. In particular, the upper bound also holds for $\cdl_\K$ for any subset $\K \subseteq \Kall$.

\begin{theorem}[\cite{kleinberg2023u,HuWu24}]
\label[theorem]{thm:prior-ece-vs-cdl}
    Given a distribution $J$ over $(p, y) \in [0, 1] \times \{0, 1\}$,
    \[
        \ece(J)^2 \le  \cdl_{\mathrm{\Kall}}(J) \le 2\,\ece(J).
    \]
\end{theorem}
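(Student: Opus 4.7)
The plan is to prove the two inequalities separately, exploiting the characterization of proper losses in \cref{thm:proper-characterization} for the upper bound, and the squared loss for the lower bound.

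\textbf{Upper bound.} Fix an arbitrary $\ell \in \L^*$ and post-processing $\kappa \in \Kall$. By \cref{thm:proper-characterization}, write $\ell(a, y) = \varphi(a) + \varphi'(a)(y - a)$ with $\varphi$ concave and $\varphi' = \partial \ell$ taking values in $[-1, +1]$ (the latter follows from the definition of $\L^*$, since $\partial \ell(a) = \ell(a, 1) - \ell(a, 0)$). Let $p^*(p) = \E[y \mid p]$. Conditioning on $p$ and using that $\E[\ell(a, y) \mid p] = \varphi(a) + \varphi'(a)(p^*(p) - a)$, I will compute
\[
    \E\bigl[\ell(p, y) - \ell(\kappa(p), y) \mid p\bigr] = \bigl[\varphi(p) - \varphi(\kappa(p))\bigr] + \varphi'(p)(p^*(p) - p) - \varphi'(\kappa(p))(p^*(p) - \kappa(p)).
\]
Properness applied at the pair $(p, \kappa(p))$ gives $\ell(p, p) \le \ell(\kappa(p), p)$, which after rearrangement yields the supporting-line inequality $\varphi(p) - \varphi(\kappa(p)) \le \varphi'(\kappa(p))(p - \kappa(p))$. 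Substituting this bound and simplifying, the right-hand side collapses to $(\varphi'(p) - \varphi'(\kappa(p)))(p^*(p) - p)$. Since $|\varphi'(p) - \varphi'(\kappa(p))| \le 2$, taking outer expectations gives
\[
    \E\bigl[\ell(p, y) - \ell(\kappa(p), y)\bigr] \le 2 \, \E\bigl|p^*(p) - p\bigr| = 2 \, \ece(J),
\]
and supremizing over $\ell, \kappa$ yields $\cdl_{\Kall}(J) \le 2 \, \ece(J)$.

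\textbf{Lower bound.} Choose the squared loss $\ell(p, y) = (p - y)^2$, which belongs to $\L^*$ since $\partial \ell(p) = 1 - 2p \in [-1, +1]$ for $p \in [0, 1]$, and the Bayes post-processing $\kappa^*(p) = p^*(p)$. A direct expansion conditional on $p$ gives
\[
    \E\bigl[(p - y)^2 - (p^*(p) - y)^2 \,\big|\, p\bigr] = (p - p^*(p))\bigl(p + p^*(p) - 2 p^*(p)\bigr) = (p - p^*(p))^2.
\]
By Jensen's inequality applied to the convex function $t \mapsto t^2$,
\[
    \cdl_{\Kall}(J) \ge \E\bigl[(p - p^*(p))^2\bigr] \ge \bigl(\E \, |p - p^*(p)|\bigr)^2 = \ece(J)^2.
\]

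\textbf{Main obstacle.} Neither direction is deep: the squared-loss witness for the lower bound is canonical, and the upper bound is a book-keeping exercise once the concave potential $\varphi$ is in hand. The one step that requires care is invoking properness in the right form to telescope the $\varphi(p) - \varphi(\kappa(p))$ term; without that substitution the bound one obtains depends on the unbounded quantity $|\varphi(p) - \varphi(\kappa(p))|$ rather than the bounded quantity $|\varphi'(p) - \varphi'(\kappa(p))|$, and the clean factor of $2$ is lost.
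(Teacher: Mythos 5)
Your proof is correct, but note that the paper itself does not prove this theorem — it is stated as a cited result from prior work (\cite{kleinberg2023u, HuWu24}), so there is no in-paper argument to compare against. That said, the key algebraic identity in your upper bound, namely
\[
    \E\bigl[\ell(p,y)-\ell(\kappa(p),y)\,\big|\,p\bigr] \le \bigl(\varphi'(p)-\varphi'(\kappa(p))\bigr)\bigl(p^*(p)-p\bigr),
\]
is exactly the Decision OI inequality that the paper does prove as \cref{thm:loss-oi} and uses in the proof of the more general \cref{thm:cdl-vs-ce}; when specialized to $\K = \Kall$ and combined with $\lvert\varphi'\rvert\le 1$, that lemma yields $\cdl_{\Kall}\le 2\ece$ by your argument. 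Your lower bound via the squared loss, the Bayes post-processing $\kappa^*(p)=\E[y\mid p]$ (which is a legitimate member of $\Kall$), and Jensen's inequality is the standard witness and is also correct. One small stylistic point: the factor of $2$ in the upper bound comes from $\lvert\varphi'(p)-\varphi'(\kappa(p))\rvert\le 2$, but since you only need an upper bound on the \emph{signed} quantity $(\varphi'(p)-\varphi'(\kappa(p)))(p^*(p)-p)$, passing to the absolute value before taking the outer expectation is exactly the point where the factor of $2$ enters and cannot be avoided within this argument.
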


\paragraph{Post-Processing}

Here, we briefly discuss some terminology and facts related to post-processing classes that will be useful in subsequent sections. The first thing we need is a definition of what constitutes a \emph{valid} post-processing class $\K$. For this, our only two requirements are that the $\K$ contains the identity and is \emph{translation invariant}.

\begin{definition}[Valid Post-Processing Class]
\label[definition]{def:valid-post}
    We say that a class of functions $\K \subseteq \{[0, 1] \to [0, 1]\}$ is a \emph{valid post-processing class} if the following two conditions hold:
    \begin{itemize}
        \item \textbf{(Identity)} $\K$ contains the identity function $\kappa(p) = p$,
        \item \textbf{(Translation Invariance)} Fix any $s, t \in \R$. If $\kappa$ belongs to $\K$, then so does the function
        \[
            \kappa_{s,t}(p) = \Bigl[\kappa\bigl([p+s]_0^1\bigr) + t\Bigr]_0^1.
        \]
    \end{itemize}
\end{definition}

The fact that $\K$ contains the identity function means that $\cdl_\K$ will always be nonnegative. Later, we will show that the complexity of calibration decision loss relative to a valid post-processing classes $\K$, is closely related to the complexity of the class $\threshold(\K)$ of its upper thresholds.

\begin{definition}[Upper Thresholds]
\label[definition]{def:thr}
    Given a valid post-processing class $\K$, its \emph{upper thresholds} are
    \[
        \threshold(\K) = \Bigl\{p \mapsto \sign_+\Bigl(\kappa(p) - \frac{1}{2}\Bigr) \;\Big|\; \kappa \in \K \Bigr\}.
    \]
\end{definition}

Although the cutoff value of $1/2$ in \cref{def:thr} may seem arbitrary, its choice is not especially important when working with valid post-processing classes, which are translation invariant.

\begin{definition}[VC Dimension]
\label[definition]{definition:vc-dim}
    A class $\C\subseteq\{[0,1] \to \cube{}\}$ \emph{shatters} $S \subseteq [0, 1]$ if every function from $S$ to $\cube{}$ is the restriction of some function in $\C$. The \emph{VC dimension} of $\C$, denoted $\vc(\C)$, is the size of the largest set it shatters. We say $\vc(\C) = \infty$ if the dimension is unbounded.
\end{definition}

\newcommand{\mcr}{\mathrm{cr}}
\newcommand{\mtv}{\mathrm{tv}}

\section{Sample Complexity of Testing and Auditing}
\label[section]{sec:samp-comp}

In this section, we show that the sample complexity of testing/auditing is characterized  (up to a quadratic gap) by the VC dimension of the class $\threshold(\K)$. We use this to derive a lower bound for $\cdl_\Lip$ where $\Lip$ denotes the family of $1$-Lipschitz post-processings. We then consider the possibility of circumventing this lower bound by consider restricted loss families.

\subsection{A Characterization via VC dimension}

Our main sample complexity characterization is as follows.

\begin{theorem}[Sample Complexity Bounds]\label[theorem]{theorem:sample-complexity}
    Let $\K$ be a valid post-processing class, and let $\vc(\threshold(\K)) = d$. Then, 
    \begin{enumerate}
        \item For any $\alpha,\eps\in(0,1)$, there is an $(\alpha,\alpha-\eps)$-tester for $\cdl_{\K}$ whose sample complexity is $O(d\log(1/\eps)/\eps^2)$.
        \item Any $(1/8 , 0)$-auditor for $\cdl_{\K}$ requires $\Omega(\sqrt{d})$ samples.
    \end{enumerate}
\end{theorem}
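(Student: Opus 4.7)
The plan is to prove the two parts independently: a uniform-convergence tester for Part 1, and a mixture-based indistinguishability argument for Part 2.

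For Part 1, I would begin by invoking \cref{corollary:uv-cal} to rewrite
\[
\cdl_\K(J) = \sup_{v \in [0,1],\, \kappa \in \K} \E\bigl[(\sign_+(\kappa(p) - v) - \sign_+(p - v))(y - v)\bigr].
\]
The key structural observation is that, for a valid $\K$, the translation invariance and identity properties (\cref{def:valid-post}) force both $p \mapsto \sign_+(\kappa(p) - v)$ and $p \mapsto \sign_+(p - v)$ to lie in $\threshold(\K)$ for every $v \in [0,1]$ and $\kappa \in \K$, since the shifted post-processing $\kappa_{0,\,1/2-v}$ belongs to $\K$ (a short case analysis confirms that clipping at $[0,1]$ does not alter the relevant sign). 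Thus the integrand $f_{v,\kappa}(p,y) = (\sign_+(\kappa(p)-v) - \sign_+(p-v))(y-v)$ is a difference of two members of the bounded function class $\{h(p)(y-v) : h \in \threshold(\K),\, v \in [0,1]\}$, whose pseudo-dimension can be bounded by $O(d)$ (the $v$-dependence enters only through an affine factor after the Boolean skeleton is extracted). A standard VC-based uniform convergence theorem then shows that $n = O(d \log(1/\eps)/\eps^2)$ samples suffice so that $\sup_{v,\kappa} |\hat{\E}_n[f_{v,\kappa}] - \E[f_{v,\kappa}]| \le \eps/2$ with probability at least $2/3$. The tester outputs $\accept$ iff $\sup_{v,\kappa} \hat{\E}_n[f_{v,\kappa}] \le \alpha - \eps/2$, and correctness is immediate.

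For Part 2, I would fix a shattered set $\{p_1,\ldots,p_d\}$ for $\threshold(\K)$, which by a restriction-plus-pigeonhole argument combined with translation invariance of $\K$ we may assume lies in $[1/4,3/4]$ (at the cost of a constant factor in $d$). Define $J_0$ with $p$ uniform on $\{p_i\}$ and $y \mid p_i \sim \Ber(p_i)$, so that $\ece(J_0) = 0$; and for each $\sigma \in \{\pm 1\}^d$, define $J_\sigma$ with the same $p$-marginal but $y \mid p_i \sim \Ber(p_i + c\sigma_i)$, $c = 1/4$. Using the V-shaped loss $\ell_{1/2}^+$ with $\kappa_\sigma \in \K$ obtained from shattering (choosing each sign $\eta_i$ to maximise the per-index contribution), a direct calculation gives
\[
\cdl_\K(J_\sigma) \gtrsim c \cdot \frac{1}{d}\,\#\bigl\{i : \sigma_i \ne \sign_+(p_i - 1/2)\bigr\},
\]
which by a Hoeffding bound over a uniformly random $\sigma$ exceeds $1/8$ for all but an $e^{-\Omega(d)}$ fraction of $\sigma$. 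For indistinguishability, I would compute
\[
\chi^2\bigl(\E_\sigma J_\sigma^n \,\big\|\, J_0^n\bigr) = \E_{\sigma,\sigma'}\Bigl[\Bigl(1 + \tfrac{c^2}{d}\sum_{i} \tfrac{\sigma_i \sigma_i'}{p_i(1-p_i)}\Bigr)^n\Bigr] - 1,
\]
and bound this using $(1+x)^n \le e^{nx}$ together with $\E_{\sigma,\sigma'}[e^{\lambda \sum_i \sigma_i \sigma_i'}] = \cosh(\lambda)^d \le e^{d\lambda^2/2}$ (Ingster's trick), producing an overall bound of $\exp(O(n^2/d)) - 1$. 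This is $o(1)$ whenever $n = o(\sqrt d)$, so the standard TV/LeCam argument yields a contradiction with any $(1/8,0)$-auditor.

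The main obstacle I expect is the $\chi^2$ computation in Part 2: the Ingster-style MGF expansion must be executed carefully so that the final bound depends on $n^2/d$ rather than weaker quantities, and the factorisation across indices relies on independence of $\sigma_i,\sigma_i'$ that must be preserved through the conditioning. A secondary obstacle is the translation-invariance plus pigeonhole argument required to place the shattered set in a region bounded away from $0$ and $1$ without losing more than a constant factor in $d$; this is where the \emph{validity} of $\K$, rather than only the VC bound on $\threshold(\K)$, is genuinely used in the lower bound.
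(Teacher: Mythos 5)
Your Part 1 matches the paper's approach: both prove a uniform-convergence lemma for the CDL objective over the class $\{(v,\kappa)\}$ and then threshold the empirical CDL at $\alpha - \eps/2$. That part is fine.

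Your Part 2 has a genuine gap in the CDL lower bound for $J_\sigma$. You claim that choosing $\kappa_\sigma$ by shattering gives
\[
\cdl_\K(J_\sigma) \gtrsim c \cdot \frac{1}{d}\,\#\{i : \sigma_i \ne \sign_+(p_i - 1/2)\},
\]
but the per-index contribution from the $\ell_{1/2}^+$ loss is actually $\tfrac{2}{d}\,|q_i - 1/2|$ when the sign flips (where $q_i = p_i + c\sigma_i$), not $\tfrac{2}{d}\cdot c$. Because $q_i$ and $p_i$ land on opposite sides of $1/2$ when a flip occurs, one has $|q_i - 1/2| = c - |p_i - 1/2|$, which can be arbitrarily close to $0$ if $p_i$ happens to sit near the boundary of $[1/4, 3/4]$. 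Since the shattered set is not under your control, you cannot rule this out; for instance, if all $p_i$ cluster near $1/4$, then $\cdl_\K(J_\sigma)$ is close to $0$ for every $\sigma$ despite $\ece(J_\sigma) = c$, and the claimed Hoeffding bound gives nothing. The paper sidesteps this by drawing the perturbed distributions $J_1$ with \emph{deterministic} conditional labels $\E[y \mid q_i] = y_i \in \{0,1\}$, so the best-response improvement $|y_i - 1/2|$ is exactly $1/2$ for every index regardless of where $q_i$ lies, and it is only the subtracted baseline term $a_2 = \tfrac{2}{d}\sum_i (y_i - 1/2)\sign_+(q_i - 1/2)$ that depends on the $q_i$'s (bounded by $3/8$ w.h.p.\ via Hoeffding since $q_i \in [1/4,3/4]$). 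The indistinguishability step is also argued differently: the paper uses a birthday-paradox/collision argument exploiting the fact that, conditioned on all sampled $q_i$'s being distinct, a random $J_1$ generates exactly the same sample distribution as $J_0$; your Ingster-style $\chi^2$ computation is a valid alternative technique (and your algebra for it is correct), but it would need to be rerun for the paper's perturbation family (replacing $\Ber(p_i+c\sigma_i)$ by deterministic $y_i$) to be coupled with a correct CDL lower bound.

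Secondary note: the pigeonhole-plus-translation step to place the shattered set in $[1/4,3/4]$ works at the cost of a factor-$2$ (not merely a constant) loss in $d$, exactly as the paper also incurs; this is benign.
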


The upper bound relies on the following generalization result for CDL. 

\begin{lemma}[Uniform Convergence for CDL]\label[lemma]{lemma:uniform-convergence-cdl}
    Let $\K$ be a valid post-processing class, and let $\vc(\threshold(\K)) = d$. For any $\eps,\delta\in(0,1)$ and any set $S$ of $m  =O(d \log(1/\eps\delta)/\eps^2)$ i.i.d. examples from some distribution $J$ over $[0,1]\times\{0,1\}$, with probability at least $1-\delta$, we have:
    \begin{align*}
        & \sup_{\substack{\ell\in \L^* \\ \kappa \in \K}} \Bigr|\E_{(p,y)\sim J}[\ell(p,y) - \ell(\kappa(p),y)] - \E_{(p,y)\sim S}[\ell(p,y) - \ell(\kappa(p),y)] \Bigr|     
        \le \eps.
    \end{align*}
    In particular, with probability at least $1-\delta$ we have $|\cdl_\K(J) - \cdl_\K(S)| \le \eps$.
\end{lemma}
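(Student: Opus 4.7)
The plan is to reduce uniform convergence over all proper losses and post-processings to uniform convergence over a structured class determined by $\threshold(\K)$, and then invoke standard VC-based uniform convergence. First, apply \Cref{theorem:uv-cal} to approximate each $\ell \in \L^*$ within sup-norm $\eta = \eps/8$ by a convex combination of V-shaped losses $\ell_v^+$ and a constant. The constant term cancels in $\ell(p, y) - \ell(\kappa(p), y)$, and the convex combination commutes with expectation, so up to an additive slack of $4\eta = \eps/2$ it suffices to prove uniform convergence to error $\eps/2$ over the class
\[
\G = \Bigl\{ g_{v, \kappa}(p, y) := \ell_v^+(p, y) - \ell_v^+(\kappa(p), y) = 2(y - v)\bigl(\Ind[\kappa(p) \geq v] - \Ind[p \geq v]\bigr) \;\Big|\; v \in [0,1],\ \kappa \in \K \Bigr\},
\]
whose functions are bounded in $[-2, 2]$.

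Next, use translation invariance of $\K$ to relate the relevant threshold indicators to $\threshold(\K)$. Specifically, for any $\kappa \in \K$ and $v \in [0,1]$, the shifted function $\kappa_{0,\, 1/2-v}(p) = [\kappa(p) + 1/2 - v]_0^1$ also lies in $\K$, and $\sign_+(\kappa_{0,\, 1/2-v}(p) - 1/2) = \sign_+(\kappa(p) - v)$. Hence the class $\{p \mapsto \Ind[\kappa(p) \geq v] : \kappa \in \K,\, v \in \R\}$ embeds in $\threshold(\K)$ and has VC dimension at most $d$, while the identity-threshold class $\{p \mapsto \Ind[p \geq v]\}$ trivially has VC dimension $1$.

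The main technical step is to bound the pseudo-dimension of $\G$ by $O(d)$. For any sample $(p_i, y_i)_{i=1}^n$, the joint indicator pattern $\vec{c} := (\Ind[\kappa(p_i) \geq v] - \Ind[p_i \geq v])_{i=1}^n \in \{-1, 0, 1\}^n$ takes at most $O(n^{d+1})$ values across $(v, \kappa)$ by Sauer--Shelah applied to the two VC classes identified above. For each fixed $\vec{c}$, the value vector $(g_{v, \kappa}(p_i, y_i))_i = (2(y_i - v) c_i)_i$ is affine in the scalar $v$, so its sign patterns relative to any thresholds $r_1, \ldots, r_n$ are realized by $O(n)$ choices of $v$. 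Combining, the growth function of $\G$ is at most $O(n^{d+2})$, giving pseudo-dimension $O(d)$.

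Applying the standard uniform convergence theorem for bounded real-valued classes with finite pseudo-dimension (via symmetrization and Dudley chaining) then yields that $m = O(d \log(1/(\eps\delta))/\eps^2)$ samples suffice for the uniform deviation over $\G$ to be at most $\eps/2$ with probability at least $1-\delta$; combined with the $\eps/2$ slack from the V-shaped loss reduction, this gives the claimed uniform bound and hence the stated control on $\abs{\cdl_\K(J) - \cdl_\K(S)}$. The main obstacle is the pseudo-dimension bound: the parameter $v$ plays a dual role, appearing both inside the threshold indicators and as a linear coefficient in the multiplier $(y - v)$, so a naive product argument does not suffice. The resolution is to stratify by the discrete indicator pattern first (controlled by the VC dimension of $\threshold(\K)$) and then exploit the one-dimensional affine structure in $v$ for each fixed pattern.
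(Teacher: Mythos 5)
Your proposal is correct, and it takes a genuinely different route from the paper in the key technical step. Both you and the paper reduce to V-shaped losses (via \Cref{theorem:uv-cal} / \Cref{corollary:uv-cal}) and use translation invariance of $\K$ to rewrite $\sign_+(\kappa(p)-v)$ as $\sign_+(\kappa'(p)-1/2)$ for some $\kappa' \in \K$. Where you diverge is in handling the continuum of threshold parameters $v$. The paper discretizes $v$ to an $O(1/\eps)$-net (close both in length and in probability mass of $p$), applies a standard VC uniform-convergence bound separately for each discrete $v$, and union-bounds over the $O(1/\eps)$ net points at per-point failure probability $O(\eps\delta)$. You instead control the pseudo-dimension of the real-valued class $\G$ directly: you stratify by the ternary indicator pattern $\vec{c}$ (counting $O(n^{d+1})$ patterns via Sauer--Shelah on the two relevant VC classes), and then for each fixed pattern observe that the function values are affine in the scalar $v$, so the sign patterns against any fixed thresholds contribute only an extra factor of $O(n)$. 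This cleanly handles the dual role of $v$ (inside the indicators and as a linear multiplier), which is exactly the difficulty the paper's discretization is also sidestepping. Your argument is more structural and avoids the discretization bookkeeping; the paper's is more elementary and gives the stated sample complexity without any extra $\log d$ slack from converting a growth-function bound into a pseudo-dimension bound. One small omission: your class $\G$ should also include the $\ell_v^-$-based differences (Theorem \ref{theorem:uv-cal} approximates by combinations of both $\ell_v^+$ and $\ell_v^-$), but since these differ from the $\ell_v^+$ case only in the sign convention at the boundary $p=v$, the same VC/growth-function counting applies verbatim and this only affects constants.
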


Given this lemma, the upper bound is straightforward: we estimate $\cdl_\K(S)$ over a sufficiently large set of samples $S$, and decide to accept or reject by thresholding at $\alpha - \eps/2$. We defer both the proof of \Cref{lemma:uniform-convergence-cdl} and the derivation of the upper bound in \Cref{theorem:sample-complexity} to \Cref{app:samp-comp}. 

We now show the lower bound for auditing. Let $p_1, p_2, \dots, p_d \in [0,1]$ such that $(\sign_+(\kappa(p_i) - 1/2))_{i\in[d]}$ takes all the possible values in $\cube{d}$ for different choices of $\kappa \in \K$. By the translation invariance of $\K$, we can assume that there exist $q_1, \ldots, q_{d/2} \in [1/4, 3/4]$ such that $(\sign_+(\kappa(q_i) -1/2))_{i\in[d/2]}$ takes all the possible values in $\cube{d/2}$ for different choices of $\kappa\in \K$.\footnote{If at least $d/2$ points lie in $[0,1/2]$, then we consider $q = [p + 1/4]$ and $\kappa'(q) = \kappa([q - 1/4]) = \kappa(p)$. Else, at least $d/2$ points lie in $[1/2,1]$, so we consider $q_i = [p_i -1/4]$ and $\kappa'(q) =\kappa([q + 1/4]) = \kappa(p_i)$.}

We consider the following distributions.
    \begin{enumerate}
        \item Let $J_0$ be a distribution over $[0,1]\times\{0,1\}$ whose marginal on $[0,1]$ is the uniform distribution over the set $\{q_1,\dots,q_{d/2}\}$, and $\E_{(p,y)\sim J_0}[y | p=q_i] = q_i$. 
        \item Let $\J_1$ be a distribution over distributions that is defined as follows. To sample a distribution $J_1$ from $\J_1$, we draw $d/2$ independent random variables $y_i\sim \Be(q_i)$ for $i = 1,\dots,d/2$. The marginal of $J_1$ on $[0,1]$ is uniform over $\{q_1,\dots,q_{d/2}\}$, and $\E_{(p,y)\sim J_1}[y | p = q_i] = y_i$.
    \end{enumerate}
    It is easy to show that $J_0$ is perfectly calibrated, whereas every distribution $J_1 \in \J_1$ has $\ece(J_1) \geq 1/4$. This is the basis of a lower bound for estimating the ECE in \cite{GopalanHR24}. 
    It is not true that $\cdl_\K(J_1)$ is large for every choice of $J_1 \in \J_1$. Nevertheless, 
    we will show that $\cdl_\K(J_1)$ is likely to be large for a random $J_1$, which is sufficient for the lower bound to go through.

    \begin{lemma}
    \label[lemma]{lem:main-dist}
        We have
        \[ \Pr_{J_1 \sim \J_1}\lt[\cdl_\K(J_1) \geq \fr{8}\rt] \geq \frac{5}{6}.\]
    \end{lemma}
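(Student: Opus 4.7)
The plan is to exploit the shatterability of $\threshold(\K)$ on $\{q_1,\ldots,q_{d/2}\}$ to find, for each realization of $J_1 \sim \J_1$, a single post-processing $\kappa^* \in \K$ that strictly outperforms the identity under the single V-shaped loss $\ell_{1/2}^+$. This is exactly the loss that Corollary~\ref{corollary:uv-cal} identifies as extremal, and its discrete threshold-like structure meshes perfectly with the structure of $\threshold(\K)$.

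First I would set $b_i = 2y_i - 1 \in \{\pm 1\}$ and $s_i = \sign_+(q_i - 1/2) \in \{\pm 1\}$, and by the shatterability assumption pick $\kappa^* \in \K$ with $\sign_+(\kappa^*(q_i) - 1/2) = b_i$ for every $i \in [d/2]$. Using $y_i - 1/2 = b_i/2$ and $b_i^2 = 1$, a short direct computation yields
\[
    \cdl_\K(J_1) \;\ge\; \E_{J_1}\bigl[\ell_{1/2}^+(p,y) - \ell_{1/2}^+(\kappa^*(p),y)\bigr] \;=\; \frac{1}{d}\sum_{i=1}^{d/2}(1 - s_i b_i) \;=\; \frac{2N}{d},
\]
where $N := \bigl|\{i : s_i b_i = -1\}\bigr|$ counts the indices at which the realized outcome $y_i$ falls on the opposite side of $1/2$ from $q_i$.

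Next I would analyze $N$ as a sum of independent Bernoulli indicators. Since $y_1,\ldots,y_{d/2}$ are independent under $\J_1$, the random variables $N_i = \mathbf{1}[s_i b_i = -1]$ are independent Bernoullis, and a case check (splitting on whether $q_i$ lies above, at, or below $1/2$) shows that $\Pr[N_i = 1] = \min(q_i, 1 - q_i)$. The assumption $q_i \in [1/4, 3/4]$ forces $\min(q_i, 1-q_i) \ge 1/4$, so $\E[N] \ge d/8$.

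Finally, I would close the gap with a multiplicative Chernoff lower-tail bound at $\delta = 1/2$:
\[
    \Pr\bigl[N < \tfrac12 \E[N]\bigr] \;\le\; \exp\!\bigl(-\tfrac{1}{8}\E[N]\bigr) \;\le\; \exp(-d/64),
\]
which is at most $1/6$ once $d$ exceeds an absolute constant. On the complementary event $N \ge \E[N]/2 \ge d/16$, so $\cdl_\K(J_1) \ge 2N/d \ge 1/8$, which is the claim. For $d$ below this absolute constant the intended sample-complexity lower bound $\Omega(\sqrt{d})$ is vacuous, so no separate argument is needed there. The most delicate step is the first one: choosing the V-shaped loss and the shattering pattern $c_i = b_i$ so that the loss gap $\ell_{1/2}^+(p,y) - \ell_{1/2}^+(\kappa^*(p),y)$ collapses to the clean indicator $\mathbf{1}[s_i b_i = -1]$; once this reduction is in place the concentration argument is entirely routine.
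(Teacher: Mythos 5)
Your proof is correct and is essentially the same argument as the paper's: you use the same loss $\ell_{1/2}^+$, the same shattering-based choice of $\kappa^*$ matching $\sign_+(\kappa^*(q_i)-1/2)$ to the realized label sign, and a concentration bound on the same underlying quantity (your $N$ is just an algebraic repackaging of the paper's $a_1 - a_2$, with $a_1 = 1/2$ deterministic and $a_2 = 1/2 - 2N/d$, so Chernoff on $N$ is equivalent to the paper's Hoeffding bound on $a_2$). Your remark that the $\Omega(\sqrt d)$ lower bound is vacuous for $d$ below an absolute constant correctly addresses the same implicit large-$d$ assumption that the paper also makes.
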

    \begin{proof}
     Consider the proper loss function 
    \[ \ell(p,y) = \ell_{1/2}^+(p,y) = -(y-1/2)\sign_+(p-1/2).\] 
    We will lower bound $\cdl_\K$ by
    \begin{align}
        \cdl_{\ell, \K}(J_1) &= \sup_{\kappa\in\K} \lt(\frac{2}{d}\sum_{i=1}^{d/2} \lt(y_i - \fr{2}\rt)\sign_+\lt(\kappa(q_i) - \fr{2}\rt) - \sign_+\lt(q_i - \fr{2}\rt)\rt)\notag\\
        &= \underbrace{ \sup_{\kappa\in\K} \lt(\frac{2}{d}\sum_{i=1}^{d/2}\lt(y_i - \fr{2}\rt)\sign_+\lt(\kappa(q_i) - \fr{2}\rt)\rt)}_{a_1} - \underbrace{\frac{2}{d}\sum_{i=1}^{d/2}\sign_+\lt(q_i - \fr{2}\rt)}_{a_2}  \label{eq:cdl-sp}
    \end{align}
    To lower bound $a_1$, we choose $\kappa\in \K$ such that 
    \[ \forall i, \ \sign_+\lt(\kappa(q_i) - \fr{2}\rt) = \sign_+\lt(y_i-\fr{2}\rt).\] 
    There exists such a $\kappa$ for any realization of $(y_i)_i$ due to the choice of $(q_i)_i$.\footnote{This is in fact the $\kappa$ that maximizes $a_1$.} Since $y_i \in \zo$, this ensures that
    \begin{align} 
    \label{eq:bound-a1}
    a_1 \geq \frac{2}{d}\sum_{i=1}^{d/2} \lt(y_i - \fr{2}\rt) \sign_+\lt(\kappa(q_i) - \fr{2}\rt) = \frac{2}{d}\sum_{i=1}^{d/2} \lt|y_i - \fr{2}\rt|  = \frac{1}{2}.
    \end{align}
    Next we show an upper bound on $a_2$. Over the random choice of $y_i\sim \Be(q_i)$, since $\E[y_i|q_i] = q_i$, 
    \begin{align*}
         \E[a_2] &= \frac{2}{d}\sum_{i=1}^{d/2} \lt(q_i - \fr{2}\rt) \sign_+\lt(q_i - \fr{2}\rt)\\
                &= \frac{2}{d}\sum_{i=1}^{d/2} \lt|q_i - \fr{2}\rt|\\
                & \leq \fr{4}
    \end{align*}
    where the last step uses the fact that $q_i \in [1/4, 3/4]$.
    By the Hoeffding bound, for $d$ larger than some constant, $\Pr[a_2 \geq  3/8] \leq 1/6$.   Therefore, with probability $5/6$, by Equations \eqref{eq:cdl-sp} and \eqref{eq:bound-a1}, 
    \[ \cdl_{\K}(J_1) \geq \fr{2} - \frac{3}{8} = \fr{8}.\]
    \end{proof}

    We now complete the proof of the lower bound in \Cref{theorem:sample-complexity}, using a birthday paradox argument, as in \cite{GopalanHR24}.  
    \begin{proof}[Proof of \Cref{theorem:sample-complexity}, Lower bound.]

    Suppose that we have a $(1/8,0)$-auditor $\A$ for $\cdl_{\K}$ with sample complexity at most $m$. Since the distribution $J_0$ is perfectly calibrated 
    \begin{align*}
        \pr_{S_0\sim J_0^m}[\A(S_0) = \accept] \ge 2/3\,.
    \end{align*}
    On the other hand, for $J_1 \in \J_1$, \Cref{lem:main-dist} implies that $\cdl_{\K}(J_1) \geq 1/8$  with probability $5/6$.
    Conditioned on this event, $\A$ being a $(1/8, 0)$ auditor for $\cdl_\K$, $\A(S_1)$ must reject with probability at least $2/3$. This means that
    \begin{equation}
        \pr_{\substack{J_1 \sim \J_1 \\ S_1 \sim J_1^m }}[\A(S_1) = \accept] \le \frac{1}{3}\cdot \frac{5}{6} + \frac{1}{6} < \frac{1}{2}\,. \label{equation:reject-case}
    \end{equation}
    But this means that the auditor $\A$ satisfies the condition
      \begin{equation}
        \label{eq:diff-accept}
        \Bigr|\pr_{S_0 \sim J_0^{m}}[\A(S_0) = \accept] - \pr_{\substack{J_1 \sim \J_1 \\ S_1 \sim J_1^m }}[\A(S_1) = \accept]\Bigr| \ge 1/6\,,
    \end{equation}
    where the probabilities are over the random variables $S_0, J_1, S_1$ and any potential randomness of $\A$. 
     If we condition on the events that all the elements of $S_0$ and all the elements of $S_1$ are distinct, then the corresponding conditional distributions are identical. Therefore, the total variation distance between the distribution of $S_0$ and the distribution of $S_1$ is bounded by the collision probability, which is smaller than $1/6$ unless $m = \Omega(\sqrt{d})$.
    \end{proof}

\paragraph{Lower bound for Lipschitz post-processings.}

\Cref{theorem:sample-complexity} rules out efficient CDL estimation for the class of Lipschitz post-processings. 
\begin{corollary}\label[corollary]{corollary:lipschitz-impossibility}
    Let $\Lip \subset \Kall$ denote the class of $1$-Lipschitz post-processings. There is no $(1/8, 0)$-auditor for $\cdl_{\Lip}$ with finite sample complexity.
\end{corollary}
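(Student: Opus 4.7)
The plan is to invoke the lower bound from \Cref{theorem:sample-complexity}, which implies that any $(1/8, 0)$-auditor for $\cdl_\K$ requires $\Omega(\sqrt{d})$ samples where $d = \vc(\threshold(\K))$. Thus, it suffices to show that $\vc(\threshold(\Lip)) = \infty$, which makes the sample complexity bound vacuously infinite for every fixed sample size.

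To establish that $\vc(\threshold(\Lip)) = \infty$, I will exhibit, for each integer $d \ge 1$, a set of $d$ points in $[0,1]$ shattered by $\threshold(\Lip)$. A natural choice is the equally spaced points $p_i = i/(d+1)$ for $i = 1, \ldots, d$, which have pairwise separation at least $1/(d+1)$. Given an arbitrary labeling $b \in \cube{d}$, the task reduces to constructing a $1$-Lipschitz function $\kappa : [0,1] \to [0,1]$ with $\sign_+(\kappa(p_i) - 1/2) = b_i$ for every $i$.

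The construction is straightforward: fix a small $\delta \in (0, 1/(2(d+1)))$, set $\kappa(p_i) = 1/2 + b_i \delta$ for each $i$, and define $\kappa$ on the rest of $[0,1]$ by piecewise linear interpolation (with $\kappa$ extended as constant to the left of $p_1$ and right of $p_d$). Then $\kappa(p_i) \ge 1/2$ exactly when $b_i = +1$ and $\kappa(p_i) < 1/2$ exactly when $b_i = -1$, so the thresholded function realizes the prescribed labeling. Between consecutive points $p_i, p_{i+1}$, the slope of the linear segment is bounded in absolute value by $2\delta/(1/(d+1)) = 2\delta(d+1) < 1$, so $\kappa$ is $1$-Lipschitz. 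Moreover, by our choice of $\delta$, the values stay comfortably in $[0,1]$.

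There is no substantive obstacle here. The key conceptual point, which makes the proof essentially immediate, is that unlike a concept class such as intervals (whose upper thresholds have bounded VC dimension), $1$-Lipschitz post-processings can traverse the threshold $1/2$ with arbitrarily small vertical excursions, and hence can independently realize either side of $1/2$ at any collection of points with positive minimum separation. Thus $\threshold(\Lip)$ shatters arbitrarily large finite sets, giving $d = \infty$ in \Cref{theorem:sample-complexity} and proving the corollary.
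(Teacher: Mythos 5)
Your proposal is correct and matches the paper's proof in all essentials: both establish $\vc(\threshold(\Lip)) = \infty$ by placing grid points in $[0,1]$, assigning $\kappa$ values within a small $\delta$ of $1/2$ at those points according to the desired labeling, and interpolating linearly so that the small vertical excursions relative to the grid spacing keep $\kappa$ $1$-Lipschitz, then invoking the $\Omega(\sqrt{d})$ auditing lower bound of \cref{theorem:sample-complexity}. The only cosmetic difference is the parametrization of the grid.
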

This follows because  $\threshold(\Lip)$ has infinite VC dimension, this can be seen taking $S$ to be a grid on the interval $[0,1]$ of multiples of $2\gamma$, and observing that the functions which take values in $(1/2 \pm \gamma)$ on the grid points and interpolate linearly between them are Lipschitz. The thresholds of these functions shatter the set $S$. We now take $\gamma \to 0$.

\paragraph{Upper bound for Generalized monotone post-processings.}
\Cref{theorem:sample-complexity} implies a sample-efficient algorithm to estimate $\cdl_{\M_r}$ for the class $\M_r$ of generalized monotone functions. This follows from \cref{proposition:properties-of-generalized-monotone} which bounds their VC dimension. 

\begin{corollary}\label[corollary]{corollary:gm-upper}
    For all $r\geq 1$, $\alpha \in [0,1]$ and $\eps < \alpha$,  there is an $(\alpha, \alpha - \eps)$-tester for $\cdl_{\M_r}$ with sample complexity $O(r\log(1/\eps)/\eps^2)$.
\end{corollary}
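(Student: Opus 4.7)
The plan is a direct application of the sample complexity upper bound in Theorem 4.1 to the class $\M_r$, so the only real work is to show that $\vc(\threshold(\M_r)) = O(r)$. First, observe that $\M_r$ is a valid post-processing class: it contains the identity (whose superlevel sets are single intervals $[v,1]$, hence in $\M_1 \subseteq \M_r$), and it is translation invariant because the axis shifts in Definition 2.5 do not increase the number of intervals in any superlevel set. Thus Theorem 4.1 is applicable and yields an $(\alpha, \alpha-\eps)$-tester with sample complexity $O(\vc(\threshold(\M_r)) \log(1/\eps) / \eps^2)$.

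Next I bound $\vc(\threshold(\M_r))$. By Definition 2.3, for any $\kappa \in \M_r$ the superlevel set $\kappa^{-1}([1/2, 1])$ is a union of at most $r$ disjoint intervals. The thresholded function $p \mapsto \sign_+(\kappa(p) - 1/2)$ equals $+1$ on $\kappa^{-1}((1/2,1])$ and $-1$ off it (with the $\sign_+$ convention handling the boundary point $\kappa(p) = 1/2$). Either way, the $+1$-region of any $h \in \threshold(\M_r)$ is a union of at most $r$ intervals in $[0,1]$. It is a standard fact that the class of $\pm 1$-indicators of unions of at most $r$ intervals on $\R$ has VC dimension $O(r)$ (concretely, $2r$): a shattered set of points must alternate in label enough times, and each alternation boundary requires a new interval endpoint, giving at most $2r$ shatterable points. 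Therefore $\vc(\threshold(\M_r)) = O(r)$.

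Combining this VC bound with the tester from Theorem 4.1 gives sample complexity $O(r \log(1/\eps)/\eps^2)$, as claimed. The only nontrivial step is the VC bound, which is a classical and routine computation for unions of intervals, so no serious obstacle is expected; the work is essentially to verify validity of $\M_r$ and invoke the general theorem. If the paper's referenced Proposition on $\M_r$ already records $\vc(\threshold(\M_r)) = O(r)$, the argument reduces to a one-line citation of Theorem 4.1.
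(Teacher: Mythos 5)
Your proposal is correct and matches the paper's proof: the paper likewise invokes \Cref{theorem:sample-complexity} together with the VC bound $\vc(\threshold(\M_r)) = 2r$, which it records as \Cref{proposition:properties-of-generalized-monotone}. The only cosmetic difference is that the paper computes the VC dimension exactly ($2r$) rather than merely $O(r)$, but for the corollary the bound you give suffices.
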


This bound by itself does not guarantee computational efficiency. But it can be made computationally efficient, as shown in  \cref{theorem:test-audit-generalized-monotone}.

\subsection{Other Families of Loss Functions}

To conclude this section, we briefly discuss the role played by the class $\L^*$ in our auditing lower bound. For example, one might ask whether our auditing lower bound continues to hold if we replace the class $\L^*$ with natural subclasses that exclude the $\ell_{1/2}^+$ loss, which played a key role in our proof. Here, we will investigate two such relaxations, corresponding to strongly proper loss functions, and loss functions that are Lipschitz in the predictions $p$. Note that V-shaped losses do not satisfy either of these conditions.

\paragraph{Lower bounds for strongly proper losses.}

We consider the class $\L^*_{\mu\mathsf{-sc}}$ of losses $\ell \in \L^*$ whose associated concave function $\varphi(p) = \E_{y \sim \Be(p)} \ell(p, y)$ satisfies \emph{$\mu$-strong concavity} for some $\mu > 0$. To state the definition, recall that we write $\ell(p, q) = \E_{y \sim \Be(q)}\ell(p, y)$.

\begin{definition}
\label[definition]{def:proper-sc}
    We say a function $f : [0, 1] \to \R$ is \emph{$\mu$-strongly concave} if for all $x,y,\lambda \in [0, 1]$,
    \[
        f\bigl(\lambda x + (1-\lambda)y\bigr) \ge \lambda f(x) + (1-\lambda)f(y) + \lambda(1-\lambda) \cdot \frac{\mu}{2}(x-y)^2.
    \]
    We let $\L^*_{\mu\mathsf{-sc}}$ denote the class of \emph{strongly proper} losses $\ell \in \L^*$, meaning that $\varphi(p) = \ell(p, p)$ is $\mu$-strongly concave in $p$. We let $\cdl_{\L^*_{\mu\mathsf{-sc}},\K}$ denote the similarly restricted version of $\cdl_\K$.
\end{definition}

While $\varphi(p)$ is a concave function for any $\ell \in \L^*$, \cref{def:proper-sc} goes beyond this by requiring \emph{strong concavity}. For example, $\varphi(p) = p(1-p)$ is a $\mu$-strongly concave function with $\mu = 2$, corresponding to the squared loss $\ell_{\mathsf{sq}}(p, y) = (y-p)^2$. We now study the testing and auditing of CDL with respect to this restricted class of proper losses, showing that the main results of this section continue to hold.

\begin{corollary}[Sample Complexity for $\L^*_{\mu\mathsf{-sc}}$]
\label[corollary]{thm:test-audit-sc}
    Let $\K$ be a valid post-processing class, and let $\vc(\threshold(\K)) = d$. Then, the following are true.
    \begin{enumerate}
        \item For any $\alpha,\eps, \mu\in(0,1)$, there is an $(\alpha,\alpha-\eps)$-tester for $\cdl_{\L^*_{\mu\mathsf{-sc}},\K}$ whose sample complexity is $O(d \log(1/\eps)/\eps^2)$.
        \item For any $\mu \in (0, 1/16)$, any $(1/8 -2\mu, 0)$-auditor for $\cdl_{\L^*_{\mu\mathsf{-sc}},\K}$ requires $\Omega(\sqrt{d})$ samples.
    \end{enumerate}
\end{corollary}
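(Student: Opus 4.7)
The plan is to adapt the proof of \Cref{theorem:sample-complexity} to the restricted loss class $\L^*_{\mu\mathsf{-sc}}$, with only minor modifications. The upper bound is essentially immediate: since $\L^*_{\mu\mathsf{-sc}} \subseteq \L^*$, the uniform convergence guarantee of \Cref{lemma:uniform-convergence-cdl} applies verbatim to the restricted supremum, and so the same empirical tester (compute $\cdl_{\L^*_{\mu\mathsf{-sc}}, \K}$ on the sample, threshold at $\alpha - \eps/2$) works with no change in sample complexity. The only thing to note is that a tester whose output depends on the empirical $\cdl_{\L^*_{\mu\mathsf{-sc}}, \K}$ needs to be able to reason over the restricted loss class, but this is unproblematic for the information-theoretic statement.

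The real work is the lower bound, whose obstacle is that the original argument crucially used the V-shaped loss $\ell_{1/2}^+$, which is \emph{not} strongly proper. My plan is to perturb this loss by mixing in a small amount of the squared loss $\ell_{\mathsf{sq}}(p,y) = (y-p)^2$ so as to enforce $\mu$-strong concavity, while keeping the overall perturbation small enough that the gap in the lower bound only degrades by $O(\mu)$. Concretely, I would define
\[
    \ell'(p,y) \;=\; (1-\mu)\,\ell_{1/2}^+(p,y) \;+\; \mu\,\ell_{\mathsf{sq}}(p,y).
\]
The associated $\varphi'(p) = (1-\mu)(-|p-\tfrac{1}{2}|) + \mu \cdot p(1-p)$ is the sum of a concave function and a $2\mu$-strongly concave one, hence $2\mu$-strongly concave, which certainly makes $\ell' \in \L^*_{\mu\mathsf{-sc}}$. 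A short calculation using $\partial \ell_{1/2}^+(p) = -\sign_+(p-\tfrac{1}{2})$ and $\partial \ell_{\mathsf{sq}}(p) = 1-2p$ shows $|\partial \ell'(p)| \le 1$, so $\ell' \in \L^*$, and the convex combination of proper losses is proper.

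With $\ell'$ in hand, I would rerun the construction from the proof of \Cref{theorem:sample-complexity} using the same distributions $J_0$ and $\J_1$, and the same choice of $\kappa^* \in \K$ (the one matching $\sign_+(\kappa^*(q_i) - \tfrac{1}{2})$ to $\sign_+(y_i - \tfrac{1}{2})$). For this $\kappa^*$, linearity of expectation gives
\[
    \E[\ell'(p,y) - \ell'(\kappa^*(p),y)] \;=\; (1-\mu)\,\E[\ell_{1/2}^+(p,y) - \ell_{1/2}^+(\kappa^*(p),y)] \;+\; \mu\,\E[\ell_{\mathsf{sq}}(p,y) - \ell_{\mathsf{sq}}(\kappa^*(p),y)].
\]
The first term is $\ge 1/8$ with probability at least $5/6$ over $J_1 \sim \J_1$ by \Cref{lem:main-dist}, while the second term is trivially bounded below by $-1$ since $|\ell_{\mathsf{sq}}(p,y) - \ell_{\mathsf{sq}}(\kappa^*(p),y)| \le 1$. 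This gives $\cdl_{\L^*_{\mu\mathsf{-sc}},\K}(J_1) \ge (1-\mu)/8 - \mu \ge 1/8 - 2\mu$ with probability $5/6$, which is the required replacement for the $1/8$ bound in the original proof. The birthday-paradox / total variation argument in the remainder of the proof of \Cref{theorem:sample-complexity} then goes through unchanged, yielding the $\Omega(\sqrt{d})$ sample lower bound for $(1/8 - 2\mu, 0)$-auditors. The main subtlety is simply verifying the perturbed loss lies in $\L^*_{\mu\mathsf{-sc}}$ and tracking the $O(\mu)$ slack; everything else is bookkeeping.
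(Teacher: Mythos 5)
Your proof is correct and follows essentially the same approach as the paper: both perturb the $\ell_{1/2}^+$ loss by mixing in a small multiple of the squared loss to enforce $\mu$-strong concavity while degrading the lower bound gap by only $O(\mu)$. The only cosmetic difference is that you re-run the explicit construction of \Cref{lem:main-dist} with the perturbed loss in hand, whereas the paper packages the same calculation as a black-box reduction showing that any $(\alpha-2\mu,\beta)$-auditor for $\cdl_{\L^*_{\mu\mathsf{-sc}},\K}$ is automatically an $(\alpha,\beta)$-auditor for $\cdl_\K$; the substance and slack accounting are identical.
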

We defer the proof to \Cref{app:samp-comp}.

\paragraph{Lipschitz losses and Smooth Calibration Error.}\label{section:appendix-smoothed}

\Cref{corollary:lipschitz-impossibility} shows that $\cdl_{\K}$ is intractable when $\K$ consists of Lipschitz post-processings. If we further restrict the proper losses to be Lipschitz, we will show that CDL is tightly characterized by the smooth calibration error \cite{kakadeF08, BlasiokGHN23}, which is known to be both information-theoretically and computationally tractable.  

We define our families of Lipschitz losses and post-processings.
\begin{definition}
\label[definition]{definition:smooth-measures}
    Let $\L_{\Lip} \subset \L^*$  denote the family of all losses $\ell$ such that $\ell(p,y)$ is $1$-Lipschitz in the first argument. Let $\Lip(2)$ denote the family of post-processing functions $\kappa :[0,1]\to [0,1]$ that are $2$-Lipschitz.
\end{definition}

We consider $2$-Lipschitz rather than  $1$-Lipschitz post-processings  for technical reasons: in order to capture functions of the form $[p+w(p)]_0^1$, where $w$ is $1$-Lipschitz. This is convenient in order to provide a characterization of $\cdl$ in terms of $\smce$. But note that allowing more post-processings makes the positive result we will show more powerful, moreover the lower bound of \Cref{corollary:lipschitz-impossibility} also holds for the class of $2$-Lipschitz post-processings.

    Recall that the \emph{smooth calibration error} is
    \[
        \smce(J) = \sup_{w} \E[w(p) (y-p)],
    \]
    where the supremum is taken over all $w:[0,1]\to [-1,1]$ that are $1$-Lipschitz.

\begin{theorem}
\label[theorem]{thm:cdl-smooth}
    For any distribution $J$ over $[0,1]\times \{0,1\}$, we have that:
    \[
        \frac{1}{2}(\smce(J))^2 \le \cdl_{\L_{\Lip}, \Lip(2)}(J) \le 6\cdot \smce(J)
    \]
\end{theorem}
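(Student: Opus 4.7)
Both inequalities follow the outcome-indistinguishability template sketched after \cref{thm:cdl-vs-ce-intro}, specialized to Lipschitz losses. For the upper direction, fix $\ell \in \L_\Lip$, $\kappa \in \Lip(2)$, and introduce an auxiliary label $\tilde y$ with $\tilde y \mid p \sim \Be(p)$. Using the identity $\ell(p,y) = \ell(p,0) + y\,\partial\ell(p)$ for $y \in \{0,1\}$, where $\partial\ell(p) = \ell(p,1) - \ell(p,0)$, a three-term telescoping yields
\[
\E[\ell(p,y) - \ell(\kappa(p),y)] = \E\bigl[(y - \tilde y)(\partial\ell(p) - \partial\ell(\kappa(p)))\bigr] + \E[\ell(p,\tilde y) - \ell(\kappa(p),\tilde y)].
\]
The second term is nonpositive by properness of $\ell$ (as $p$ is Bayes optimal under $\tilde y \mid p$), and conditioning on $p$ with $\E[\tilde y \mid p] = p$ turns the first term into $\E[w(p)(y-p)]$ for $w(p) := \partial\ell(p) - \partial\ell(\kappa(p))$. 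Because $\ell$ is $1$-Lipschitz in $p$, both $\ell(\cdot,0)$ and $\ell(\cdot,1)$ are $1$-Lipschitz, so $\partial\ell$ is $2$-Lipschitz and bounded by $1$ (the latter since $\ell \in \L^*$); composing with the $2$-Lipschitz $\kappa$ makes $\partial\ell \circ \kappa$ $4$-Lipschitz, so $w$ is $6$-Lipschitz and bounded by $2$. Thus $w/6$ is $1$-Lipschitz and bounded by $1/3$, hence a valid weight for $\smce$, and $\E[w(p)(y-p)] \le 6\,\smce(J)$.

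\paragraph{Plan for the lower bound.} Let $w : [0,1] \to [-1,1]$ be a $1$-Lipschitz weight with $\E[w(p)(y-p)] = \smce(J) =: \eta$. Define $\kappa(p) = [p + t w(p)]_0^1$ with $t = \eta \in [0,1]$, and take $\ell(p,y) = \tfrac{1}{2}(y-p)^2$, which is proper, $1$-Lipschitz in $p$ (since $|\partial_p \ell| = |y-p| \le 1$), and has $|\partial\ell(p)| = |1 - 2p|/2 \le 1/2$, so $\ell \in \L_\Lip$. The unclipped map $\kappa_u(p) = p + tw(p)$ has derivative in $[1-t,1+t] \subseteq [0,2]$, and clipping preserves the Lipschitz constant, so $\kappa \in \Lip(2)$. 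A direct expansion of the square gives
\[
\E[\ell(p,y) - \ell(\kappa_u(p),y)] = t\,\E[w(p)(y-p)] - \tfrac{t^2}{2}\,\E[w(p)^2] \ge t\eta - \tfrac{t^2}{2} = \tfrac{\eta^2}{2},
\]
using $|w| \le 1$ and $t = \eta$. Finally, since $y \in \{0,1\} \subseteq [0,1]$, non-expansiveness of projection onto $[0,1]$ gives $(y - \kappa(p))^2 \le (y - \kappa_u(p))^2$ pointwise, so clipping only decreases $\ell(\kappa(p),y)$ and the gap is only larger: $\E[\ell(p,y) - \ell(\kappa(p),y)] \ge \eta^2/2 = \smce(J)^2/2$.

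\paragraph{Main obstacle.} Neither direction is technically deep; the real subtlety is keeping Lipschitz constants finite at every step. The upper bound breaks without the Lipschitz assumption on $\ell$: for V-shaped losses like $\ell_{1/2}^+$, $\partial\ell$ is a step function and no quantitative bound by $\smce$ is possible, consistent with the VC lower bound of \cref{corollary:lipschitz-impossibility}. The lower bound needs a proper loss that behaves nicely under clipping to $[0,1]$ under binary labels; the squared loss is the canonical such choice, and the clipping argument is what lets us replace the unbounded linear perturbation $p + t w(p)$ by a genuine element of $\Lip(2)$ without losing the quadratic improvement.
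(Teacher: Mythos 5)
Your proof is correct and follows essentially the same route as the paper: the upper bound is the paper's decision‑OI/loss‑OI argument (your synthetic‑label telescoping is just a probabilistic rewriting of \cref{thm:loss-oi}) yielding a $6$‑Lipschitz composite weight $\partial\ell - \partial\ell\circ\kappa$, and the lower bound is the paper's construction via $\kappa(p) = [\,p + \smce(J)\,w(p)\,]_0^1$ paired with the squared loss. You supply a few details the paper leaves implicit (the $|w|\le 2$ bound justifying the rescaling to a valid $\smce$ weight, and the non‑expansiveness of clipping toward $y\in\{0,1\}$), but the structure and the constants match.
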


We present the proof in \Cref{app:samp-comp}. The upper bound follows from the loss OI lemma of \cite{gopalan2023loss} (\Cref{thm:loss-oi}), while the lower bound follows from arguments in \cite{when-does}. Since $\smce$ can be estimated efficiently from samples \cite{BlasiokGHN23}, we have an efficient $(\alpha, c\alpha^2)$-auditor for $\cdl_{\L_{\Lip}, \Lip(2)}$. We note that the restriction to Lipschitz losses is a significant one, which does exclude important losses. For instance, consider the $\ell_1$ loss: $\ell_1(p, y) = |p -y|$. It is not proper, but it is Lipschitz in $p$. If we convert it to a proper loss by composing it with the best response, we get the $\ell_{1/2}$ loss, which is not Lipschitz in $p$. This happens because the best-response $\bm{1}[p \geq 1/2]$ is non-Lipschitz. This is not uncommon, even when the original loss is Lipschitz, since decision making in both theory and practice often involves sharp thresholds.

\section{Relation to Weight-Restricted Calibration}
\label[section]{sec:cdl-vs-ce}

In this section, we establish a tight relationship between $\cdl_\K$, the calibration decision loss for a class $\K$, and a certain weight-restricted calibration error measure, a notion we defined in \cref{def:weighted-ce}. The particular weight class we consider will involve thresholds of $\K$.

\subsection{The Characterization}

In order to state our general characterization, recall from \cref{def:thr} that $\threshold(\K)$ denotes the class of \emph{upper thresholds} of a valid post-processing class $\K$. In this section, we will require a slight modification to the class $\threshold(\K)$:

\begin{definition}[Modified Thresholds]
\label[definition]{def:modified-thr}
Given a valid post-processing class $\K$, let
\[
    \threshold'(\K) = \threshold(\K) \cup \Bigl\{p \mapsto -\sign_+(p - v) \;\Big|\; v \in \R \Bigr\}.
\]
\end{definition}

While the class $\threshold(\K)$ contains only the \emph{upper} thresholds of functions in $\K$, the class $\threshold'(\K)$ also includes all \emph{lower} thresholds of the identity function. It also contains upper thresholds of the identity function due to its inclusion of $\threshold(\K)$, since $\K$ is assumed to be translation invariant and contain the identity. The main result of this section is as follows:

\begin{theorem}
\label[theorem]{thm:cdl-vs-ce}
    If $\K$ is a valid post-processing class, then \[\ce_{\threshold'(\K)}(J)^2 \lesssim \cdl_\K(J) \lesssim \ce_{\threshold'(\K)}(J).\]
\end{theorem}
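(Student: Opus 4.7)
The plan is to follow the simulated-outcomes telescoping outlined in the introduction. By \cref{corollary:uv-cal} it suffices to consider V-shaped losses $\ell_v^+$. Fix $\kappa \in \K$, $v\in[0,1]$, and introduce $\tilde y \sim \Be(p)$ conditional on $p$. I would write
\begin{align*}
\E\bigl[\ell_v^+(p,y) - \ell_v^+(\kappa(p),y)\bigr]
 &= \E\bigl[\ell_v^+(p,y) - \ell_v^+(p,\tilde y)\bigr] \\
 &\quad + \E\bigl[\ell_v^+(p,\tilde y) - \ell_v^+(\kappa(p),\tilde y)\bigr] \\
 &\quad + \E\bigl[\ell_v^+(\kappa(p),\tilde y) - \ell_v^+(\kappa(p),y)\bigr].
\end{align*}
The middle term is $\le 0$ because $p$ is Bayes-optimal against $\tilde y$. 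For the first and third terms, since $\ell_v^+$ is V-shaped I would compute the discrete partial derivative: $\ell_v^+(q,1)-\ell_v^+(q,0) = -\sign_+(q-v)$. This turns the third term into $\E[\sign_+(\kappa(p)-v)(y-\tilde y)] = \E[\sign_+(\kappa(p)-v)(y-p)]$, which is at most $\ce_{\threshold(\K)}(J)$ after appealing to translation invariance of $\K$ to replace the cutoff $v$ by $1/2$. Similarly, the first term equals $\E[-\sign_+(p-v)(y-p)]$, a weight function of the form present in $\threshold'(\K) \setminus \threshold(\K)$. Summing gives $\cdl_\K(J) \le 2\,\ce_{\threshold'(\K)}(J)$. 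I should be slightly careful at $q = v$, where I would use that $\ell_v^+$ and $\ell_v^-$ agree except at this singleton, and invoke \cref{corollary:uv-cal} with whichever sign is convenient.

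\textbf{Lower bound.} Given a weight $w \in \threshold'(\K)$ with $\E[w(p)(y-p)] = \delta$, the task is to produce a $\kappa^\star \in \K$ and a loss $\ell \in \L^*$ whose CDL gain is $\Omega(\delta^2)$. I would split by the type of $w$. If $w(p) = \sign_+(\kappa(p) - 1/2)$ for some $\kappa \in \K$, I consider the V-shaped loss $\ell_{1/2}^+$. The CDL gain against any $\kappa' \in \K$ telescopes, via the identity $\ell_{1/2}^+(q,y) = -\sign_+(q-1/2)(y-1/2)$, into
\[
\E\bigl[(\sign_+(\kappa'(p)-1/2) - \sign_+(p-1/2))(y-1/2)\bigr].
\]
My target is to pick $\kappa'$ so that this quantity captures the mass correlation witnessed by $w$. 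For the case $w(p) = -\sign_+(p-v)$, the improvement is instead controlled by the identity: shifting the prediction $p$ by a small amount in a direction suggested by $w$ (formalized as a post-processing $\kappa'(p) = [p + s]_0^1$, which lies in $\K$ by translation invariance) moves the loss by an amount quadratic in the calibration gap, using strong concavity of an appropriate $\varphi$ or, equivalently, the ``closer is better'' \cref{thm:proper-losses-partial-improvement}.

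\textbf{From threshold to quadratic improvement.} The remaining obstacle is going from the single cutoff that $w$ sees to a genuinely improving post-processing; a direct use of $\kappa$ itself yields an $O(\delta)$ improvement against some loss, but only an $O(\delta^2)$ guarantee per the theorem. Following the sketch for $\M_+$, I would partition $[0,1]$ into $m \approx 1/\sqrt{\delta}$ contiguous buckets under the marginal distribution of $p$ and, within each bucket, locally set $\kappa'$ to match the sign pattern dictated by $w$ at a scale proportional to the bucket length. For each V-shaped threshold $v$ I then bound the loss improvement bucket-by-bucket by a product of bucket length and mass, contributing at most $O(1/m)$ in total, while the slack from the single error bucket where $\kappa'$ crosses $v$ contributes $O(m\delta)$; optimizing $m = \Theta(1/\sqrt{\delta})$ yields the $\Omega(\delta^2)$ bound. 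The main technical obstacle is implementing this partition argument for an arbitrary valid $\K$ rather than just $\M_+$: in the general case the level-set geometry of $\kappa \in \K$ can be complicated, so I would need the partition to be driven by the distribution of $p$ alone (independent of $\K$) and then invoke the translation invariance of $\K$ to realize the required per-bucket corrections as a single function in $\K$. This bucket-plus-translation combination is where I expect the proof to be most delicate, and it is what ultimately forces the inclusion of the lower-threshold weights $-\sign_+(p-v)$ in $\threshold'(\K)$.
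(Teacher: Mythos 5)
Your upper-bound argument is correct and is essentially the paper's proof: the telescoping through simulated outcomes $\tilde y \sim \Be(p)$ is precisely what the paper's ``decision OI'' lemma (\cref{thm:loss-oi}) encapsulates in one line, and the two derivations yield the same final expression $\E\bigl[(\sign_+(\kappa(p)-v) - \sign_+(p-v))(y-p)\bigr] \le 2\,\ce_{\threshold'(\K)}(J)$.

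The lower bound, however, has a genuine gap. Your plan is to start from a weight $w$ witnessing $\ce_{\threshold'(\K)}(J) = \delta$ and \emph{exhibit a single} $\kappa^\star \in \K$ and $\ell \in \L^*$ with gain $\Omega(\delta^2)$, built by ``within each bucket, locally set[ting] $\kappa'$ to match the sign pattern dictated by $w$'' and then ``invok[ing] the translation invariance of $\K$ to realize the required per-bucket corrections as a single function in $\K$.'' That last step does not go through: translation invariance (\cref{def:valid-post}) only provides global shifts $\kappa_{s,t}(p) = [\kappa([p+s]_0^1) + t]_0^1$; it gives no way to apply different corrections on different buckets simultaneously. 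For a class like $\K = \M_+$, any piecewise-assembled ``per-bucket'' modification would typically break monotonicity and leave $\K$. Your arithmetic ($1/m + m\delta$ with $m = \Theta(1/\sqrt{\delta})$ giving $\sqrt{\delta}$) is in fact the bound one gets on the calibration error \emph{given} $\cdl_\K = \delta$, which is the other logical direction — this suggests the plan and the computation have drifted apart.

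The paper resolves this by working entirely in the direction ``small $\cdl_\K$ implies small $\ce_{\threshold'(\K)}$'' and, crucially, never trying to find a single witness $\kappa^\star$. Instead, it chooses $m$ cutoff points and, for each consecutive pair, applies the $\cdl_\K \le \eps$ hypothesis with a \emph{different} post-processing (a translation $\kappa_t(p) = [p+t]_0^1$ in the small-interval lemma, or a tailored $\kappa'(p) = [\kappa([p]_a^b) + b - v]_0^1$ for upper-threshold weights) to extract an $O((b-a)/m + \eps)$ bound on the calibration error over that one subinterval, then sums. The supremum defining $\cdl_\K$ is used $m$ times, once per subinterval; no single $\kappa$ must simultaneously capture everything. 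You also do not address the harder half of the argument, corresponding to \cref{thm:upper-ce-bound}: when $w(p) = \bm{1}[\kappa(p) \ge v]$ for an arbitrary $\kappa \in \K$, the set $S \cap (c_{i-1}, c_i)$ can be geometrically complicated and the paper needs a three-case analysis (interval, crossing interval with endpoints in the right configuration, and the remaining reduction) before the small-interval machinery applies. Your outline treats only the upper-threshold-of-identity weights and a vague ``strong concavity'' step, which is not enough to cover these cases.
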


Before discussing the proof of \cref{thm:cdl-vs-ce}, we first present a couple of examples illustrating its use.
Consider $\K = \M_+$, the class of monotonically nondecreasing post-processing functions. In this setting, the theorem implies that $\cdl_{\M_+}$ corresponds (up to quadratic equivalence) to the Interval-restricted calibration error $\propce$ \cite{OKK25, RosselliniSBRW25}---whose weight class consists of all upper and lower thresholds of the identity: $\sign(p - v)$ and $-\sign(p - v)$ for all $v \in [0, 1]$.
The upper bound $\cdl_{\M_+}(J) \lesssim \propce(J)$ recovers a previous result from \cite{RosselliniSBRW25}. The lower bound $\cdl_{\M_+}(J) \gtrsim \propce(J)^2$, however, is new. Moreover, \cref{thm:cdl-vs-ce} is tight: there exist prediction-outcome distributions $J_1$ and $J_2$ such that $\cdl_{\M_+}(J_1) \lesssim \propce(J_1)^2$ and $\cdl_{\M_+}(J_2) \gtrsim \propce(J_2)$. We defer discussion of these examples to \cref{sec:tightness-of-cdl-vs-ce}.

Similarly, given $r \in \N$, we may set $\K = \M_r$, where $\M_r$ is the class of $r$-wise generalized monotone post-processing functions defined in \cref{definition:generalized-monotonicity}. In this case, \cref{thm:cdl-vs-ce} implies that $\cdl_{\M_r}$ is quadratically related to a natural $r$-wise generalization of Interval-restricted calibration error, in which the weight class comprises all indicators for unions of at most $r$ disjoint intervals.

The proof of \cref{thm:cdl-vs-ce} will rely on the following few useful lemmas. The first lemma that we need is an immediate consequence of the bounded convergence theorem. It allows us to work with either strict or weak inequalities interchangeably, since $\lim_{s \to t^+} \bm{1}[p \ge s] = \bm{1}[p > t]$, etc.

\begin{lemma}
\label[lemma]{thm:ce-limit}
    Fix any distribution $J$ over pairs $(p, y) \in [0, 1] \times \{0, 1\}$. Then, any pointwise convergent sequence of weight functions $w_1, w_2, \ldots : [0, 1] \to [-1, 1]$ satisfies
    \[
        \lim_{k \to \infty} \E \bigl[w_k(p)(y-p)\bigr] = \E \biggl[\lim_{k \to \infty}w_k(p)(y-p)\biggr].
    \]
\end{lemma}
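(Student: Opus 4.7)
The plan is to apply the bounded convergence theorem, which is the natural tool given that each $w_k$ maps into $[-1,1]$.

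First I would verify the uniform boundedness hypothesis required by the theorem. Since each $w_k : [0,1] \to [-1,1]$ and since $|y - p| \le 1$ for every $(p,y) \in [0,1] \times \{0,1\}$, the integrands satisfy
\[
    \bigl|w_k(p)(y-p)\bigr| \le 1
\]
pointwise for all $k$. The constant function $1$ is trivially integrable under the probability measure $J$, so the family $\{w_k(p)(y-p)\}_{k \in \N}$ is uniformly dominated by an integrable function.

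Next, by hypothesis the sequence $(w_k)$ converges pointwise on $[0,1]$, which immediately implies that $w_k(p)(y-p)$ converges pointwise on $[0,1] \times \{0,1\}$ as $k \to \infty$. Therefore all hypotheses of the bounded convergence theorem (equivalently, dominated convergence with a constant dominator) are satisfied, and we may interchange the limit and the expectation to conclude
\[
    \lim_{k \to \infty} \E\bigl[w_k(p)(y-p)\bigr] = \E\biggl[\lim_{k \to \infty} w_k(p)(y-p)\biggr],
\]
as desired. There is no real obstacle here; the lemma is essentially a one-line invocation of the standard measure-theoretic result, and the only thing to check is the trivial uniform bound $|w_k(p)(y-p)| \le 1$.
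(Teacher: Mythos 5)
Your proof is correct and uses exactly the same approach as the paper, which states that the lemma is an immediate consequence of the bounded convergence theorem. You've simply spelled out the easy verification of the hypotheses (uniform bound by $1$, pointwise convergence) that the paper leaves implicit.
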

The second lemma we need relates a gap in loss values to an expression that resembles weight-restricted calibration. We call it the \emph{decision OI} lemma for its connection to \emph{outcome indistinguishability (OI)} \cite{OI}---specifically, the notion of decision OI from \cite{gopalan2023loss}.
\begin{lemma}[Decision OI]
\label[lemma]{thm:loss-oi}
    For any fixed $p, q \in [0, 1]$ and $y \in \{0, 1\}$,
    \[\ell(p, y) - \ell(q, y) \le \bigl(\partial \ell(p) - \partial \ell (q)\bigr)(y-p).\]
\end{lemma}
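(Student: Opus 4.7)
The plan is to leverage the characterization of proper losses in \cref{thm:proper-characterization}, which writes $\ell(p, y) = \varphi(p) + \varphi'(p)(y - p)$ for some concave $\varphi : [0,1] \to \R$ (with $\varphi'$ a superderivative). The proof should reduce entirely to algebraic manipulation of this formula together with the concavity inequality for $\varphi$, so there is no serious obstacle; the main task is to identify the right rearrangement.

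First I would compute $\partial \ell(p) = \ell(p,1) - \ell(p,0)$ using the characterization. Plugging in $y = 1$ and $y = 0$ gives $\ell(p,1) = \varphi(p) + \varphi'(p)(1-p)$ and $\ell(p,0) = \varphi(p) - \varphi'(p) p$, so $\partial \ell(p) = \varphi'(p)$. Thus the RHS of the target inequality equals $(\varphi'(p) - \varphi'(q))(y - p)$.

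Next, I would expand $\ell(p,y) - \ell(q,y)$ using the characterization and group terms so as to isolate a concavity-type expression. Specifically,
\begin{align*}
    \ell(p,y) - \ell(q,y) &= \bigl[\varphi(p) - \varphi(q)\bigr] + \varphi'(p)(y-p) - \varphi'(q)(y-q) \\
    &= \bigl[\varphi(p) - \varphi(q) - \varphi'(q)(p-q)\bigr] + \bigl(\varphi'(p) - \varphi'(q)\bigr)(y - p),
\end{align*}
where the second line comes from writing $\varphi'(p)(y-p) - \varphi'(q)(y-q) = (\varphi'(p) - \varphi'(q))(y-p) + \varphi'(q)(q-p)$ and combining $\varphi'(q)(q-p)$ with $\varphi(p) - \varphi(q)$.

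Finally, since $\varphi$ is concave and $\varphi'(q)$ is a (super)derivative at $q$, the supporting hyperplane inequality gives $\varphi(p) \le \varphi(q) + \varphi'(q)(p - q)$, i.e., the bracketed term is nonpositive. Using $\partial \ell(\cdot) = \varphi'(\cdot)$, we obtain $\ell(p,y) - \ell(q,y) \le (\partial \ell(p) - \partial \ell(q))(y - p)$, as required. Note that the argument is valid uniformly in $y \in \{0,1\}$ because the concavity step is independent of $y$, and it does not require differentiability of $\varphi$ since $\varphi'$ may be interpreted as any superderivative.
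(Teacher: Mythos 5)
Your proof is correct and follows essentially the same route as the paper: both invoke \cref{thm:proper-characterization} to write $\ell(p,y) = \varphi(p) + \varphi'(p)(y-p)$, perform the same regrouping to isolate the concavity remainder $\varphi(p) - \varphi(q) - \varphi'(q)(p-q) \le 0$, and identify the leftover term with $(\partial\ell(p) - \partial\ell(q))(y-p)$. You spell out the intermediate algebra and the identification $\partial\ell = \varphi'$ slightly more explicitly, but the argument is identical.
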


\begin{proof}
    Let $\varphi$ be the concave function given by \cref{thm:proper-characterization} such that \(\ell(p, y) = \varphi(p) + \varphi'(p)(y - p),\) where $\varphi' = \partial \ell$ is some superderivative of $\varphi$. Then,
    \[
        \ell(p, y) - \ell(q, y) = \underbrace{\varphi(p) - \varphi(q) - \varphi'(q)(p - q)}_{(*)} + \bigl(\varphi'(p) - \varphi'(q)\bigr)(y - p),
    \]
    and $(*)$ is nonpositive by concavity.
\end{proof}

The third and final lemma we require is more subtle, providing a key relationship between interval miscalibration and the ability of an additive post-processing function to reduce loss. We call it the \emph{small interval lemma}, since we will always apply it to intervals $I$ with either short length $b - a$, or low mass $\Pr[p \in I]$.

\begin{lemma}[Small Interval]
\label[lemma]{thm:small-intervals}
    If $(p, y) \sim J$ and $\K$ contains all functions $\kappa(p) = [p + t]_0^1$ for $t \in \R$, then for all $0 \le a \le b \le 1$  and intervals $I \in \{ [a, b], [a, b), (a, b], (a, b)\}$,
    \[
        \Bigl\lvert \E\bigl[\bm{1}[p \in I] (y - p)\bigr] \Bigr\rvert \le (b-a)\Pr[p \in I] + \frac{1}{2}\cdl_\K(J).
    \]
\end{lemma}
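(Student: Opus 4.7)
The plan is to decompose $y - p = (y - c) + (c - p)$ with $c = b$ for the upper direction and $c = a$ for the lower direction. Since $p \in I \subseteq [a, b]$, we have $|c - p| \le b - a$, so the second piece contributes at most $(b - a)\Pr[p \in I]$ in absolute value, exactly matching the first term of the stated bound. The remaining task is to establish
\[
    \E\bigl[\bm{1}[p \in I](y - b)\bigr] \le \tfrac{1}{2}\cdl_\K(J) \quad\text{and}\quad \E\bigl[\bm{1}[p \in I](y - a)\bigr] \ge -\tfrac{1}{2}\cdl_\K(J).
\]

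To get the first of these, I would pair the V-shaped proper loss $\ell_b^+(p, y) = -\sign_+(p - b)(y - b) \in \L^*$ with the translation post-processing $\kappa(p) = [p + (b - a)]_0^1$, which belongs to $\K$ by hypothesis. A short pointwise case analysis for the canonical case $I = [a, b)$ shows: for $p \in I$, the translation moves $\kappa(p)$ into $[b, 1]$, flipping the sign of $\sign_+(\cdot - b)$ and producing a per-point loss gap of $2(y - b)$; for $p \notin I$, both $p$ and $\kappa(p)$ lie on the same side of $b$, so the gap vanishes. Taking expectations and invoking the definition of $\cdl_\K$,
\[
    2\,\E\bigl[\bm{1}[p \in I](y - b)\bigr] = \E\bigl[\ell_b^+(p, y) - \ell_b^+(\kappa(p), y)\bigr] \le \cdl_\K(J).
\]
The companion lower bound on $\E[\bm{1}[p \in I](y - a)]$ then follows by the mirror construction, using $\ell_a^+$ together with the reverse translation $\kappa(p) = [p - (b - a)]_0^1$, which pushes $\kappa(p)$ strictly below $a$ for $p \in I$ and yields a per-point gap of $-2(y - a)$ on $I$ and $0$ elsewhere.

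The main obstacle will be the endpoint bookkeeping. The V-shaped losses are discontinuous and $\sign_+$ resolves ties asymmetrically, so the clean calculation above really only works out-of-the-box for $I = [a, b)$ with $0 < a < b < 1$. The other three interval types $[a, b]$, $(a, b]$, $(a, b)$, as well as the degenerate endpoints $a = 0$ (where $\ell_a^+$ becomes constant in $p$ so the lower-bound construction collapses) and $b = 1$ (where $\kappa$ saturates at $1$), are the only places where the argument can leak. I expect to dispatch these either by judiciously switching between $\ell_c^+$ and $\ell_c^-$ so that the singletons $\{a\}$ and $\{b\}$ sit on the ``non-contributing'' side of the threshold, or more uniformly by first proving the bound in the generic regime and then invoking \cref{thm:ce-limit} on pointwise limits such as $\bm{1}[p \in [a+\epsilon, b)] \to \bm{1}[p \in (a, b)]$ as $\epsilon \to 0^+$ (with analogous limits for $b$ and for the endpoints $0$ and $1$), using the fact that the right-hand side of the bound is continuous in $a$ and $b$.
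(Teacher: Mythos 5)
Your proposal is correct and follows essentially the same route as the paper: both extract the two inequalities $\E[\bm{1}[p \in I](y-b)] \le \tfrac{1}{2}\cdl_\K(J)$ and $\E[\bm{1}[p \in I](y-a)] \ge -\tfrac{1}{2}\cdl_\K(J)$ by pairing the V-shaped losses $\ell_b^+$, $\ell_a^+$ with the translation post-processings $[p \pm (b-a)]_0^1 \in \K$, and both dispatch the remaining interval types via $\ell_v^-$ and limits through \cref{thm:ce-limit}. The only cosmetic difference is the final combination step---you decompose $y-p = (y-c)+(c-p)$ with $c\in\{a,b\}$, while the paper sandwiches $\E[y\,\bm{1}[p\in I]]$ between $a\Pr[p\in I]\mp\tfrac{1}{2}\cdl_\K(J)$ and then subtracts the midpoint $(a+b)/2\cdot\Pr[p\in I]$---but both yield the identical bound.
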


\begin{proof}
By the definition of calibration decision loss,
\[
    \cdl_{\K}(J) \ge \sup_{v,t}\,\E[\ell_v^+(p, y) - \ell_v^+(\kappa_t(p), y)],
\]
where the supremum is taken over all $v \in [0, 1]$ and $t \in [-1, 1]$. This expectation equals
\[
    \E \Bigl[\Bigl(\sign_+\bigl([p+t]_0^1 - v\bigr) - \sign_+(p - v)\Bigr)(y - v)\Bigr] =\begin{cases}
        2 \cdot \E\bigl[\bm{1}[v - t \le p < v](y - v)\bigr] &\text{if }t > 0, \\
        2 \cdot \E\bigl[\bm{1}[v \le p < v - t](v - y)\bigr] &\text{if }t < 0, \\
        0 & \text{if }t = 0.
    \end{cases}
\]
Now consider any $0 \le a < b \le 1$. Taking $v = b$ and $t = b - a > 0$, we see that
\begin{equation}
    \E\bigl[\bm{1}[a \le p < b](y - b)\bigr] \le \frac{1}{2}\cdl_{\K}(J). \label{eq:weighted-1-1}
\end{equation}
Similarly, taking $v = a$ and $t = a - b < 0$, we see that
\begin{equation}
    \E\bigl[\bm{1}[a \le p < b](a - y)\bigr] \le \frac{1}{2}\cdl_{\K}(J). \label{eq:weighted-1-2}
\end{equation}
We visualize equations \eqref{eq:weighted-1-1} and \eqref{eq:weighted-1-2} in Figure~\ref{fig:weighted-1}.
\begin{figure}
    \centering
    \includegraphics[width=0.6\linewidth]{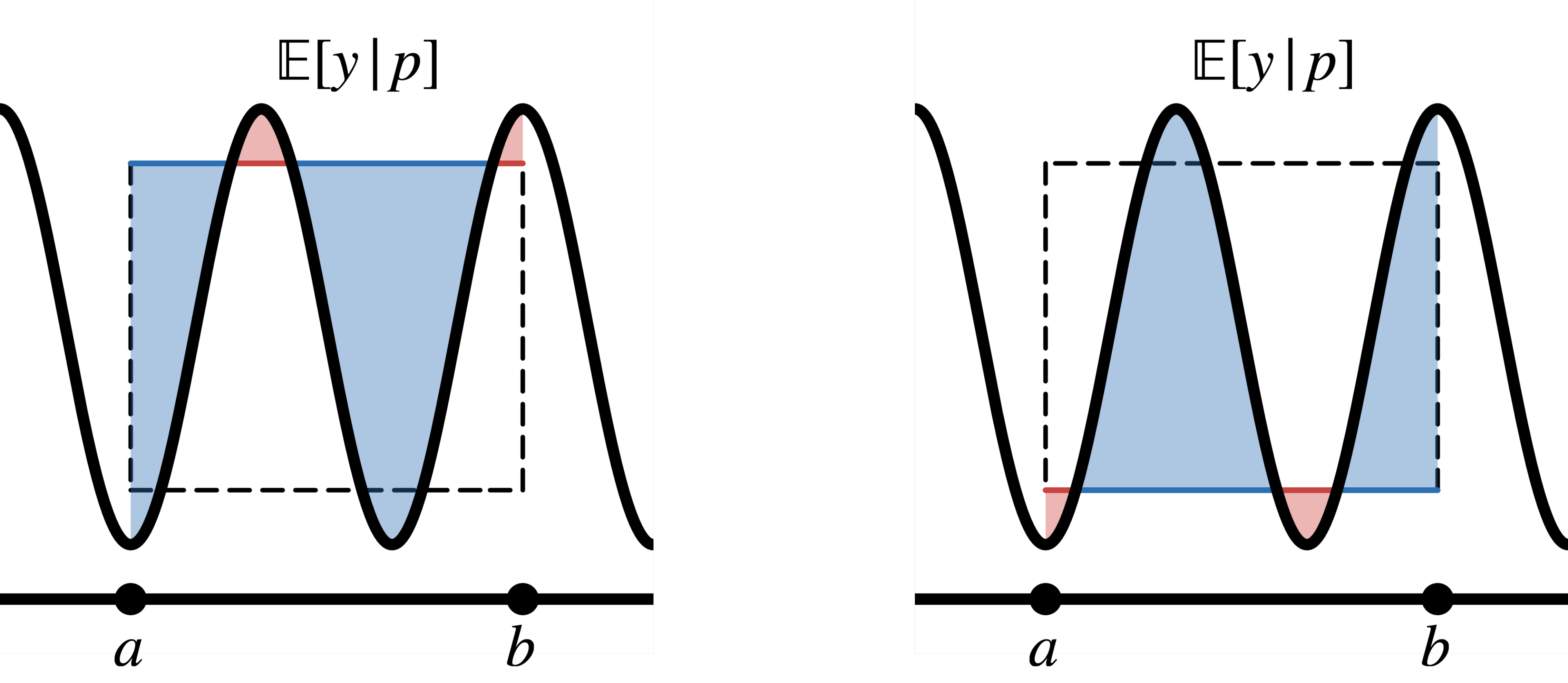}
    \caption{Visualization of inequalities \eqref{eq:weighted-1-1} and \eqref{eq:weighted-1-2}. The dashed line outlines the box $[a, b] \times [a, b]$ in the $(p, y)$-plane. The left and right images correspond to \eqref{eq:weighted-1-1} and \eqref{eq:weighted-1-2}, respectively. The inequalities assert that in each image, the blue area is at least as large as the red, minus a $\cdl_\K(J)$ slack term, assuming the marginal distribution of $p$ is uniform.}
    \label{fig:weighted-1}
\end{figure}
Combining \eqref{eq:weighted-1-1} and \eqref{eq:weighted-1-2} yields
\[
    a \cdot \Pr[a \le p < b] - \frac{1}{2}\cdl_{\K}(J) \le \E\bigl[y \cdot \bm{1}[a \le p < b]\bigr] \le b \cdot \Pr[a \le p < b] + \frac{1}{2}\cdl_{\K}(J).
\]
Next, we subtract the quantity $(a+b)/2 \cdot \Pr[a \le p < b]$ from the left, right, and middle of the above chain of inequalities. Doing so yields
\[
    \biggl\lvert\E\Bigl[\bm{1}[a \le p < b]\Bigl(y - \frac{a+b}{2}\Bigr)\Bigr]\biggr\rvert \le \frac{b - a}{2}\Pr[a \le p < b] + \frac{1}{2}\cdl_{\K}(J).
\]
Observe that, conditional on the event that $a \le p < b$, the midpoint $(a+b)/2$ of the interval $[a, b)$ differs from the prediction $p$ by at most $(b-a)/2$. Consequently, we have
\[
    \Bigl\lvert\E\bigl[\bm{1}[a \le p < b](y - p)\bigr]\Bigr\rvert \le (b - a)\Pr[a \le p < b] + \frac{1}{2}\cdl_{\K}(J).
\]
The left hand side is the absolute value of the calibration error restricted to the indicator function for the half-open interval $I = [a, b)$. Moreover, our argument up to this point works for any $0 \le a < b \le 1$, so that we have proved the claim for any such interval.

By carrying out the preceding argument with $\ell_v^-$ in place of $\ell_v^+$, we can similarly bound the absolute calibration error restricted to half-open intervals of the form $I = (a, b]$, where $0 \le a < b \le 1$.
By varying $a$ and $b$ and applying \cref{thm:ce-limit}, we can also obtain the bound for any interval $I$ of the form $(a, b]$, $(a, b)$, or $[a, b]$, where $0 \le a \le b \le 1$, except for $I = [0, 1]$, but this case is trivial because the right hand side is at least $1$.
\end{proof}

With these three lemmas in hand, we are ready to prove \cref{thm:cdl-vs-ce}.

\begin{proof}[Proof of \cref{thm:cdl-vs-ce}, Upper Bound]
    As a reminder, our goal is to show that if $\K$ is a valid post-processing class, then
    \[
        \cdl_\K(J) \lesssim \ce_{\threshold'(\K)}(J).
    \]
    We first apply the boundary characterization of $\L^*$ from \cref{corollary:uv-cal}, which yields the following upper bound:
    \[
        \cdl_\K(J) \le \sup_{v, \kappa}\, \E[\ell_v^+(p, y) - \ell_v^+(\kappa(p), y)],
    \]
    where the supremum is over $v \in [0, 1]$ and $\kappa \in \K$. Next, we relate the gap in loss values between $p$ and $\kappa(p)$ to an expression resembling weight-restricted calibration using the decision OI lemma (\cref{thm:loss-oi}):
    \[
        \cdl_\K(J) \le \sup_{v, \kappa}\, \E\Bigl[\bigl(\sign_+(\kappa(p)-v)-\sign_+(p-v)\bigr)(y-p)\Bigr],
    \]
    By assumption, both $\sign_+(\kappa(p) - v)$ and $-\sign_+(p - v)$ belong to $\threshold'(\K)$. We conclude that
    \[
        \cdl_\K(J) \le 2 \cdot \ce_{\threshold'(\K)}(J). \qedhere
    \]
\end{proof}

\begin{proof}[Proof of \cref{thm:cdl-vs-ce}, Lower Bound]
    As a reminder, our goal is to show that if $\K$ is a valid post-processing class, then
    \[
        \ce_{\threshold'(\K)}(J)^2 \lesssim \cdl_\K(J).
    \]
    Suppose that $\cdl_\K(J) = \eps$. We will prove that $\ce_{\threshold'(\K)}(J) \lesssim \sqrt{\eps}$. For this, we first recall that there are two types of weight functions in $\threshold'(\K)$. The first kind of weight functions are functions of the form
    \[
        w_1(p) = \sign_+(\kappa(p) - v) = \bm{1}[\kappa(p) \ge v] \cdot 2 - 1,
    \]
    for some $\kappa \in \K$ and $v \in [0, 1]$. The second kind are functions of the form
    \[
        w_2(p) = -\sign_+(p - v) = \bm{1}[p < v] \cdot 2 - 1,
    \]
    for some $v \in [0, 1]$. Therefore, it suffices to prove that the (signed) calibration errors with respect to all weight functions of the form $p \mapsto \bm{1}[\kappa(p) \ge v]$ or $p \mapsto \bm{1}[p < b]$ and $p \mapsto -1$ are either negative, or positive but small (i.e. $O(\sqrt{\eps})$). We shall do so in the following two lemmas, \cref{thm:upper-ce-bound,thm:abs-int-ce-bound}. When combined, these two lemmas complete the proof of \cref{thm:cdl-vs-ce}.
\end{proof}
\begin{lemma}
\label[lemma]{thm:upper-ce-bound}
    Let $J$ be a distribution over pairs $(p, y) \in [0, 1] \times \{0, 1\}$, and let $\K$ be a valid post-processing class. There exists an absolute constant $C > 0$ such that for all $\kappa \in \K$ and $v \in [0, 1]$,
    \[
        \E\bigl[\bm{1}[\kappa(p) \ge v] (y - p)\bigr] \le C\sqrt{\cdl_\K(J)}.
    \]
    (Note that the expectation can be negative, with arbitrarily large magnitude.)
\end{lemma}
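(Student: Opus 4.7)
My plan is to bound $\E[\bm{1}_S(p)(y - p)]$, where $S = \{p : \kappa(p) \ge v\}$ and $\eps := \cdl_\K(J)$, by combining a CDL-derived asymmetric comparison inequality with the partitioning strategy from the monotone case sketched earlier in the overview.

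First I would exploit the translation invariance of the valid class $\K$ to derive the following comparison: for every $v' \in [0, 1]$,
\[
    \E[\bm{1}_S(p)(y - v')] \le \E[\bm{1}[p \ge v'](y - v')] + \tfrac{1}{2}\eps. \qquad (\star)
\]
This follows by applying the definition of $\cdl_\K$ with the V-shaped loss $\ell_{v'}^+$ and the post-processing $\kappa'(p) = [\kappa(p) + (v' - v)]_0^1$, which lies in $\K$ by translation invariance. Since $\sign_+(\kappa'(p) - v') = \sign_+(\kappa(p) - v) = 2\bm{1}_S(p) - 1$, and the V-shaped loss satisfies the exact identity $\ell_{v'}^+(p, y) - \ell_{v'}^+(q, y) = (\sign_+(q - v') - \sign_+(p - v'))(y - v')$, the CDL inequality rearranges to $(\star)$.

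Splitting $(y - v') = (y - p) + (p - v')$ in $(\star)$ then gives
\[
    \E[\bm{1}_S(p)(y - p)] \le \E[\bm{1}[p \ge v'](y - p)] + \E\bigl[(\bm{1}[p \ge v'] - \bm{1}_S(p))(p - v')\bigr] + \eps/2.
\]
I would bound the first right-hand term by following the partition argument used for $\M_+$: subdivide $[v', 1]$ into $m = \Theta(1/\sqrt{\eps})$ subintervals of mass $O(\sqrt{\eps})$ each (chosen via quantiles of $p$), apply the small interval lemma (\cref{thm:small-intervals}) to each subinterval, and sum the resulting $(b_j - a_j)\Pr[p \in I_j] + \eps/2$ bounds. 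Since the lengths sum to at most $1$, this yields $|\E[\bm{1}[p \ge v'](y - p)]| \le O(\sqrt{\eps})$.

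The main obstacle will be controlling the ``discrepancy'' term $\E[(\bm{1}[p \ge v'] - \bm{1}_S(p))(p - v')]$, which decomposes into the non-negative pieces $\E[\bm{1}_{S^c}(p)(p - v')_+]$ and $\E[\bm{1}_S(p)(v' - p)_+]$ measuring the weighted misalignment between $S$ and $[v', 1]$. I would select $v'$ to be a quantile of $p$ matching the mass $\Pr[p \in S]$, so that the signed mass mismatch vanishes, and then bound the remaining distance-weighted discrepancy by decomposing each integral $\int_0^{1-v'} \Pr[p \ge v' + h,\ p \notin S]\,dh$ along the same partition and using $(\star)$ at each partition point to control the contribution of each slice. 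Summing across the $\Theta(1/\sqrt{\eps})$ slices, each contributing an $O(\sqrt{\eps})$ slack plus a length-times-mass term, should yield the desired $O(\sqrt{\eps})$ total, with the constant $C$ absorbing the various partition-overhead contributions.
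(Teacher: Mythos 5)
There is a genuine gap in the discrepancy term. Your decomposition gives
\[
    \E[\bm{1}_S(p)(y-p)] \le \E[\bm{1}[p\ge v'](y-p)] + D + \tfrac{\eps}{2}, \qquad D := \E\bigl[(\bm{1}[p\ge v'] - \bm{1}_S(p))(p-v')\bigr] \ge 0,
\]
and you then need $D \le O(\sqrt\eps)$. But $D$ depends only on the marginal law of $p$ and the set $S$ — it contains no reference to $y$ — whereas $\cdl_\K(J) = 0$ for every perfectly calibrated $J$ regardless of the $p$-marginal. Concretely, take $p \sim \mathrm{Unif}[0,1]$, $y\,|\,p \sim \Ber(p)$, and $\kappa \in \M_2$ with $S = \{p : \kappa(p) \ge v\} = [0,1/4]\cup[3/4,1]$. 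Then $\cdl_{\M_2}(J) = 0$, yet for $v' = 1/2$ (or any other choice of $v'$) one computes $D \ge 1/8$. The final step of your plan — using $(\star)$ at partition points to control $\int \Pr[p \ge v'+h,\, p\notin S]\,dh$ — cannot succeed, because $(\star)$ compares expectations of $y$-dependent quantities while the integrand is a pure mass; in the example above, $(\star)$ holds with zero slack at every threshold while the geometric discrepancy is a constant. Your $(\star)$ step and the bound $|\E[\bm{1}[p\ge v'](y-p)]| = O(\sqrt\eps)$ (which is exactly \cref{thm:abs-int-ce-bound}) are both fine; the problem is that replacing the genuinely shaped set $S$ by a ray $[v',1]$ throws away the structure that the CDL bound must see.

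The paper avoids ever producing a purely geometric residual. It partitions $[0,1]$ into cells $(c_{i-1}, c_i)$ of small $S$-mass, and within each cell where $S \cap (c_{i-1}, c_i)$ fails to be a single interval, it builds a translated-and-truncated post-processing $\kappa'$ whose $b$-superlevel set equals $(S\cap(a,b))\cup[b,1]$, so that the CDL bound against $\ell_b^+$ applies locally to $\E[\bm{1}[p\in S\cap(a,b)](y-b)]$. The only geometric error incurred is $(b-a)\Pr[p\in S\cap(a,b)]$ per cell, which is forced to be small by the choice of partition. This is a different and necessary move: the comparison is cell-local and always stays inside expressions of the form $\E[\cdots(y - \text{const})]$ to which the CDL hypothesis actually applies.
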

\begin{lemma}
\label[lemma]{thm:abs-int-ce-bound}
    Let $J$ be a distribution over pairs $(p, y) \in [0, 1] \times \{0, 1\}$, and let $\K$ be a valid post-processing class. There exists an absolute constant $C > 0$ such that for all intervals $I \subseteq [0, 1]$,
    \[
        \Bigl\lvert \E\bigl[\bm{1}[p \in I] (y - p)\bigr] \Bigr\rvert \le C\sqrt{\cdl_\K(J)}.
    \]
\end{lemma}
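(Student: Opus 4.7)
The plan is to combine the small interval lemma (\cref{thm:small-intervals}) with a quantile-based partitioning argument of the sort outlined in the introduction. Let $\eps = \cdl_\K(J)$, and let $I \subseteq [0,1]$ be an arbitrary interval. We would like to bound $|\E[\bm{1}[p \in I](y-p)]|$. A direct application of the small interval lemma to $I$ itself only yields a bound of the form $(b-a)\Pr[p \in I] + \tfrac{1}{2}\eps$, which in the worst case (e.g., $I = [0,1]$) is $\Theta(1)$, not $O(\sqrt{\eps})$. The strategy is to partition $I$ into many subintervals of small mass so that the products $(b_j-a_j)\Pr[p \in I_j]$ shrink, at the cost of incurring the additive $\eps$ slack once per subinterval.

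Fix an integer $m \approx 1/\sqrt{\eps}$ to be optimized. First I would partition $I$ into at most $m$ disjoint subintervals $I_1, \ldots, I_m$ (with $I = \bigsqcup_j I_j$) such that each $I_j$ is either a singleton $\{p_j\}$ (accounting for atoms of $p$ whose mass exceeds $1/m$) or an interval with $\Pr[p \in I_j] \le 1/m$. Such a partition exists for any probability distribution on $I$ by a standard quantile/greedy construction: walk along $I$ from left to right, closing off a subinterval whenever its accumulated mass reaches $1/m$, and pulling out any single atom of mass $> 1/m$ as its own singleton.

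Next, I would apply the triangle inequality and then the small interval lemma to each piece:
\[
    \Bigl\lvert \E\bigl[\bm{1}[p \in I](y-p)\bigr]\Bigr\rvert \le \sum_{j=1}^{m} \Bigl\lvert \E\bigl[\bm{1}[p \in I_j](y-p)\bigr]\Bigr\rvert \le \sum_{j=1}^{m} \Bigl((b_j-a_j)\Pr[p \in I_j] + \tfrac{1}{2}\eps\Bigr).
\]
For the singletons, $b_j - a_j = 0$, so they contribute nothing to the first term. For the non-singleton pieces, $\Pr[p \in I_j] \le 1/m$, so their contributions to the first term sum to at most $\frac{1}{m}\sum_j (b_j - a_j) \le \frac{1}{m}$, since the $I_j$'s are disjoint subintervals of $[0,1]$. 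The second term contributes at most $\tfrac{m\eps}{2}$. Choosing $m = \lceil 1/\sqrt{\eps}\rceil$ balances these two terms and yields the desired bound $O(\sqrt{\eps})$.

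The main potential obstacle is constructing the partition cleanly in the presence of atoms, since naive quantile cuts can fail when the distribution of $p$ has point masses exceeding $1/m$. Handling this by allowing singleton subintervals (which contribute zero to the length--mass product) resolves the difficulty, and the small interval lemma is stated for all four types of intervals (including singletons $\{a\} = [a,a]$), so the appeal to it for each $I_j$ is justified regardless of whether $I_j$ is open, half-open, closed, or a point.
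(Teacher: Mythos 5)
Your proof is correct and follows essentially the same route as the paper's: apply the small interval lemma piecewise over a mass-balanced partition of $I$ into $O(1/\sqrt{\eps})$ subintervals, observe that singletons contribute zero to the length--mass products while non-singleton pieces each contribute at most $(b_j-a_j)/m$, and balance the telescoping length term against the accumulated $\eps$ slack. The only cosmetic difference is that the paper phrases the partition as a sequence of cutoff points $c_1 < \cdots < c_m$ whose interstitial open intervals each carry at most a $1/m$ fraction of the mass of $I$, decomposing into $m$ singletons plus $m-1$ open intervals; your greedy left-to-right construction is equivalent, though the claim of ``at most $m$'' pieces is slightly optimistic in the presence of atoms (the count can be as large as roughly $2m$ once big atoms are pulled out and each residual fragment incurs a ceiling), but this is still $O(m)$ and does not affect the final $O(\sqrt{\eps})$ bound.
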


Of the two lemmas, \cref{thm:abs-int-ce-bound} has the slightly simpler proof, so we present it first.

\begin{proof}[Proof of \cref{thm:abs-int-ce-bound}]
    Let $\eps = \cdl_\K(J)$ and let $I \subseteq [0, 1]$ be an interval, which, as usual, may be open, closed, half-open, or a singleton. Let $m \in \N$ be an integer to be specified later. We select a sequence of \emph{cutoff} points $c_1 < \cdots < c_m$ such that $c_1$ and $c_m$ are the left and right endpoints of the interval $I$, and each open interval $(c_{i-1}, c_i)$ contains at most a $1/m$ fraction of the total probability mass of $I$. That is, we require
    \[
        \Pr[c_{i-1} < p < c_i] \le \frac{1}{m} \Pr[p \in I].
    \]
    Because we consider open intervals $(c_{i-1}, c_i)$, such a selection is always possible. Note, however, that the distribution of $p$ may have positive mass on some or all of the cutoff points $c_1, \ldots, c_m$. Next, we split the associated weight-restricted calibration error at the cutoff points $c_i$, as follows:
    \begin{equation}
        \E\bigl[\bm{1}[p \in I](y-p)] = \sum_{i=1}^m \E\bigl[\bm{1}[p = c_i](y-p)\bigr] + \sum_{i=2}^m \E\bigl[\bm{1}[c_{i-1} < p < c_i](y-p)\bigr]. \label{eq:interval-cutoffs}
    \end{equation}
    By the small interval lemma (\cref{thm:small-intervals}) each term the first sum in equation \eqref{eq:interval-cutoffs} is bounded in absolute value by $O(\eps)$, and the $i$\textsuperscript{th} term in the second sum is bounded in absolute value by $(c_i - c_{i-1}) \cdot 1/m + \eps$. Note that the difference $c_i - c_{i-1}$ telescopes when we sum over $i$. Therefore, applying the triangle inequality to equation \eqref{eq:interval-cutoffs}, we have that
    \[
        \Bigl\lvert \E\bigl[\bm{1}[p \in I](y-p)] \Bigr\rvert \le m\eps + (c_m - c_1) \cdot \frac{1}{m} + m\eps.
    \]
    Recall that $c_m - c_1$ is the length of $I$, which is at most $1$. Setting $m = \Theta(1/\sqrt{\eps})$, the above bound reduces to $O(\sqrt{\eps})$, as claimed.
\end{proof}

Next, we present the proof of \cref{thm:upper-ce-bound}, which has a similar structure to that of \cref{thm:abs-int-ce-bound}, but with a few extra steps. This will conclude the proof of \cref{thm:cdl-vs-ce}.
    
\begin{proof}[Proof of \cref{thm:upper-ce-bound}]
    Once again, let $\eps = \cdl_\K(J)$. Instead of considering an interval, we now consider a set of the form $S = \{p \in [0, 1] \mid \kappa(p) \ge v\}$ for some post-processing $\kappa \in \K$ and value $v \in [0, 1]$.
    We again select cutoff points  points $c_0 < \cdots < c_m$ such that $c_0 = 0$ and $c_m = 1$ and each open interval $(c_{i-1}, c_i)$ contains at most a $1/m$ fraction of the mass of $S$. That is, we require $\Pr[p \in S \cap (c_{i-1}, c_i)] \le (1/m)\Pr[p \in S]$. Since we consider open intervals $(c_{i-1}, c_i)$, such a selection is always possible. Note, however, that the distribution of $p$ may have positive mass on some or all of the cutoff points $c_0, \ldots, c_m$. Next, we split the associated weight-restricted calibration error at the cutoff points $c_i$, as follows:
    \begin{equation}
        \E\bigl[\bm{1}[p \in S](y - p)\bigr] = \sum_{i=0}^{m} \E\bigl[\bm{1}\bigl[p \in S \cap \{c_i\}\bigr](y - p)\Bigr] + \sum_{i=1}^m \E\Bigl[\bm{1}\bigl[p \in S \cap (c_{i-1}, c_i)\bigr](y - p)\Bigr]. \label{eq:cdl-vs-ce-decomp}
    \end{equation}
    Our goal will be to argue that each term of the two sums is either negative (with unbounded magnitude) or positive but small (i.e. with magnitude at most $O(\sqrt{\eps})$). First, by the small interval lemma (\cref{thm:small-intervals}), the first sum in equation \eqref{eq:cdl-vs-ce-decomp} is bounded in absolute value by $O(m\eps)$. To bound the $i$\textsuperscript{th} term in the second sum, let $a = c_{i-1}$ and $b = c_i$. We will consider the following three comprehensive cases, showing that the $i$\textsuperscript{th} term in the second sum of equation \eqref{eq:cdl-vs-ce-decomp} is at most $O((b-a)/m + \eps)$.
    \begin{itemize}
        \item \textbf{(Case 1: $S \cap (a, b)$ is an interval)} In this case, \cref{thm:small-intervals} tells us that
        \[
            \Bigl\lvert \E\Bigl[\bm{1}\bigl[p \in S \cap (a, b)\bigr] (y - p)\Bigr] \Bigr\rvert \le (b-a)\Pr[p \in S \cap (a, b)] + \eps,
        \]
        and the right hand side is at most $(b-a)/m + \eps$.
        \item \textbf{(Case 2: $S \cap (a, b)$ is not an interval and $a \notin S$ and $b \in S$)} We first relate the calibration error on $S \cap (a, b)$ to a quantity in which $y-p$ has been replaced with $y-b$, as follows:
        \[
            \E\Bigl[\bm{1}\bigl[p \in S \cap (a, b)\bigr] (y - p)\Bigr] \le \E\Bigl[\bm{1}\bigl[p \in S \cap (a, b)\bigr] (y - b)\Bigr] +  (b - a)\Pr[p \in S \cap (a, b)],
        \]
        and the rightmost term is at most $(b-a)/m$. To bound the first term on the right, which involves a factor of $y - b$, we will relate the entire expectation to the $\ell_b^+$ loss. To do so, recall that $S = \{p \in [0, 1] \,|\, \kappa(p) \ge v\}$ for some $\kappa \in \K$ and $v \in [0, 1]$. By translation invariance, $\K$ also contains the function
        \[
            \kappa'(p) = \Bigl[\kappa\Bigl([p]_a^b\Bigr)+b-v\Bigr]_0^1.
        \]
        We visualize the transformation from $\kappa$ to $\kappa'$ in Figure~\ref{fig:weighted-2}. Indeed, by shifting the graph of $\kappa$ left by $a$ (and back again) and right by $1-b$ (and back again), we can ensure that $\kappa$ takes the constant value $\kappa(a)$ on $[0, a]$ and the constant value $\kappa(b)$ on $[b, 1]$, which corresponds to the truncation $[p]_a^b$ in the formula for $\kappa'$.
        
        Next, we claim that $\kappa'(p) \ge b$ if and only if $p \in (S \cap (a, b)) \cup [b, 1]$. To prove this, recall our assumption that $a \notin S$, which implies $\kappa'(p) < b$ for all $p \le a$. Similarly, since $b \in S$, we have $\kappa'(p) \ge b$ for all $p \ge b$. For $p \in (a, b)$, we have $\kappa'(p) \ge b$ if and only if $p \in S$ as well.
        \begin{figure}
            \centering
            \includegraphics[width=0.6\linewidth]{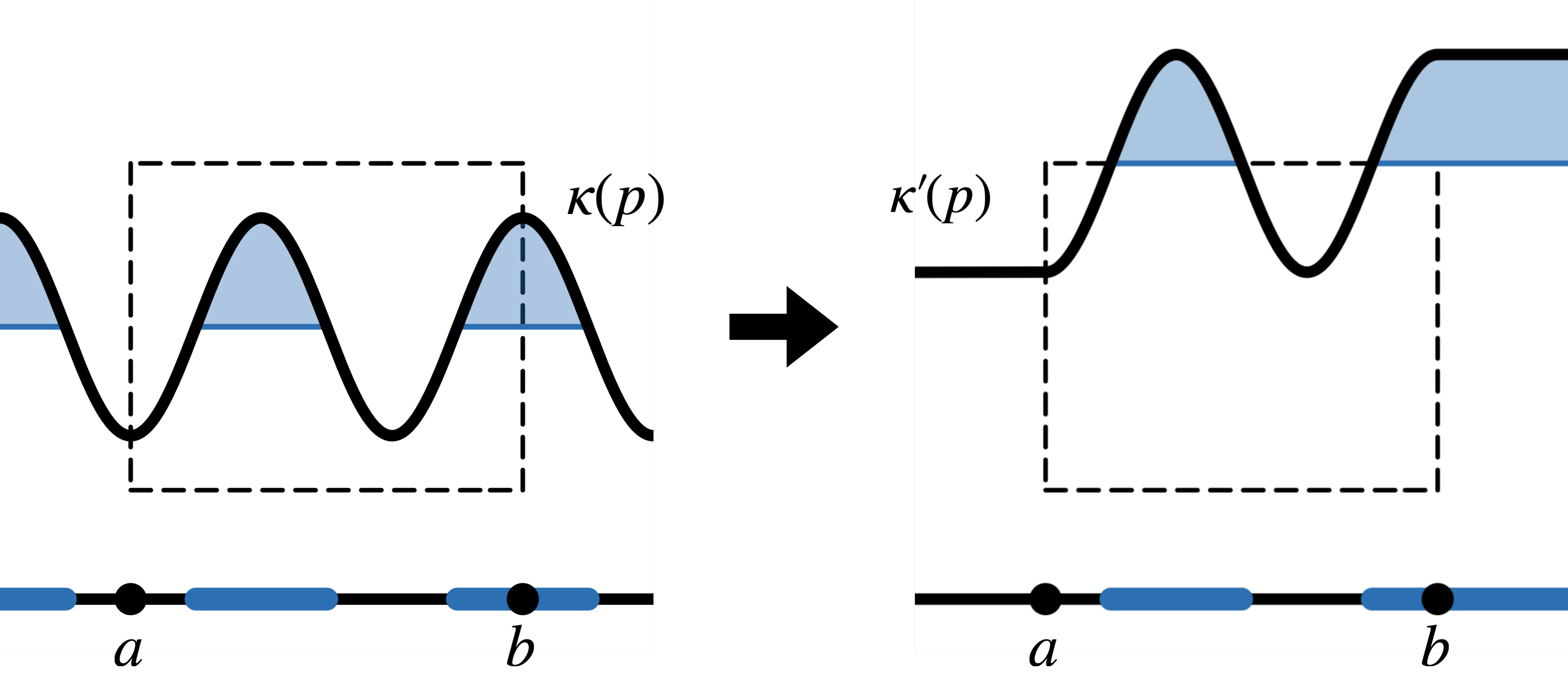}
            \caption{Transformation from $\kappa$ to $\kappa'$. The dashed line outlines the box $[a, b] \times [a, b]$ in the $(p, \kappa(p))$-plane. Starting from the set $S = \{p \in [0, 1] : \kappa(p) \ge v\}$, the transformation removes from $S$ any points to the left of $a$, adds to $S$ any points to the right of $b$, and performs a vertical shift so that the threshold coincides with $b$. The transformation requires $\kappa(a) < v$ and $\kappa(b) \ge v$.}
            \label{fig:weighted-2}
        \end{figure}
        In terms of $\kappa'$, we have
        \[
            \E\bigl[\bm{1}[p \in S \cap (a, b)](y - b)\bigr] = \E\bigl[(\bm{1}[\kappa'(p) \ge b] - \bm{1}[p \ge b])(y - b)\bigr].
        \]
        This can be rewritten in terms of the $\ell_v^+$ loss as
        \[
            \E\bigl[\ell_b^+(p, y) - \ell_b^+(\kappa(p), y)\bigr].
        \]
        By the definition of $\cdl_\K$, this quantity is either negative or at most $O(\eps)$. We conclude that
        \[
            \E\Bigl[\bm{1}\bigl[p \in S \cap (a, b)\bigr] (y - p)\Bigr] \le \frac{b-a}{m} + \eps.
        \]
        
        \item \textbf{(Case 3: $S \cap (a, b)$ is not an interval and either $a \in S$ or $b \notin S$)} In this case, we will perform a simple trick to reduce Case 2, where $a \notin S$ and $b \in S$. Basically, the idea is to shift $a$ to the right until we hit a point not in $S$, and shift $b$ to the left until we hit a point in $S$.
        
        More formally, let $a' = \inf\, (a, b) \setminus S$ be the infimum of points outside $S$ in the interval, and let $b' = \sup\, (a, b) \cap S$ be the supremum of points in $S$ in the interval. Clearly, $a' \ge a$ and $b' \le b$. Also, we have the strict inequality $a' < b'$ since $S \cap (a, b)$ is not an interval.
        
        Next, take any nonincreasing sequence of points $a_j \notin S$ that approach $a$, as well as a nondecreasing sequence of points $b_j \in S$ that approach $b$. Suppose also that we have the strict inequality $a_j < b_j$. To bound the calibration error restricted to $S \cap (a', b')$, we apply the result of Case 2 to each pair $(a_j, b_j)$ and take the limit as $j \to \infty$ (\cref{thm:ce-limit}). This yields
        \[
            \E\Bigl[\bm{1}\bigl[p \in S \cap (a', b')\bigr] (y - p)\Bigr] \le \frac{b'-a'}{m} + \eps.
        \]
        The calibration errors restricted to $S \cap (a, a']$ and $S \cap [b', b)$ are similarly bounded by the small interval lemma (\cref{thm:small-intervals})
        , since the former is a contiguous interval and the latter is either a singleton or empty set. This yields
        \[
            \E\Bigl[\bm{1}\bigl[p \in S \cap (a, a')\bigr] (y - p)\Bigr] \le \frac{a'-a}{m} + \eps
        \]
        and
        \[
            \E\Bigl[\bm{1}\bigl[p \in S \cap (b', b)\bigr] (y - p)\Bigr] \le \frac{b-b'}{m} + \eps.
        \]
        Adding these three inequalities, we see that, in total, over $S \cap (a, a']$, $S \cap (a', b')$, and $S \cap [b', b)$, we have
        \[
            \E\Bigl[\bm{1}\bigl[p \in S \cap (a, b)\bigr] (y - p)\Bigr] \le \frac{b-a}{m} + 3\eps.
        \]
    \end{itemize}
    At this point, we have shown that the $i$\textsuperscript{th} term of the second summation of equation \eqref{eq:cdl-vs-ce-decomp} is either negative (with arbitrarily large magnitude) or positive but bounded in magnitude by $O((c_i - c_{i-1})/m + \eps)$. Setting $m = \Theta(1/\sqrt{\eps})$, the entire bound telescopes to $O(\sqrt{\eps})$, proving \cref{thm:upper-ce-bound}.
\end{proof}

\subsection{Translation Invariance Alone Is Insufficient}

We have established that $\cdl_\K$ is polynomially characterized by the $\threshold'(\K)$-restricted calibration error for any valid post-processing class $\K$. Here, we provide a example that justifies these assumptions, by showing that the characterization does not hold for the class of non-increasing post-processings. Recall that the class of non-increasing post-processings is translation invariant, but does not contain the identity.
\begin{theorem}\label[theorem]{theorem:cdl-ce-separation}
    Let  $\K = \M_-$ and $\W = \threshold'(\M_-)$. There is a distribution $J$ such that:
    \[  
        \ce_\W(J) \ge \frac{1}{4} \text{ , yet }  \cdl_{\K}(J) \le 0
    \]
\end{theorem}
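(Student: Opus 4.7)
The plan is to exhibit an explicit two-point distribution. I take $J$ to be uniformly supported on $\{(0,0), (1/2,1)\}$. The bound $\ce_\W(J) \geq 1/4$ is witnessed by the constant weight function $w \equiv 1$, which lies in $\W = \threshold'(\M_-)$ because $-\sign_+(p-v) \equiv 1$ for any $v > 1$; this yields $\ce_\W(J) \geq \E[w(p)(y-p)] = \E[y-p] = 1/4$.

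The main task is to show $\cdl_{\M_-}(J) \leq 0$: no nonincreasing post-processing $\kappa$ improves upon $p$ for any proper loss. By \Cref{corollary:uv-cal}, it suffices to verify this for the V-shaped losses $\ell_v^+$ (the $\ell_v^-$ case being analogous). Writing $a = \kappa(0)$ and $b = \kappa(1/2)$, so that $a \geq b$ by nonincreasingness, direct expansion reduces the desired inequality to
\[
\frac{v}{2}\bigl(\sign_+(-v) - \sign_+(a-v)\bigr) + \frac{1-v}{2}\bigl(\sign_+(b-v) - \sign_+(1/2-v)\bigr) \leq 0.
\]
For $v \in [0, 1/2]$, both summands are manifestly nonpositive: $\sign_+(-v) = -1$ for $v > 0$ while $\sign_+(a-v) \geq -1$, and $\sign_+(1/2-v) = 1$ while $\sign_+(b-v) \leq 1$. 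For $v > 1/2$, the second summand can be positive, but the constraint $a \geq b$ rules out the problematic configuration $a < v \leq b$; the three remaining configurations can be checked by direct substitution, with the case $a \geq b \geq v$ producing the net value $1 - 2v < 0$.

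The $\ell_v^-$ analysis is essentially identical to the $\ell_v^+$ analysis, differing only at $v \in \{0, 1/2, a, b\}$ where we would use $\sign_-(0) = -1$ in place of $\sign_+(0) = 1$; these exceptional values are handled by direct substitution. The main obstacle will be simply the case bookkeeping rather than any conceptual difficulty; at a high level the result is intuitive because the Bayes-optimal post-processing for $J$ is the nondecreasing function $\kappa^\star(p) = 2p$, so nonincreasing post-processings can only move predictions in the wrong direction.
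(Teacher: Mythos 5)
Your proof is correct and takes essentially the same approach as the paper: the same two-point distribution $J$ uniform on $\{(0,0),(1/2,1)\}$, the same witness $w\equiv 1$ for the calibration-error lower bound, and the same reduction via \cref{corollary:uv-cal} to checking $Q_{v,\kappa}^{\pm}\le 0$ by casework on $v$ and $\sign_\star(\kappa(0)-v),\sign_\star(\kappa(1/2)-v)$. The only cosmetic difference is that you certify $w\equiv 1\in\W$ via the lower-threshold-of-identity part of $\threshold'(\M_-)$ (taking $v>1$), whereas the paper certifies it via $\threshold(\M_-)$ using the constant post-processing $\kappa\equiv 1$; both are valid.
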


\begin{proof}
    Let $J$ be the distribution over $(p,y)$ such that the marginal distribution of $p$ is $0$ with probability $1/2$ and $1/2$ with probability $1/2$. Moreover, we have
    \[
        p^*(p) := \E_{(p,y)\sim J}[y | p] = \begin{cases}
            0, \text{ if }p=0 \\
            1, \text{ if }p=1/2
        \end{cases}
    \]
    The set $\W$ contains the constant function $w^* : p\mapsto 1$, corresponding to the nonincreasing constant postprocessing $\kappa: p\mapsto 1$. Therefore, we have the following:
    \begin{align*}
        \ce_\W (J) &\ge \E_{(p,y)\sim J}[w^*(p) (y-p)] \\
        &= \E_{(p,y)\sim J}[y-p] \\
        &= \frac{1}{2}\pr_{(p,y)\sim J}[p=1/2] = \frac{1}{4}\,.
    \end{align*}
    Due to \Cref{corollary:uv-cal}, in order to show that $\cdl_\K(J) \le 0$, it suffices to show that $Q_{v,\kappa}^\star \le 0$ for all $v\in[0,1], \kappa\in \K$ and $\star\in \{+,-\}$, where $Q_{v,\kappa}^\star$ is defined as follows:
    \begin{align*}
        Q_{v,\kappa}^\star :=& \E_{(p,y)\sim J}[(y-v) ( \sign_\star(\kappa(p) - v) - \sign_\star(p-v))] \\
        =&\; \frac{1}{2} (-v)  ( \sign_\star(\kappa(0) - v) - \sign_\star(-v)) + \frac{1}{2} (1-v)  ( \sign_\star(\kappa(1/2) - v) - \sign_\star(1/2-v))\\
        =&\; \frac{1}{2} (-v) (\sign_\star(\kappa(0) - v) +1) + \frac{1}{2} (1-v)  ( \sign_\star(\kappa(1/2) - v) - \sign_\star(1/2-v))
    \end{align*}
    The first term in the above expression is non-positive. Therefore, $Q_{v,\kappa}^\star \le 0$ unless the second term is positive. If the second term is positive, then $\sign_\star(\kappa(1/2) - v) = 1$, which implies, due to monotonicity of $\kappa$, that $\sign_\star(\kappa(0) - v) = 1$. Assuming that $\sign_\star(\kappa(1/2) - v) = \sign_\star(\kappa(0) - v) = 1$, we have the following cases.

    \paragraph{Case I: $v \ge 1/2$.} We have $Q^{\star}_{v,\kappa} \le -v + (1-v) = 1-2v \le 0$.
    \paragraph{Case II: $v< 1/2$.} We have $Q^{\star}_{v,\kappa} = -v \le 0$.
\end{proof}

\begin{remark}
The characterization of \Cref{thm:cdl-vs-ce} fails even if we consider a class $\K$ that includes both $\M_-$ and the identity function, since the identity function does not alter the loss value.
Hence, neither translation invariance nor inclusion of the identity function alone suffices for the characterization of \Cref{thm:cdl-vs-ce}.
\end{remark}

\section{Computationally Efficient Testing and Auditing}

In this section, we show that the problem of testing $\cdl_\K$ for some post-processing class $\K$ can be reduced to proper agnostic learning of the threshold class $\threshold(\K)$, whereas auditing reduces to improper agnostic learning.
Note that the characterization of calibration decision loss in terms of the calibration error with respect to the threshold class we provided in \Cref{sec:cdl-vs-ce} already establishes a connection between agnostic learning and $\cdl$ testing: it says that an agnostic learner for $\threshold(\K)$ gives an $(\alpha, c\alpha^2 -\eps)$ tester for some constant $c$.  However, our testing guarantee here is stronger, since we get $\beta = \alpha -\eps$. 

\begin{theorem}[Testing and Auditing from Agnostic Learning]\label[theorem]{theorem:testing-through-proper-agnostic-learning}
    Let $\K$ be a valid post-processing class. Let $\AL$ be an agnostic $\eps$-learner for $\threshold(\K)$ with sample complexity $m$. For any $\alpha\in(0,1)$, there is an $(\alpha,\alpha-3\eps)$-auditor for $\cdl_\K$ that makes at most $O((1/\eps)\log(1/\eps\delta))$ non-adaptive calls to $\AL$, uses $O((1/\eps^2) \log(1/\eps\delta))$ additional samples, and performs $\tilde{O}(\log(1/\delta)/\eps^3)$ additional operations.
    Moreover, if $\AL$ is proper, then there is an $(\alpha,\alpha-3\eps)$-tester for $\cdl_\K$ with the same specifications.
\end{theorem}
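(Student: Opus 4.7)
I would start from \cref{corollary:uv-cal}, which restricts the supremum in the definition of $\cdl_\K$ to V-shaped losses. Expanding $\ell_v^+$ and using the translation invariance of $\K$ (shift $\kappa$ by $1/2 - v$ along the output axis and clip, as in \cref{def:valid-post}) shows that for every fixed $v \in [0,1]$ the family $\{p \mapsto \sign_+(\kappa(p)-v) : \kappa \in \K\}$ equals $\threshold(\K)$. This rewrites
\[
    \cdl_\K(J) = \sup_{v \in [0,1]}\Bigl\{ \sup_{h \in \threshold(\K)} \E[h(p)(y-v)] \;-\; \E[\sign_+(p-v)(y-v)] \Bigr\},
\]
so the task, for each $v$, reduces to approximating the inner supremum against an explicit baseline that is easy to estimate empirically.

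\textbf{Reduction to agnostic learning and discretization.} For fixed $v \in (0,1)$, maximizing $\E[h(p)(y-v)]$ over $h \in \threshold(\K)$ is a weighted binary classification problem; I would cast it as standard agnostic learning by building a distribution $D_v$ on $[0,1] \times \{-1,+1\}$ satisfying $\E_{D_v}[h(p)\,z] \propto \E_J[h(p)(y-v)]$: draw $(p,y) \sim J$, accept with probability $|y-v|$, and output $(p,\,2y-1)$. A single call to $\AL$ on $D_v$ then returns $h_v$ with $\E_J[h_v(p)(y-v)] \ge \sup_{h^* \in \threshold(\K)} \E_J[h^*(p)(y-v)] - 2\eps$. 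To avoid a continuum of $v$'s, I would build an empirical quantile grid $V \subset [0,1]$ of size $\Theta(1/\eps)$ from a fresh sample of size $O(1/\eps^2)$, chosen so that every inter-grid open interval carries marginal $p$-mass at most $\eps$; this controls both the Lipschitz perturbation in $(y-v)$ and the jumps of $\sign_+(p-v)$ and $\sign_+(\kappa(p)-v)$ under rounding of an arbitrary $v^\star$ to the nearest point of $V$. Amplifying each of the $|V|$ calls to failure probability $\delta/|V|$ and using $O(\log(|V|/\delta)/\eps^2)$ additional samples to estimate $\widehat{M}(v) := \widehat{\E}[(h_v(p) - \sign_+(p-v))(y-v)]$ within $\eps$ by Hoeffding yields uniform control across the grid.

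\textbf{Decision rule and the proper/improper split.} The algorithm outputs $\accept$ iff $\max_{v \in V} \widehat{M}(v)$ falls below a threshold placed inside $[\alpha - 3\eps,\,\alpha]$ (with constants tuned to absorb the $O(\eps)$ errors above). Soundness: if $\cdl_\K(J) > \alpha$, rounding a witness $(v^\star, \kappa^\star)$ to the nearest $v \in V$ loses at most $O(\eps)$, and the $\AL$ and Hoeffding guarantees push $\widehat{M}(v)$ above the threshold, so the algorithm rejects. Proper completeness: every $h_v$ equals $\sign_+(\kappa_v(p) - v)$ for some $\kappa_v \in \K$ (again by translation invariance), so $\widehat{M}(v) \le \cdl_\K(J) + O(\eps)$ pointwise, giving the tester guarantee whenever $\cdl_\K(J) \le \alpha - 3\eps$. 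Improper completeness: an arbitrary $h_v : [0,1] \to \{-1,+1\}$ still equals $\sign_+(\kappa_v(p) - v)$ for some $\{0,1\}$-valued $\kappa_v \in \Kall$, so
\[
    \E\bigl[(h_v(p) - \sign_+(p-v))(y-v)\bigr] \;\le\; \cdl_{\Kall}(J) \;\le\; 2\,\ece(J)
\]
by \cref{thm:prior-ece-vs-cdl}; hence $\ece(J) \le (\alpha - 3\eps)/2$ forces $\widehat{M}(v)$ below the threshold uniformly, and the algorithm accepts. This is precisely why dropping properness costs us the tester guarantee and yields only an auditor.

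\textbf{Main obstacle.} The most delicate step will be the $v$-discretization combined with the uniform empirical concentration required across all grid points: $v$ enters continuously through $(y-v)$ but discontinuously through the threshold functions, and the marginal of $p$ may have point masses. Using empirical quantiles rather than an arithmetic grid is the key device ensuring that no inter-grid interval has mass more than $\eps$, so that rounding an adversarial $v^\star$ to $V$ costs at most $O(\eps)$ uniformly, and the union bound over $|V| = \Theta(1/\eps)$ agnostic-learning calls and Hoeffding estimates holds with probability $1 - \delta$.
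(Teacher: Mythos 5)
Your overall plan matches the paper's proof quite closely: restrict to V-shaped losses via \cref{corollary:uv-cal}, use translation invariance to absorb the threshold level into $\threshold(\K)$, solve a weighted-classification problem for each fixed $v$ via $\AL$, discretize $v$, and distinguish proper from improper learning exactly as you do (with proper learning giving $\widehat{M}(v) \le \cdl_{v,\K}(J) + O(\eps)$ and improper only $\le \cdl_{v,\Kall}(J) \le 2\,\ece(J)$). Your rejection-sampling construction of $D_v$ is a perfectly valid alternative to the paper's randomized rounding of the labels $z = y - v$; both yield the same $2\eps$-type guarantee after converting accuracy to correlation.

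However, there is a genuine gap in the discretization step. You claim that an empirical quantile grid $V$, in which every inter-grid open interval carries $p$-mass at most $\eps$, \emph{controls the Lipschitz perturbation in $(y - v)$}. This is false: mass control is not length control. After your rewriting, the $v$-dependence of $\cdl_{v,\K}(J)$ is through $(y - v)$, which is $1$-Lipschitz in $v$ and requires $|v^\star - v_i| \lesssim \eps$, and through $\sign_+(p - v)$, which requires $\Pr[\sign_+(p-v^\star) \neq \sign_+(p-v_i)] \lesssim \eps$. The quantile grid gives you the latter but not the former. Concretely, take $J$ with $p = 1/2$ almost surely and $\E[y \mid p] = 0.8$. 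Then every sample is $p = 1/2$, so your quantile grid is $V = \{1/2\}$ (plus perhaps the endpoints $0,1$). But a direct computation gives
\[
    \cdl_{v,\K}(J) = -\sign_+(1/2 - v)(0.8 - v) + |0.8 - v| = \begin{cases} 2(0.8 - v) & v \in (1/2, 0.8), \\ 0 & \text{otherwise}, \end{cases}
\]
so $\cdl_\K(J) = \sup_v \cdl_{v,\K}(J) = 0.6$ (approached as $v \to 1/2^+$), yet $\cdl_{v,\K}(J) = 0$ for every $v \in V$. Your algorithm would accept a distribution with $\cdl_\K(J) = 0.6$, violating soundness for any $\alpha < 0.6$. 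The fix is exactly what the paper does: the net must be the union of an \emph{arithmetic} grid $\{0, \gamma, 2\gamma, \ldots, 1\}$ (which guarantees the ``close in length'' condition $|v^\star - v_i| \le \gamma$) with sampled points from the marginal of $p$ (which guarantee the ``close in probability'' condition). In the example above, the arithmetic grid point $v_i = 1/2 + \gamma$ has $\cdl_{v_i,\K}(J) = 0.6 - 2\gamma$, recovering the correct answer up to $O(\gamma)$. Your rejection-sampling, decision rule, and proper/improper analysis are otherwise sound, so adding the arithmetic grid points to $V$ would repair the proof.
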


We obtain the following corollary for the class of generalized monotone post-processings, based on a folkore result on agnostic learning unions of intervals in one dimension (see \Cref{section:appendix-generalized-monotone}).

\begin{corollary}\label[corollary]{theorem:test-audit-generalized-monotone}
    For any $r\ge 1$, $\alpha,\eps\in (0,1)$, there is an $(\alpha,\alpha-\eps)$-tester for $\cdl_{\M_r}$ with sample complexity $\tilde{O}(r/{\eps^2})$ and runtime $\tilde{O}( r^2/\eps^3)$.
\end{corollary}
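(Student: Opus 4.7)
The plan is to instantiate \Cref{theorem:testing-through-proper-agnostic-learning} with an efficient proper agnostic learner for $\threshold(\M_r)$. The first step is to identify $\threshold(\M_r)$ concretely: by \Cref{definition:generalized-monotonicity}, for any $\kappa\in\M_r$ the superlevel set $\kappa^{-1}([1/2,1])$ is a union of at most $r$ disjoint intervals, and conversely, the indicator $\kappa(p) = \bm{1}[p \in U]$ of any such union lies in $\M_r$. Hence $\threshold(\M_r)$ is precisely the class of $\pm 1$-indicators of unions of at most $r$ disjoint intervals of $[0,1]$, which has VC dimension $O(r)$ (this is the content of \Cref{proposition:properties-of-generalized-monotone}).

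Next I would invoke the folklore proper agnostic learner for unions of at most $r$ intervals (as described in \Cref{section:appendix-generalized-monotone}). Given $m = \tilde{O}(r/\eps^2)$ labeled samples, a standard one-dimensional dynamic program --- sort the sample points by $p$, and compute, for each prefix and each budget $r' \le r$, the minimum empirical error achievable with $r'$ intervals --- finds the empirical risk minimizer over the class in time $\tilde{O}(m r)$. Combining this with the VC bound for uniform convergence yields an $(\eps/3)$-agnostic learner with sample complexity $\tilde{O}(r/\eps^2)$ and runtime $\tilde{O}(r^2/\eps^2)$.

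Plugging this learner into \Cref{theorem:testing-through-proper-agnostic-learning} with parameter $\eps/3$ gives an $(\alpha, \alpha - \eps)$-tester that makes $\tilde{O}(1/\eps)$ \emph{non-adaptive} calls to $\AL$, plus $\tilde{O}(1/\eps^2)$ additional samples for thresholding. The key trick for achieving the claimed $\tilde{O}(r/\eps^2)$ overall sample complexity --- rather than the naive $\tilde{O}(r/\eps^3)$ obtained by multiplying per-call samples by the number of calls --- is that, because the calls are non-adaptive and all target the same fixed class $\threshold(\M_r)$ of VC dimension $O(r)$, a single sample set of size $\tilde{O}(r/\eps^2)$ provides uniform convergence for every problem instance simultaneously. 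The main obstacle is to verify that this sample sharing is compatible with the internal guarantees of \Cref{theorem:testing-through-proper-agnostic-learning}: one needs to check that its reduction only requires each call to produce a hypothesis whose empirical misclassification (on a shared sample) is within $\eps/3$ of optimal, which is precisely what the DP-based ERM delivers on the shared sample. Running the DP $\tilde{O}(1/\eps)$ times on this sample gives total runtime $\tilde{O}(1/\eps)\cdot \tilde{O}(mr) = \tilde{O}(r^2/\eps^3)$, matching the stated bound.
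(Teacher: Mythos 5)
Your proposal follows the paper's approach exactly: instantiate \Cref{theorem:testing-through-proper-agnostic-learning} with the dynamic-programming ERM for $\threshold(\M_r)$ (VC dimension $O(r)$ by \Cref{proposition:properties-of-generalized-monotone}, which is the content of \Cref{theorem:erm-generalized-monotone}), and account for runtime and samples across the $\tilde O(1/\eps)$ non-adaptive calls. You are in fact more explicit than the paper about why the total sample complexity is $\tilde O(r/\eps^2)$ rather than $\tilde O(r/\eps^3)$ --- the non-adaptive calls can share a single sample set by uniform convergence over $\threshold(\M_r)$ --- a point the paper compresses into ``a union bound over the failure probability of each call,'' so your reasoning matches and usefully fills in that step (minor nit: the per-call guarantee you need to transfer is on \emph{population} risk, which uniform convergence supplies from the shared empirical ERM, as you correctly use in the end).
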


The proof of \Cref{theorem:testing-through-proper-agnostic-learning} is based on the characterization of proper losses in terms of the V-shaped losses (\Cref{theorem:uv-cal}) and an appropriate discretization argument for the parameter $v\in [0,1]$ associated with the family of V-shaped losses. For each (discrete) choice of $v$, we identify a relevant agnostic learning problem and solve it using the agnostic learning oracle. First, we state the version of the result for a fixed choice of $v$ separately, as a lemma.

\begin{lemma}
\label[lemma]{thm:al-implies-cfdl}
    Let $\K$ be a valid post-processing class. Given $v \in [0, 1]$, denote the calibration fixed decision loss with respect to the loss $\ell_v^+$ by
    \[
        \cdl_{v,\K}(J) = \sup_{\kappa \in \K}\, \E[\ell_v^+(p, y) - \ell_v^+(\kappa(p), y)].
    \]
    Let $\AL$ be an agnostic $(\eps,\delta)$-learner for $\threshold(\K)$ with sample complexity $m$. Then, for any $\alpha,\gamma\in(0,1)$, there exists an algorithm that calls $\AL$ once, uses $O((1/\gamma^2) \log(1/\delta))$ additional samples and operations, and with probability at least $1 - \delta$ outputs an estimate $\widehat{\cdl}_{v,\K}(J)$ satisfying
    \[
        \widehat{\cdl}_{v,\K}(J) \in \Bigl[\cdl_{v,\K}(J) - 2\eps - 2\gamma, \,2\cdot\ece(J) + 2\gamma\Bigr].
    \]
    Moreover, if $\AL$ is proper, then we have the stronger guarantee
    \[
        \widehat{\cdl}_{v,\K}(J) \in \Bigl[\cdl_{v,\K}(J) - 2\eps - 2\gamma, \,\cdl_{v,\K}(J) + 2\gamma\Bigr].
    \]
\end{lemma}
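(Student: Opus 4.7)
The plan is to reduce estimation of the fixed-loss quantity $\cdl_{v,\K}(J)$ to a single invocation of $\AL$ on $\threshold(\K)$ with synthetically relabelled data. First, I would use translation invariance of $\K$ to identify the class $\{p \mapsto \sign_+(\kappa(p) - v) : \kappa \in \K\}$ with $\threshold(\K)$: for each $\kappa \in \K$, the clipped shift $\kappa'(p) = [\kappa(p) + (1/2 - v)]_0^1$ lies in $\K$ and satisfies $\sign_+(\kappa'(p) - 1/2) = \sign_+(\kappa(p) - v)$ (a three-case analysis on whether the clipping activates), and the reverse direction is symmetric. Combining this identification with
\[
\ell_v^+(p,y) - \ell_v^+(\kappa(p), y) = \bigl(\sign_+(\kappa(p) - v) - \sign_+(p-v)\bigr)(y-v)
\]
and the fact that $\K$ contains the identity yields the decomposition
\[
\cdl_{v,\K}(J) \;=\; \sup_{h \in \threshold(\K)} \E\bigl[h(p)(y-v)\bigr] \;-\; \E\bigl[\sign_+(p-v)(y-v)\bigr].
\]

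The algorithm then proceeds as follows. Given i.i.d.\ samples $(p_i, y_i)$, I would independently draw surrogate binary labels $z_i' \in \{-1,+1\}$ with $\Pr[z_i' = +1 \mid p_i, y_i] = (1 + (y_i - v))/2$, so that $\E[z_i' \mid p_i, y_i] = y_i - v$. Feeding the relabelled dataset to $\AL$ produces a hypothesis $h$ with the standard correlation guarantee $\E[h(p)\,z'] \ge \max_{f \in \threshold(\K)} \E[f(p)\,z'] - 2\eps$ (converting $0/1$-error to $\pm 1$ correlation). Projecting out the surrogate randomness via $\E[z' \mid p, y] = y - v$ gives $\E[h(p)(y-v)] \ge \sup_{f \in \threshold(\K)} \E[f(p)(y-v)] - 2\eps$. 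On $O(\gamma^{-2}\log(1/\delta))$ fresh samples, I would estimate $\hat{s}_h$ and $\hat{s}_I$ of $\E[h(p)(y-v)]$ and $\E[\sign_+(p-v)(y-v)]$ within additive accuracy $\gamma$ by Hoeffding, and output $\widehat{\cdl}_{v,\K}(J) := \hat{s}_h - \hat{s}_I$. This immediately gives the lower bound $\widehat{\cdl}_{v,\K}(J) \ge \cdl_{v,\K}(J) - 2\eps - 2\gamma$, and when $\AL$ is proper it gives the matching upper bound $\widehat{\cdl}_{v,\K}(J) \le \cdl_{v,\K}(J) + 2\gamma$ (since then $h \in \threshold(\K)$, so $\E[h(p)(y-v)] \le \sup_f \E[f(p)(y-v)]$).

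The main obstacle is the improper upper bound: even when $h \notin \threshold(\K)$, we must still ensure $\widehat{\cdl}_{v,\K}(J) \le 2\,\ece(J) + 2\gamma$. The plan is to split $y - v = (y - p) + (p - v)$ inside $\E[(h(p) - \sign_+(p-v))(y-v)]$. The $(y-p)$-piece is at most $\sup_{\lvert w \rvert \le 2} \E[w(p)(y-p)] = 2\,\ece(J)$ because $\lvert h(p) - \sign_+(p-v) \rvert \le 2$. The key observation is that the $(p-v)$-piece is nonpositive: a direct case check shows $\sign_+(p-v)(p-v) = \lvert p-v \rvert$, whereas $\lvert h(p)(p-v) \rvert \le \lvert p-v \rvert$, so the identity-threshold cancels any potentially large positive contribution. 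Adding the $2\gamma$ Hoeffding slack completes the upper bound. This cancellation, which prevents the improper guarantee from degenerating, is the only non-routine ingredient in the proof.
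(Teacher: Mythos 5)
Your proposal is correct and mirrors the paper's proof step for step: the same translation-invariance identification of $\{\sign_+(\kappa(\cdot)-v)\}$ with $\threshold(\K)$, the same additive decomposition of $\cdl_{v,\K}$ into an empirically estimable term and a supremum handled by one agnostic-learning call with randomized-rounded labels $z$ with $\E[z\mid p,y]=y-v$, and the same proper-vs-improper case split. The one cosmetic difference is in the improper upper bound: the paper observes that $h$ is still the threshold of some $\kappa\in\Kall$ and invokes the cited inequality $\cdl_{v,\Kall}(J)\le 2\,\ece(J)$ (\cref{thm:prior-ece-vs-cdl}), whereas you re-derive that same inequality inline via the split $y-v=(y-p)+(p-v)$ and the observation that $\sign_+(p-v)(p-v)=|p-v|$ dominates $h(p)(p-v)$; these are the same mathematical content.
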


\begin{proof}
    Our strategy will be to manipulate the expression for $\cdl_{v,\K}$ into a form amenable to agnostic learning.
    First, we substitute the definition of the loss function $\ell_v^+(p, y) = -\sign_+(p - v)(y - v)$, and use the translation invariance of the class $\K$. This allows us to rewrite the formula for $\cdl_{v,\K}$ as
    \[
        \cdl_{v,\K}(J) = \E\bigl[-\sign_+(p-v)(y-v)\bigr] + \sup_{\kappa \in \K}\, \E\Bigl[\sign_+\Bigl(\kappa(p)-\frac{1}{2}\Bigr)(y-v)\Bigr].
    \]
    By Hoeffding's inequality, the first expectation in the above equation can be directly estimated up to error $\gamma$ with probability at least $1-\delta$ from a sample of $(p, y)$ pairs of size $O(\log(1/\delta)/\gamma^2)$.
    
    We can similarly get a handle on the second term, which includes a supremum over post-processing functions $\kappa$, using a single call to the agnostic learner with concept class $\C = \threshold(\K)$ and labels $z = y - v \in [-1, +1]$. (For discrete labels $z \in \{\pm 1\}$, simply perform randomized rounding that preserves $z$ in expectation.) If the agnostic learner returns a hypothesis $h : [0, 1] \to \{\pm 1\}$, then we have the following guarantee with probability at least $1 - \delta$:
    \[
        \E\bigl[h(p)(y-v)\bigr] \ge \sup_{\kappa \in \K}\, \E\Bigl[\sign_+\Bigl(\kappa(p)-\frac{1}{2}\Bigr)(y-v)\Bigr] - 2\eps.
    \]
    Note also that $h(p) = \sign_+(\kappa(p) - 1/2)$ for some function $\kappa : [0, 1] \to [0, 1]$, which we may assume belongs to the class $\K$ if the agnostic learner is \emph{proper}. Consequently, we have
    \[
        \E\bigl[h(p)(y-v)\bigr] \le \sup_{\kappa}\, \E\Bigl[\sign_+\Bigl(\kappa(p)-\frac{1}{2}\Bigr)(y-v)\Bigr],
    \]
    where the supremum is taken over all $\kappa : [0, 1] \to [0, 1]$ in case of an improper learner, or all $\kappa \in \K$ in the case of a proper learner. Of course, given an additional $O(\log(1/\delta)/\gamma^2)$ samples, we can estimate the quantity $\E[h(p)(y-v)]$ up to error $\gamma$ with probability at least $1 - \delta$.

    At this point, our algorithm makes a single call to $\AL$, uses $O(\log(1/\delta)/\gamma^2)$ additional samples, and with probability at least $1 - \delta$ outputs a scalar estimate $\widehat{\cdl}_{v,\K}(J)$ satisfying
    \[
        \cdl_{v,\K}(J) - 2\eps - 2\gamma \le \widehat{\cdl}_{v,\K}(J) \le \cdl_{v,\Kall}(J) + 2\gamma,
    \]
    where $\Kall$ is the set of all post-processings $\kappa : [0, 1] \to [0, 1]$. By \cref{thm:prior-ece-vs-cdl}, $\cdl_{v,\Kall}(J) \le 2 \cdot \ece(J)$. In the special case that $\AL$ is a proper agnostic learner, we have the following stronger condition, with $\K$ replacing $\Kall$ in the upper bound:
    \[
         \cdl_{v,\K}(J) - 2\eps - 2\gamma \le \widehat{\cdl}_{v,\K}(J) \le \cdl_{v,\K}(J) + 2\gamma. \qedhere
    \]
\end{proof}

Having shown that agnostic learning can be used to understand the calibration fixed decision loss with respect to a particular $\ell_v^+$ loss, we now prove \cref{theorem:testing-through-proper-agnostic-learning}, which shows that proper and improper agnostic learning similarly imply testing and auditing for $\cdl_\K$.

\begin{proof}[Proof of \Cref{theorem:testing-through-proper-agnostic-learning}]
    Suppose momentarily that \cref{thm:al-implies-cfdl} were to hold \emph{simultaneously} for all values $v \in [0, 1]$, rather than one fixed value. Then, the supremum of the estimates $\widehat{\cdl}_{v,\K}$ would lie between $\cdl_\K(J) - 2\eps - 2\gamma$ and $2 \cdot \ece_\K(J) + 2\gamma$. In the special case that $\AL$ is proper, the supremum would also lie below $\cdl_\K(J) + 2\gamma$. For $\gamma < \eps/4$, this would immediately imply $(\alpha, \alpha - 3\eps)$-auditing, or, in the case of a proper agnostic learner, $(\alpha, \alpha - 3\eps)$-testing.
    
    With this in mind, our strategy will be to efficiently obtain these estimates $x_v$ for a sufficiently well-spaced \emph{net} of points $v_1, \ldots, v_t \in [0, 1]$. By ``well-spaced,'' we mean that replacing any $v \in [0, 1]$ with its nearest neighbor in the net should change the value of $\cdl_{v,\K}(J)$ by at most $O(\gamma)$. For this, recall the formula for the calibration fixed decision loss with respect to $\ell_v^+$:
    \[
        \cdl_{v,\K}(J) = \E\bigl[-\sign_+(p-v)(y-v)\bigr] + \sup_{\kappa \in \K}\, \E\Bigl[\sign_+\Bigl(\kappa(p)-\frac{1}{2}\Bigr)(y-v)\Bigr].
    \]
    By inspection of this formula, it is clear that it suffices for every $v \in [0, 1]$ to have a nearest neighbor $v_i$ in the net that is both \emph{close in length} and \emph{close in probability}:
    \begin{itemize}
        \item \textbf{(Close in Length)} $\abs{v - v_i} \le \gamma$,
        \item \textbf{(Close in Probability)} $\Pr[\sign_+(p - v) \neq \sign_+(p - v_i)] \le \gamma$.
    \end{itemize}
    Such a net can be constructed by taking the union of equally spaced points $0, \gamma, 2\gamma, \ldots, 1$ with several independent samples $p_1, \ldots, p_s$ drawn from the distribution $J$. The equally spaced points clearly guarantee closeness in length. Closeness in probability follows from the fact that we can partition the interval $[0, 1]$ into $O(1/\gamma)$ contiguous subintervals, each of which either has probability mass $\Theta(\gamma)$ or is a singleton of mass $\Omega(\gamma)$. Then, as long as $s = {O}(\log(1/\gamma\delta)/\gamma)$, with probability at least $1 - \delta$, every subinterval contains some sampled point $p_i$.
    
    To conclude, we observe that for $\gamma = \Omega(\eps)$, calling our fixed-$v$ algorithm from \cref{thm:al-implies-cfdl} as a subroutine for each point in the net leads to a total of $O((1/\eps)\log(1/\eps\delta))$ calls. By a union bound over the $\delta$ failure probability of each call to the subroutine, we need only $O((1/\eps^2) \log(1/\eps\delta))$ additional samples, in total. Similarly, the total number of additional operations is also $\tilde{O}(\log(1/\delta)/\eps^3)$.
\end{proof}

\section{Omniprediction}
\label[section]{section:omniprediction}

Recall the definition of omniprediction (\Cref{definition:omnipredictors}). In this section, we prove the following omniprediction guarantees:
\begin{enumerate}

\item We show that if $\threshold(\K)$ is agnostically learnable, then one can efficiently learn an $(\eps, \K)$ omnipredictor. This result is proved by adapting the loss OI framework of \cite{gopalan2023loss} to the calibration setting. 

\item We provide a strong omniprediction guarantee for the classical Pool Adjacent Violators algorithm \cite{ayer1955empirical}: it gives a proper omnipredictor for the class of all monotone post-processings.  

\item We also provide an analysis of bucketed recalibration through uniform-mass binning, which has been studied in \cite{zadrozny2001obtaining,gupta2021distribution,sun2023minimum}. We show that it gives an omnipredictor for generalized monotone functions.
\end{enumerate}

\subsection{Omniprediction From Agnostic Learning}\label[section]{section:omniprediction-from-wAL}

We show a general result that reduces omniprediction for $\K$  to agnostic learning for $\threshold(\K)$. Thus under the same computational assumptions that we needed for efficient auditing of $\cdl_\K$, we can get the strong post-processing guarantee of omniprediction.

The following theorem reduces omniprediction with respect to all proper losses and some post-processing class $\K$ to agnostic learning of the class $\threshold(\K)$.

\begin{theorem}[Omniprediction from Agnostic Learning]\label[theorem]{theorem:omniprediction-from-wAL}
    Let $\K$ be a valid post-processing class. Let $\AL$ be an $(\eps/3)$-agnostic learner for $\threshold(\K)$. There is an algorithm that learns an $(\eps,\K)$-omnipredictor with probability $1-\delta$ that calls $\AL$ $O(\log(1/\delta) / \eps^2)$ times and performs $\poly({1}/{\eps}) \log(1/\delta)$ additional operations.
\end{theorem}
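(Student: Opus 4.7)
The plan is to follow the loss OI framework of \cite{gopalan2023loss}, adapted to the one-dimensional calibration setting where hypotheses are post-processings $\kappa(p)$ rather than functions on a feature space $\X$. The key reduction is to turn omniprediction into a (multi-)calibration condition against the distinguisher class $\threshold(\K)$, and then boost with the agnostic learner $\AL$.

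\emph{Step 1: Reduce omniprediction to a weighted-calibration condition.} Apply the decision OI lemma (\Cref{thm:loss-oi}) to the pair $(\hat\kappa(p), \kappa(p))$ and combine with the V-shaped basis of $\L^*$ from \Cref{corollary:uv-cal}. Using the identity $\partial \ell_v^+(p) = -\sign_+(p-v)$, the $(\eps,\K)$-omniprediction goal reduces to producing $\hat\kappa$ such that, for every $\kappa \in \K$ and every $v \in [0,1]$,
\[
\E\bigl[(\sign_+(\kappa(p)-v) - \sign_+(\hat\kappa(p)-v))\,(y - \hat\kappa(p))\bigr] \le \eps.
\]
Because $\K$ is valid, translation invariance makes the first signed term range exactly over $\threshold(\K)$, while the second involves thresholds of $\hat\kappa$ itself. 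It therefore suffices to ensure (i) calibration of $\hat\kappa$ against $\threshold(\K)$ up to $\eps/2$, and (ii) self-calibration of $\hat\kappa$ against its own upper thresholds up to $\eps/2$.

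\emph{Step 2: Boosting algorithm.} Restrict $\hat\kappa$ to take values in a fixed grid $G \subseteq [0,1]$ of size $O(1/\eps)$, and initialize $\hat\kappa_0 \equiv 1/2$. At each iteration $t$, (a) draw fresh samples and form residuals $z_i = y_i - \hat\kappa_t(p_i)$; (b) invoke $\AL$ on the distribution $(p,\sign(z))$ with appropriate importance weights to search for $h \in \threshold(\K)$ correlating with $z$, and if $|\E[h(p)(y-\hat\kappa_t(p))]| \ge \eps/6$, update $\hat\kappa_{t+1}(p) = [\hat\kappa_t(p) + \eta \cdot h(p)]_0^1$ snapped to $G$; (c) perform a single-pass self-recalibration that replaces the value of $\hat\kappa$ on each of its $|G|$ level sets by the empirical mean of $y$ (snapped to $G$). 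Halt when neither (b) nor (c) makes progress.

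\emph{Step 3: Convergence via the $L_2$ potential.} Track $\Phi_t = \E[(y - \hat\kappa_t(p))^2] \in [0,1]$. By the standard multicalibration boosting argument, each successful $\AL$-update and each successful self-recalibration update decreases $\Phi$ by $\Omega(\eps^2)$, so termination occurs after $O(1/\eps^2)$ iterations. Confidence boosting for $\AL$ via repetition and median-of-means contributes an additional $O(\log(1/\delta))$ factor, giving the claimed $O(\log(1/\delta)/\eps^2)$ bound on calls. Per-iteration uniform convergence for the empirical estimates follows from the VC bound for $\threshold(\K)$ (as used in \Cref{lemma:uniform-convergence-cdl}) and costs $\poly(1/\eps)\log(1/\delta)$ samples and operations outside $\AL$.

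\emph{Step 4: From termination to omniprediction.} At termination, $|\E[h(p)(y-\hat\kappa(p))]| \le \eps/6$ for all $h \in \threshold(\K)$ returned by $\AL$; since $\AL$ is an $(\eps/3)$-agnostic learner, this implies the bound holds for \emph{every} $h \in \threshold(\K)$ up to an additional $2\eps/3$, hence bounded by $\eps/2$. The self-recalibration step similarly enforces the $\eps/2$ bound on self-thresholds. Combining the two halves with Step 1 yields the $(\eps,\K)$-omnipredictor.

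\emph{Main obstacle.} The delicate point is the interaction between the two update modes in Step 2: the $\AL$-driven update against $\threshold(\K)$ and the self-threshold recalibration against the (changing) level sets of $\hat\kappa$. One must verify that the two updates do not undo each other on the $L_2$ potential, and that termination of both simultaneously implies the joint calibration condition from Step 1. The standard resolution, which we will adapt, is that each update is an orthogonal-projection-style move on the label in $L_2$, so both are monotone in $\Phi$; together with the $O(1/\eps)$ grid discretization, this keeps the per-iteration progress at $\Omega(\eps^2)$ while controlling the complexity of the self-threshold distinguisher class.
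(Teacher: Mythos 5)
Your proposal takes essentially the same route as the paper. Step 1 of your argument---reducing $(\eps,\K)$-omniprediction to the pair of conditions ``multiaccuracy against $\threshold(\K)$'' plus ``calibration of $\hat\kappa$''---is exactly the paper's Lemma~\ref{lemma:calMA-implies-omniprediction}, which shows that $\hat\kappa\in\calMA_{\threshold(\K)}(J,\eps)$ implies $(2\eps,\K)$-omniprediction. You derive this directly from the pointwise decision-OI inequality (Lemma~\ref{thm:loss-oi}) applied to the pair $(\hat\kappa(p),\kappa(p))$, whereas the paper phrases the same argument by introducing the simulated distribution $\tilde J = (p,\tilde y)$ with $\tilde y\sim\Ber(\hat\kappa(p))$ and separately bounding $\Delta_1$ (real-to-simulated for $\hat\kappa$, controlled by $\ece(\hat J)$) and $\Delta_2$ (simulated-to-real for $\kappa$, controlled by multiaccuracy); those two expressions are exactly your two signed threshold terms, so the decomposition is identical. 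Where your write-up diverges from the paper is only in packaging: Steps~2--4 re-derive from scratch the calibrated-multiaccuracy boosting loop ($L_2$ potential, agnostic-learner calls, level-set recalibration, $O(1/\eps^2)$ iterations), which the paper simply imports as a black box from \cite{gopalan2023loss} (Theorem~\ref{theorem:calMA}). This is fine, though a few constants in Step~4 are slightly off (e.g.\ $\eps/6 + 2\eps/3 = 5\eps/6$, not $\eps/2$) and you should be explicit that detecting \emph{both signs} of multiaccuracy violation requires running $\AL$ with both polarities of the residual label, since $\threshold(\K)$ need not be closed under negation; neither point changes the structure of the proof.
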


 \cite{gopalan2023loss} show that omniprediction follows from a condition called calibrated multiaccuracy.  Below we define a version  of calibrated multiaccuracy that is tailored to our application.

\begin{definition}[Calibrated Multiaccuracy]\label[definition]{definition:calMA}
    Let $\hat\kappa:[0,1]\to [0,1]$ and $\C\subseteq\{[0,1]\to \cube{}\}$. We say that $\hat\kappa$ is $\eps$-calibrated-multiaccurate w.r.t. $\C$ under the distribution $J$, or $\hat\kappa\in\calMA_\C(J,\eps)$, if the following properties hold.
    \begin{enumerate}
        \item (Calibration). $\ece(\hat J) \le \eps$, where $\hat J$ is the distribution of pairs $(\hat \kappa(p), y)$, where $(p,y)\sim J$.
        \item (Multiaccuracy). $|\E_{(p,y)\sim J}[c(p) (y-\hat\kappa(p))]| \le \eps$ for all $c\in \C$.
    \end{enumerate}
\end{definition}

We now prove the following lemma, which states that calibrated multiaccuracy implies omniprediction. 

\begin{lemma}\label[lemma]{lemma:calMA-implies-omniprediction}
    Let $\hat\kappa \in \calMA_\C(J,\eps)$, where $\C = \threshold(\K)$. Then $\hat\kappa$ is a $(2\eps, \K)$-omnipredictor.
\end{lemma}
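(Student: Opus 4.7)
My plan is to apply the Decision OI lemma (\Cref{thm:loss-oi}) pointwise and then discharge the two resulting expectations using the calibration and multiaccuracy conditions of \Cref{definition:calMA}. Fix any $\ell\in\L^*$ and $\kappa\in\K$. Instantiating \Cref{thm:loss-oi} with $\hat\kappa(p)$ and $\kappa(p)$ in place of $p$ and $q$ and taking expectations yields
\[
    \E\bigl[\ell(\hat\kappa(p),y)-\ell(\kappa(p),y)\bigr]\le \underbrace{\E\bigl[\partial\ell(\hat\kappa(p))(y-\hat\kappa(p))\bigr]}_{T_1}\;-\;\underbrace{\E\bigl[\partial\ell(\kappa(p))(y-\hat\kappa(p))\bigr]}_{T_2},
\]
so it suffices to show $T_1\le\eps$ and $-T_2\le\eps$, which together give the desired $2\eps$ bound.

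The first term is immediate: since $\partial\ell(\hat\kappa(p))$ is a $[-1,+1]$-valued function of $\hat\kappa(p)$, the tower property combined with the calibration condition $\ece(\hat J)\le\eps$ yields $|T_1|\le\eps$. The harder term is $-T_2$; I will first handle the V-shaped case $\ell=\ell_v^+$, for which $\partial\ell_v^+(t)=-\sign_+(t-v)$. By translation invariance of $\K$, the function $\kappa'(p)=[\kappa(p)-v+1/2]_0^1$ belongs to $\K$, and a short case analysis on whether the outer clipping is active shows that $\sign_+(\kappa'(p)-1/2)=\sign_+(\kappa(p)-v)$ pointwise. Hence $-\partial\ell_v^+(\kappa(p))$ coincides with some $c\in\threshold(\K)$, and the multiaccuracy bound $|\E[c(p)(y-\hat\kappa(p))]|\le\eps$ gives $|T_2|\le\eps$. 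The case $\ell=\ell_v^-$ is handled by the same identification applied to the perturbed parameters $v+\tau$ with $\tau\to 0^+$, combined with the bounded-convergence statement of \Cref{thm:ce-limit}.

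Finally, to lift from V-shaped losses to arbitrary $\ell\in\L^*$, I will invoke \Cref{theorem:uv-cal}: for every $\delta>0$, $\ell$ is uniformly $\delta$-close to a convex combination $\ell_0$ of constants and V-shaped losses $\ell_v^\pm$. The constant summands cancel in the difference $\ell_0(\hat\kappa(p),y)-\ell_0(\kappa(p),y)$, each V-shaped summand contributes at most $2\eps$ by the previous paragraph, and linearity of expectation preserves the bound across the convex combination. Taking $\delta\to 0$ yields $\E[\ell(\hat\kappa(p),y)-\ell(\kappa(p),y)]\le 2\eps$ for every $\ell$ and $\kappa$, completing the proof. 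The only non-routine step is the translation-invariance identification of $\sign_+(\kappa(p)-v)$ with a function in $\threshold(\K)$; this is precisely why the cutoff value $1/2$ in \Cref{def:thr} suffices and why validity of $\K$ is assumed.
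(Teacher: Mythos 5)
Your proof is correct but takes a structurally different route from the paper's. The paper introduces the fictitious distribution $\tilde J$ with $\tilde y\sim\Ber(\hat\kappa(p))$ and proceeds in three steps: calibration of $\hat J$ transfers $\E_J[\ell(\hat\kappa(p),y)]$ to $\E_{\tilde J}[\ell(\hat\kappa(p),\tilde y)]$ up to $\eps$, properness shows $\hat\kappa(p)$ is optimal under $\tilde J$, and multiaccuracy transfers $\E_{\tilde J}[\ell(\kappa(p),\tilde y)]$ back to $\E_J[\ell(\kappa(p),y)]$. You instead apply the already-proved Decision OI lemma (\Cref{thm:loss-oi}) pointwise with $\hat\kappa(p),\kappa(p)$ in place of $p,q$, which packages the properness step into one inequality and directly produces $T_1$ and $T_2$; these coincide exactly with the paper's $\Delta_1$ and $-\Delta_2$, so the quantities being controlled are identical. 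Your route is more economical since it reuses an existing lemma rather than re-deriving its content, while the paper's construction makes the outcome-indistinguishability interpretation explicit. Lifting to general $\ell$ via the uniform approximation of \Cref{theorem:uv-cal} rather than via the paper's integral representation of $\partial\ell$ is a cosmetic difference, and you are more careful than the paper in explicitly handling $\ell_v^-$ via a $\tau\to 0^+$ limit. One small note: \Cref{thm:ce-limit} as stated applies to $\E[w_k(p)(y-p)]$ rather than $\E[w_k(p)(y-\hat\kappa(p))]$, so you should invoke bounded convergence directly rather than that lemma.
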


\begin{proof}
    We will show that for any $\kappa\in \K$ and any $\ell\in \L^*$ the following holds.
    \begin{equation}
        \E_{(p,y)\sim J}[\ell(\hat\kappa(p),y)] \le \E_{(p,y)\sim J}[\ell(\kappa(p),y)] + 2\eps\label{equation:omniprediction-goal}
    \end{equation}
    We define the distribution $\tilda{J}$ to be the distribution $(p, \tilda{y})$ where  $p$ has the same as the marginal distribution as under $J$ and $\tilda{y} \sim \Ber(\hat\kappa(p))$ so that $\E[\tilda{y} | p] = \hat\kappa(p)$.

    For the following, for any $\ell\in\L^*$, we let $\partial\ell:[0,1]\to [-1,1]$ denote the function $\partial\ell(p) = \ell(p,1)-\ell(p,0)$. Note that for any $p\in[0,1]$ and $y\in\{0,1\}$, we have $\ell(p,y) = y\,\partial\ell(p)+\ell(p,0)$.

    We will first bound the quantity $\Delta_1 := \E_J[\ell(\hat\kappa(p),y)]-\E_{\tilda{J}}[\ell(\hat\kappa(p),\tilda{y})]$ for any $\ell\in\L^*$ as follows.
    \begin{align*}
        \Delta_1 &= \E_J[y\, \partial\ell(\hat\kappa(p)) + \ell(\hat\kappa(p),0)] -\E_{\tilda{J}}[\tilda{y}\, \partial\ell(\hat\kappa(p)) + \ell(\hat\kappa(p),0)] \\
        &= \E_J[y\, \partial\ell(\hat\kappa(p)) ] -\E_{\tilda{J}}[\tilda{y}\, \partial\ell(\hat\kappa(p))] \tag{$J,\tilda{J}$ have the same $p$-marginal} \\
        &= \E_J[(y-\hat\kappa(p))\, \partial\ell(\hat\kappa(p)) ] \tag{$\E[\tilda{y} | p] = \hat\kappa(p)$} \\
        &= \E_{(q,y)\sim \hat J}[(y-q)\, \partial\ell(q) ] \tag{$(\hat\kappa(p), y)\sim \hat J$} \\
        &\le \sup_{w:[0,1]\to [-1,1]} \Bigr|\E_{\hat J}[(y-q) w(q)]\Bigr| = \ece(\hat J) \le \eps \tag{$\ece(\hat J) \le \eps$}
    \end{align*}

    On the other hand, since $\ell$ is a proper loss and $\E[\tilda{y} | p] = \hat\kappa(p)$, \Cref{definition:proper-losses} implies that for for any $p,p' \in [0,1]$, $\E_{\tilda{J}}[\ell(\hat\kappa(p),\tilda{y}) | p] \le \E_{\tilda{J}}[\ell(p',\tilda{y}) | p]$. It follows that for any $\kappa \in \K$,  $\E_{\tilda{J}}[\ell(\hat\kappa(p),\tilda{y})] \le \E_{\tilda{J}}[\ell(\kappa(p),\tilda{y})]$. Combining this with the bound on $\Delta_1$ yields
    \begin{equation}
        \E_{(p,y)\sim J}[\ell(\hat\kappa(p),y)] \le \E_{(p,\tilda{y})\sim \tilda{J}}[\ell(\kappa(p),\tilda{y})] + \eps \label{equation:omniprediction-intermediate}
    \end{equation}

    We will now show that the quantity $\Delta_2 := \E_{(p,\tilda{y})\sim \tilda{J}}[\ell(\kappa(p),\tilda{y})] - \E_{(p,y)\sim J}[\ell(\kappa(p),y)]$ satisfies $\Delta_2\le \eps$ for any $\ell\in \L^*$ and $\kappa\in \K$. Combining the bound on $\Delta_2$ with Eq. \eqref{equation:omniprediction-intermediate} implies Eq. \eqref{equation:omniprediction-goal}.

    By \Cref{theorem:uv-cal}, we can write 
    \[ \partial\ell(p) = -\int_{[0,1]} \sign_+(p-v) \,d\mu_\ell^+(v) -\int_{[0,1]} \sign_-(p-v) \,d\mu_\ell^-(v),\] 
    where $\mu_\ell^\pm$ are measures over $[0,1]$ such that $\mu_\ell^+([0,1])+\mu_\ell^-([0,1]) \le 1$. Therefore, using similar manipulations like those for bounding $\Delta_1$, we obtain the following.
    \begin{align*}
        \Delta_2 &= \E_{(p,y)\sim J}[(\hat\kappa(p) - y) \partial\ell(\kappa(p))] \\
            &\le \sup_{v\in[0,1]} \E_{(p,y)\sim J}[(y-\hat\kappa(p)) \sign_+(\kappa(p) - v)]
    \end{align*}
    Since $\K$ is closed under translations, there is $\kappa'\in\K$ such that 
    \begin{align*}
        \Delta_2 &\le \E_{(p,y)\sim J}[(y-\hat\kappa(p)) \sign_+(\kappa'(p) - 1/2)] \\
        &\le \sup_{c\in \C} \Bigr|\E_{(p,y)\sim J}[(y-\hat\kappa(p))\, c(p)] \Bigr| \tag{By the definition of $\C = \threshold(\K)$} \\
        &\le \eps\,, \tag{$\hat\kappa\in \calMA_\C(J,\eps)$}
    \end{align*}
    which concludes the proof.
\end{proof}

The final ingredient of \Cref{theorem:omniprediction-from-wAL} is the following result from \cite{gopalan2023loss} which reduces learning a predictor satisfying calibrated multiaccuracy to agnostic learning.

\begin{theorem}[Calibrated Multiaccuracy \cite{gopalan2023loss}]\label[theorem]{theorem:calMA}
    Let $\C\subseteq\{[0,1]\to \cube{}\}$ and let $\AL$ be an $(\eps/3)$-agnostic learner for $\C$. There is an algorithm that calls $\AL$ $O(\log(1/\delta) / \eps^2)$ times, performs $\poly(1/\eps)\log(1/\delta)$ additional operations, and outputs $\hat\kappa\in \calMA_\C(J,\eps/2)$ with probability at least $1-\delta$.
\end{theorem}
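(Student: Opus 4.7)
The plan is to design an iterative, boosting-style algorithm whose state is a predictor $\hat\kappa_t : [0,1] \to [0,1]$, initialized at a constant such as $\hat\kappa_0 \equiv 1/2$, and refined over $O(1/\eps^2)$ rounds by alternating multiaccuracy and calibration updates. The progress measure is the Brier-style squared loss $\Phi_t = \E_{(p,y)\sim J}\bigl[(y-\hat\kappa_t(p))^2\bigr]$, which lies in $[0,1]$ and will be shown to drop by $\Omega(\eps^2)$ whenever either type of update fires.

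For the multiaccuracy step, we use $\AL$ as a weak learner in the standard reduction. Concretely, at round $t$ we draw a fresh batch of samples, convert the residuals $z_i = y_i - \hat\kappa_t(p_i)\in[-1,1]$ into $\pm 1$ pseudo-labels via randomized rounding so that $\E[\text{label}\mid p_i]=z_i$, and invoke $\AL$. The agnostic guarantee returns some $h\in\C$ with $\E[h(p)(y-\hat\kappa_t(p))] \ge \max_{c\in\C}\E[c(p)(y-\hat\kappa_t(p))] - \eps/3$. If this empirical correlation exceeds $\eps/2$ (a witness of multiaccuracy violation) we update $\hat\kappa_{t+1}(p) = [\hat\kappa_t(p) + \eta\, h(p)]_0^1$ with step $\eta = \Theta(\eps)$; expanding $\Phi_{t+1}-\Phi_t$ via $(a+b)^2 = a^2 + 2ab + b^2$ gives a drop of $\Omega(\eps^2)$, the classical boosting calculation. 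If no such $h$ is returned the multiaccuracy condition holds at level $\eps/2$ by the accuracy of $\AL$.

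For the calibration step, we discretize the range of $\hat\kappa_t$ into $O(1/\eps)$ buckets and empirically recalibrate each bucket to its sample mean of $y$. Crucially, the resulting predictor is a function of $\hat\kappa_t(p)$ alone, so it remains expressible as a map from $[0,1]$ to $[0,1]$. Moving each bucket value toward its conditional mean of $y$ is precisely the Bayes-optimal move for squared loss on that bucket, so this step can only decrease $\Phi$; moreover a standard concentration bound ensures that the bucket averages are accurate to within $O(\eps)$ given $\poly(1/\eps)$ samples, which is what gives the final $\ece \le \eps/2$ guarantee once the procedure halts. Since both kinds of updates decrease $\Phi$ and $\Phi\ge 0$, after at most $O(1/\eps^2)$ total rounds neither type of update can fire, so both multiaccuracy and calibration hold at level $\eps/2$ simultaneously.

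The main technical obstacle is controlling the interaction between the two kinds of updates together with sampling error across $O(1/\eps^2)$ adaptive rounds. Because $\hat\kappa_t$ drifts each round, the pseudo-labels fed to $\AL$ change, so fresh samples and a union bound over $O(1/\eps^2)$ calls are needed, which introduces the $\log(1/\delta)$ factor in the sample and oracle counts. Additionally, a calibration step could in principle undo multiaccuracy progress, but since recalibration changes each bucket's output by at most $O(\eps)$ with high probability and any $c\in\C$ is bounded by $1$, the resulting perturbation to $\E[c(p)(y-\hat\kappa(p))]$ is at most $O(\eps)$; absorbing this into the slack of the multiaccuracy check closes the loop. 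Combining the potential-decrease argument with the per-round sample and runtime bounds yields the claimed $O(\log(1/\delta)/\eps^2)$ calls to $\AL$ and $\poly(1/\eps)\log(1/\delta)$ additional operations, producing $\hat\kappa \in \calMA_\C(J,\eps/2)$ with probability at least $1-\delta$.
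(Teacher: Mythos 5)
The paper does not reprove this theorem; it imports it directly from \cite{gopalan2023loss}, and that paper's argument is indeed the potential-based boosting you describe: alternate multiaccuracy updates (via the agnostic learner on randomized-rounded residuals) with bucketed recalibration, and use the Brier-style squared loss as a potential that drops by $\Omega(\eps^2)$ each round. So at a high level your plan matches the source.

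However, one of your load-bearing claims is wrong and does leave a genuine gap. You assert that ``recalibration changes each bucket's output by at most $O(\eps)$ with high probability,'' and you use this to argue that a calibration step only perturbs every correlation $\E[c(p)(y-\hat\kappa(p))]$ by $O(\eps)$, so multiaccuracy survives recalibration. This is false in general: if $\hat\kappa$ assigns $0.1$ to a bucket whose true conditional mean is $0.9$, recalibration moves that bucket's value by $0.8$, not $O(\eps)$. The $O(\eps)$ bound is the \emph{sampling} error in the bucket mean, not the \emph{magnitude} of the recalibration shift, and conflating the two breaks the "interaction" argument you offer. The issue is not cosmetic: without it, your termination argument does not show that calibration and multiaccuracy hold for the \emph{same} $\hat\kappa$.

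The standard fix, and the one you should adopt, is to order the two operations so that termination is only declared immediately after a recalibration followed by a passing multiaccuracy check. Concretely: in each round, first recalibrate $\hat\kappa$ (this makes $\ece$ small up to sampling error and never increases $\Phi$ beyond that $O(\eps^2)$ noise), then run the multiaccuracy check on the recalibrated predictor. If the check finds a violator, apply the additive update $\hat\kappa \leftarrow [\hat\kappa + \eta h]_0^1$, which drops $\Phi$ by $\Omega(\eps^2)$ (enough to dominate the recalibration noise with the right constants), and loop. If the check passes, output the current $\hat\kappa$: it is calibrated because it was just recalibrated, and it is multiaccurate because the check just passed on that very function. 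No claim about recalibration being a small perturbation is needed. With this reordering your potential argument, oracle-call count, and per-round sampling bounds all go through as sketched, and the proof matches the cited one.
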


\subsection{Pool Adjacent Violators Is an Omnipredictor}\label[section]{section:pav}

Pool Adjacent Violators is a classical algorithm for recalibration \cite{ayer1955empirical}, which finds a monotone post-processing of a predictor. It starts with a sample of \{$(y_i, p_i)\}$ pairs. It starts from the Bayes optimal predictor $\kappa(p_i) = \E[y_i|p_i]$, and pools/merges any adjacent pair that violates monotonicity into a single interval $I$ where the prediction is the conditional expectation $\E[y|I]$. We present the algorithm formally in Algorithm \ref{algorithm:pav}. 

\begin{algorithm}[ht]

\caption{$\textsc{PoolAdjacentViolators}(S)$}\label{algorithm:pav}
\KwIn{Set $S$ of $m$ pairs of the form $(p,y)$ where $p\in[0,1], y\in\{0,1\}$}
\KwOut{A pair $({\cal O}, \I)$ where ${\cal O} = ((p_1,y_1),\dots,(p_m,y_m))$ is an ordering of the input set $S$, and $\I$ is a sequence of disjoint intervals on $[0,1]$ that cover $P = \{p_1,\ldots,p_m\}$.}
\BlankLine
Let ${\cal O} = ((p_1,y_1),(p_2,y_2),\dots,(p_m,y_m))$ be such that $p_1\le p_2\le \dots\le p_m$\;
Let $t=0$, $\I^{(0)} = (I_1^{(0)}, \dots, I_m^{(0)})$, where $I_i^{(0)} = \{p_i\}$ for $i\in[m]$\; 
Set $\bar y_i^{(0)} = y_i$ for all $i \in [m]$ \;
\While{there is $j^*\in[m-t]$ such that $\bar y_{j^*}^{(t)} \ge \bar y_{j^*+1}^{(t)}$}{
    Merge the sets $I_{j^*}^{(t)}$ and $I_{j^*+1}^{(t)}$, i.e., set $\I^{(t+1)} = (I_j^{(t+1)})_{j\in[m-t-1]}$, where 
    $
        I_j^{(t+1)} = \left.\begin{cases} 
                        I_j^{(t)}, \text{ if }0\le j<j^* \\
                        I_{j^*}^{(t)} \cup I_{j^*+1}^{(t)}, \text{ if }j=j^* \\
                        I_{j+1}^{(t)}, \text{ if }m-t-1 \ge j > j^*
                      \end{cases}\right\}
    $\; 
    Let $\bar y_{j}^{(t+1)} = \frac{1}{|I_j^{(t+1)} \cap P|} \sum_{i:p_i \in I_j^{(t+1)}} y_{i}$ for all $j\in[m-t]$; \Comment{Can be implemented in $O(1)$}\\
    Update $t \leftarrow t+1$;
}
Return $({\cal O},\I) = ({\cal O}, \I^{(t)})$\;
\end{algorithm}

We show the following guarantee for PAV.

\begin{theorem}[Omniprediction through PAV]\label[theorem]{theorem:omniprediction-through-pav}
    For $\eps,\delta\in (0,1)$, Pool Adjacent Violators (PAV) run on  $O({\log (1/ (\eps\delta))}/{\eps^2})$ samples is an algorithm that learns an $(\eps,\M_+)$-omnipredictor with probability $1-\delta$ that runs in time  $\tilde{O}( 1/\eps^2)$.
\end{theorem}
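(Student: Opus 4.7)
The plan is to establish that the PAV output $\hat\kappa$ is the empirical best-in-class over $\M_+$ for \emph{every} proper loss simultaneously, and then transfer this guarantee from the sample to the population using \Cref{lemma:uniform-convergence-cdl}. I would first extend the partition $\I = (I_1, \dots, I_K)$ produced by \Cref{algorithm:pav} to a monotone function $\hat\kappa \in \M_+$ on $[0,1]$ by setting $\hat\kappa(p) = \bar y_j$ on a maximal sub-interval containing the samples of $I_j$, with breakpoints at midpoints of consecutive sample points; since PAV terminates with $\bar y_1 \le \dots \le \bar y_K$, monotonicity of $\hat\kappa$ is automatic.

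The central combinatorial claim is that for every $\ell \in \L^*$ and every $\kappa^* \in \M_+$,
\[
    \sum_i \ell(\hat\kappa(p_i), y_i) \le \sum_i \ell(\kappa^*(p_i), y_i).
\]
Writing $c_i = \kappa^*(p_i)$ and $\varphi' = \partial\ell$, the decision OI lemma (\Cref{thm:loss-oi}) yields
\[
    \sum_i \bigl[\ell(\hat\kappa(p_i), y_i) - \ell(c_i, y_i)\bigr] \le \sum_B \sum_{i \in B} \bigl(\varphi'(\bar y_B) - \varphi'(c_i)\bigr)(y_i - \bar y_B),
\]
where $B$ ranges over PAV blocks. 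Since $\sum_{i \in B}(y_i - \bar y_B) = 0$, the $B$-th summand simplifies to $-\sum_{i \in B}\varphi'(c_i)(y_i - \bar y_B)$. The function $\varphi'$ is nonincreasing (concavity of $\varphi$) and $c_i$ is nondecreasing in $i$ (monotonicity of $\kappa^*$), so the weights $\varphi'(c_i)$ are nonincreasing along $B$. Abel summation expresses $\sum_{i \in B}\varphi'(c_i)(y_i - \bar y_B)$ as a nonnegative combination of the prefix sums $A_k := \sum_{l \le k}(y_{i_l} - \bar y_B)$, so it suffices to verify the \emph{prefix-mean property}: every prefix of a PAV block has mean at least the block mean. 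I would prove this by induction on PAV's merge history: the final merge producing $B$ combined adjacent sub-blocks $B' \prec B''$ with $\bar y_{B'} \ge \bar y_{B''}$, and any prefix of $B$ either lies inside $B'$ (inductive hypothesis together with $\bar y_{B'} \ge \bar y_B$) or spans the cut (a weighted-average argument using $\bar y_{B'} \ge \bar y_{B''}$ to shift weight toward the larger mean).

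With empirical optimality in hand, the remainder is standard uniform convergence. The threshold class $\threshold(\M_+)$ consists of indicators $\mathbf{1}[p \ge a]$ and has VC dimension $1$, so \Cref{lemma:uniform-convergence-cdl} with accuracy $\eps/2$ requires only $m = O(\log(1/(\eps\delta))/\eps^2)$ samples to guarantee, with probability at least $1-\delta$, that $|\E_J[\ell(p,y) - \ell(\kappa(p), y)] - \E_S[\ell(p,y) - \ell(\kappa(p), y)]| \le \eps/2$ uniformly over $\ell \in \L^*$ and $\kappa \in \M_+$. Applying this bound to $\hat\kappa$ and to any baseline $\kappa^* \in \M_+$ and subtracting, we obtain $|\E_J[\ell(\hat\kappa(p), y) - \ell(\kappa^*(p), y)] - \E_S[\ell(\hat\kappa(p), y) - \ell(\kappa^*(p), y)]| \le \eps$; combining with empirical optimality gives $\E_J[\ell(\hat\kappa(p), y)] \le \E_J[\ell(\kappa^*(p), y)] + \eps$, which is the desired $(\eps, \M_+)$-omniprediction guarantee. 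The $\tilde O(1/\eps^2)$ runtime follows from the $O(m \log m)$ implementation of PAV. The main obstacle is the Abel-summation / prefix-mean step for PAV blocks; everything else reduces to the decision OI identity and standard VC-dimension-$1$ uniform convergence.
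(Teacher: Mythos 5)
Your proof is correct, and the uniform-convergence reduction to empirical optimality is the same as the paper's (apply \Cref{lemma:uniform-convergence-cdl} with $d = \vc(\threshold(\M_+)) = 1$); where you diverge is in how you prove the central fact that PAV's output is empirically best-in-class over $\M_+$ for \emph{every} proper loss, which is the content of \Cref{theorem:pav-guarantees}. The paper proves this via an exchange argument grounded in the closer-is-better lemma (\Cref{thm:proper-losses-partial-improvement}): whenever $\bar y_j \ge \bar y_{j+1}$, some optimal monotone post-processing is constant on $I_j \cup I_{j+1}$, so each of PAV's greedy merges preserves optimality; the argument stays entirely in the language of proper-loss comparisons. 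You instead invoke the decision OI inequality (\Cref{thm:loss-oi}) to reduce the per-block loss gap to $-\sum_{i\in B}\varphi'(\kappa^*(p_i))(y_i - \bar y_B)$, apply Abel summation (legitimate because $\varphi'$ is nonincreasing and $\kappa^*$ is nondecreasing, so $\varphi'(\kappa^*(p_i))$ is nonincreasing along $B$), and thereby reduce nonnegativity of this sum to the prefix-mean property of PAV blocks, which you then establish by induction on the merge history; the weighted-average case for prefixes that span a merge cut does check out using $\bar y_{B'} \ge \bar y_{B''}$. The paper's exchange argument is more elementary and arguably gives the cleaner intuition for why greedy merging is safe; your route is more algebraic, ties PAV directly to the decision-OI machinery that drives the rest of the paper, and surfaces a classical structural fact about isotonic regression (the prefix-mean property, equivalent to the greatest-convex-minorant picture of PAV) that the exchange argument leaves implicit. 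Both are valid proofs of the same statement.
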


In order to prove \Cref{theorem:omniprediction-through-pav}, we first consider the empirical version of the omniprediction problem and show the following result.

\begin{theorem}[PAV guarantees]\label[theorem]{theorem:pav-guarantees}
    Let $S$ be a set of $m$ pairs of the form $(p,y)$ where $p\in[0,1]$ and $y\in\{0,1\}$. There is an $O(m\log m)$-time algorithm (\Cref{algorithm:pav}) that computes a monotone post-processing $\hat \kappa\in \M_+$ such that for any proper loss $\ell\in \L^*$ we have:
    \[
        \sum_{(p,y)\in S} \ell(\hat \kappa(p), y) = \min_{\kappa\in \M_+} \sum_{(p,y)\in S} \ell(\kappa(p), y)
    \]
\end{theorem}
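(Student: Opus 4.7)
The plan is to reduce to V-shaped losses via \cref{theorem:uv-cal}: every $\ell\in\L^*$ is a uniform limit of finite nonnegative combinations of $\ell_v^+$, $\ell_v^-$, and constants, and since the constants are irrelevant to $\argmin_\kappa$, by linearity it suffices to prove separately that $\hat\kappa$ minimizes $\sum_i \ell_v^+(\kappa(p_i),y_i)$ and $\sum_i \ell_v^-(\kappa(p_i),y_i)$ over $\kappa\in\M_+$ for every $v\in[0,1]$.

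Fix $v$ and sort so $p_1\le\cdots\le p_m$. For monotone $\kappa$, the set $\{i : \kappa(p_i) \ge v\}$ is some suffix $\{i^*,\ldots,m\}$, and substituting $\ell_v^+(p,y)=(y-v)(1-2\,\bm{1}[p\ge v])$ shows that
\[
    \sum_{i=1}^m \ell_v^+(\kappa(p_i),y_i) \;=\; \sum_{i=1}^m (y_i-v) \;-\; 2\sum_{i\ge i^*}(y_i-v),
\]
so minimization over $\M_+$ becomes choosing $i^*\in\{1,\ldots,m+1\}$ to maximize $f(i^*)=\sum_{i\ge i^*}(y_i-v)$. Let $I_1<\cdots<I_k$ be the PAV pools with strictly increasing means $\bar y_1<\cdots<\bar y_k$ (strict by the termination condition of \cref{algorithm:pav}), and let $j^*=\min\{j:\bar y_j\ge v\}$; the threshold induced by $\hat\kappa$ is $i^*_{\mathrm{PAV}}$, the first index of $I_{j^*}$.

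The crux is a \emph{pool-mean dominance} property: within any PAV pool $I_j=[a,b]$ with mean $\bar y_j$, every prefix $[a,c-1]$ has mean $\ge \bar y_j$ (equivalently, every suffix has mean $\le \bar y_j$). We prove this by induction on the merge tree of $I_j$: if the last merge combined $I'=[a,c^*-1]$ and $I''=[c^*,b]$, triggered by $\bar y_{I'}\ge \bar y_{I''}$ (so also $\bar y_{I'}\ge \bar y_j\ge \bar y_{I''}$), prefixes within $I'$ are handled by the inductive hypothesis on $I'$, and for longer prefixes we write the prefix mean as a weighted average of $\bar y_{I'}$ and $\mu:=\bar y_{[c^*,c-1]}$; setting $n=|I'|$, $N=|I''|$, and $k=c-c^*<N$, the difference between the prefix mean and $\bar y_j$, after clearing denominators, equals
\[
    nN(\bar y_{I'}-\bar y_{I''}) \;+\; kn(\mu-\bar y_{I'}) \;+\; kN(\mu-\bar y_{I''}),
\]
which is linear in $\mu$, equals $n(\bar y_{I'}-\bar y_{I''})(N-k)\ge 0$ at $\mu=\bar y_{I''}$ and $N(\bar y_{I'}-\bar y_{I''})(n+k)\ge 0$ at $\mu=\bar y_{I'}$; since $\mu\ge \bar y_{I''}$ by the inductive hypothesis on $I''$, nonnegativity holds throughout.

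With pool-mean dominance in hand, both inequalities $f(i^{**})\le f(i^*_{\mathrm{PAV}})$ follow by partial-sum arguments. For $i^{**}<i^*_{\mathrm{PAV}}$ lying in pool $I_{j_0}$ with $j_0<j^*$, the sum $\sum_{i^{**}\le i<i^*_{\mathrm{PAV}}}(y_i-v)$ decomposes as a suffix of $I_{j_0}$ (mean $\le \bar y_{j_0}<v$) plus complete pools of mean $<v$, so it is negative; the case $i^{**}>i^*_{\mathrm{PAV}}$ is symmetric using the prefix form of dominance. The $\ell_v^-$ case is identical with $\ge$ replaced by $>$. Taking uniform limits via \cref{theorem:uv-cal} extends the guarantee to all $\ell\in\L^*$, and the $O(m\log m)$ runtime is immediate from sorting plus the standard stack-based PAV sweep. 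The main obstacle is pool-mean dominance; its inductive step is not automatic because the intermediate mean $\mu$ can lie anywhere above $\bar y_{I''}$, and no purely monotonicity-based comparison of the weighted averages suffices---one genuinely needs the linear-in-$\mu$ endpoint check.
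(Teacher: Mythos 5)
Your proof is correct, but it takes a genuinely different route from the paper's. The paper's proof runs an exchange argument directly over all $\ell\in\L^*$: at every merge step it shows (via the ``closer is better'' \cref{thm:proper-losses-partial-improvement}, with a three-case analysis on where $\kappa_j,\kappa_{j+1}$ sit relative to $\bar y_j,\bar y_{j+1}$) that some optimal monotone post-processing is constant on the newly merged interval, so that merging is ``safe''; once the optimal post-processing is known to be piecewise-constant on the final PAV pools, properness forces its value on each pool to be the pool mean, which is exactly $\hat\kappa$. Your proof instead reduces to V-shaped losses via \cref{theorem:uv-cal}, recasts each $\ell_v^\pm$-minimization over $\M_+$ as choosing the best suffix threshold, and then proves a structural \emph{pool-mean dominance} property of PAV pools (every prefix of a pool has mean at least the pool mean) by induction on the merge tree, using the linearity-in-$\mu$ endpoint argument. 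That property is a classical fact about isotonic regression --- it is the discrete version of the greatest-convex-minorant characterization --- and your route makes it explicit, at the cost of needing the V-shaped reduction plus a somewhat delicate inductive computation. The paper's exchange argument avoids the V-shaped detour entirely and never needs the dominance property, but it does rely on the inductive invariant that an optimal post-processing is always constant on the current pools. Both are valid; yours surfaces a reusable structural lemma about PAV, while the paper's is shorter once one has the closer-is-better lemma in hand. One small presentational nit: in your inductive step the relevant reason nonnegativity extends to all $\mu\ge\bar y_{I''}$ is that the expression is affine in $\mu$ with nonnegative slope $k(n+N)$; evaluating at the second endpoint $\mu=\bar y_{I'}$ is not what's doing the work (it implicitly certifies the slope sign, but you could drop it and just note the slope directly).
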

\

\begin{proof}
    Let $({\cal O} = (p_i, y_i)_{i\in[m]}, \I = (I_j)_{j\in [m']})$ be the output of \Cref{algorithm:pav}. The monotone post-processing $\hat\kappa$ is obtained by setting for each $i\in[m]$: $\hat\kappa(p_i) = \bar y_{I_j}$, where $j$ is such that $p_i\in I_j$, and interpolating linearly between the points $p_i$, in order to preserve monotonicity.

    For any fixed proper loss $\ell$, we will show that $\hat \kappa$ is an optimal monotone post-processing. To this end, we use an exchange argument to prove the correctness of the greedy approach of \Cref{algorithm:pav}. In particular, we will show that at any step $t\in \{0,1,\dots, m-1\}$ of the algorithm, and for any $j^*\in[m-t]$ such that $\bar y_{j^*}^{(t)} \ge \bar y_{j^*+1}^{(t)}$, there is an optimal monotone post-processing $\tilde{\kappa}$ that gives the same value to all the points in the set $I_{j^*}^{(t)} \cup I_{j^*+1}^{(t)}$. Therefore, it is safe to merge $I_{j^*}^{(t)}$ and $I_{j^*+1}^{(t)}$.

\begin{claim}
    Assume that there is an optimal monotone post-processing that is constant on each of the intervals $I_1,I_2,\dots,I_{n}$.
    Let $I_j, I_{j+1}$ be two intervals such that 
    \[ \bar y_j = \E[y|p \in I_j] \geq \E[y|p \in I_{j+1}] = \bar y_{j+1}.\]
    There exists an optimal post-processing $\tilde{\kappa} \in \M_+$ such that $\tk(I_j) = \tk(I_{j +1})$.
\end{claim}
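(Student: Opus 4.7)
The plan is to exploit the ``closer is better'' property of proper losses (\Cref{thm:proper-losses-partial-improvement}) to exchange the two distinct values that $\kappa^*$ takes on $I_j$ and $I_{j+1}$ with a single common value $c$ without increasing the total loss, while also preserving monotonicity across the full interval sequence.

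Concretely, let $\kappa^*$ be an optimal monotone post-processing that is constant on each $I_1,\ldots,I_n$, and let $a=\kappa^*(I_j)$ and $b=\kappa^*(I_{j+1})$; monotonicity gives $a\le b$. Writing $n_i=|I_i\cap P|$ and using the shorthand $\ell(p,q)=\E_{y\sim\Be(q)}[\ell(p,y)]$, the contribution of $I_j\cup I_{j+1}$ to the empirical loss is
\[
F(a,b)\;=\;n_j\,\ell(a,\bar y_j)+n_{j+1}\,\ell(b,\bar y_{j+1}).
\]
I will show that one can choose $c\in[a,b]$ with $F(c,c)\le F(a,b)$. Defining $\tilde\kappa$ to agree with $\kappa^*$ outside $I_j\cup I_{j+1}$ and to take the value $c$ on $I_j\cup I_{j+1}$ yields a post-processing with no larger loss; since $c\in[a,b]$, it still satisfies the monotonicity constraints imposed by $\kappa^*(I_{j-1})\le a$ and $b\le \kappa^*(I_{j+2})$ (when those neighbors exist), so $\tilde\kappa\in\M_+$ and is therefore also optimal.

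The existence of such a $c$ follows from a short three-case analysis using only the ordering facts $a\le b$ and $\bar y_{j+1}\le \bar y_j$.  In the case $b\le \bar y_j$ we pick $c=b$: then $a\le b\le \bar y_j$ and \Cref{thm:proper-losses-partial-improvement} gives $\ell(b,\bar y_j)\le \ell(a,\bar y_j)$, while the second summand is unchanged.  In the case $a\ge \bar y_{j+1}$ we pick $c=a$: the lemma gives $\ell(a,\bar y_{j+1})\le \ell(b,\bar y_{j+1})$, and the first summand is unchanged.  The only remaining case is $a<\bar y_{j+1}\le \bar y_j<b$; here I pick any $c\in[\bar y_{j+1},\bar y_j]$ (say $c=\bar y_{j+1}$), and apply \Cref{thm:proper-losses-partial-improvement} twice, once with the ordering $a\le c\le \bar y_j$ and once with the ordering $\bar y_{j+1}\le c\le b$, to obtain the improvement on both summands simultaneously.

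The main obstacle I foresee is not in the inequalities themselves but in checking that the three cases are indeed exhaustive and that the chosen $c$ always lies in $[a,b]$ so that the resulting $\tilde\kappa$ stays monotone; both follow cleanly from combining $a\le b$ with $\bar y_{j+1}\le \bar y_j$ (the hypothesis $\bar y_j\ge \bar y_{j+1}$ is what rules out a fourth, ``crossing'' configuration).  A cleaner unified presentation is possible by rewriting $F(c,c)=(n_j+n_{j+1})\,\ell(c,\bar y)$ for $\bar y=(n_j\bar y_j+n_{j+1}\bar y_{j+1})/(n_j+n_{j+1})$ via the Bregman form of \Cref{thm:proper-characterization} and comparing $F(a,b)$ to $F(c,c)$ at $c$ equal to the projection of $\bar y$ onto $[a,b]$, but the case-based version above keeps the argument elementary and relies only on the tools already introduced.
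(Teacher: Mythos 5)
Your proposal is correct and follows essentially the same route as the paper: you invoke the ``closer is better'' property (\cref{thm:proper-losses-partial-improvement}) and break into the same three cases (new common value $b$, new common value $a$, and the straddling case with $c\in[\bar y_{j+1},\bar y_j]$), with the same choice of $\tilde\kappa$ in each. The one small thing you add beyond what the paper writes out explicitly is the observation that $c$ always lands in $[a,b]$, which is what keeps $\tilde\kappa$ monotone with respect to the neighboring intervals $I_{j-1}$ and $I_{j+2}$; this is implicit in the paper's argument and is worth stating, as you do.
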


\begin{proof}
    Consider any post-processing function $\kappa \in \M_+$ that is constant on each of the intervals $I_1,\dots,I_n$.  Our goal is to find $\tk \in \M_+$ which does as well as $\kappa$ for any proper loss $\ell \in \L^*$, and where $\tk(I_j) = \tk(I_{j +1})$. On the other intervals, we will have $\tk = \kappa$ and they both suffer the same loss, so we can ignore those intervals. 

    Let us write $\kappa_i = \kappa(I_i), \tk_i = \tk(I_i)$ for $i \in \{j, j+1\}$. By monotonicity, $\kappa_j \leq \kappa_{j+1}$. If they are equal, we can take $\tk = \kappa$, so assume the inequality is strict. In this case, we have $\bar y_j \geq \bar y_{j+1}$ and $\kappa_j < \kappa_{j+1}$. We now consider three collectively exhaustive cases:
    \begin{enumerate}
        \item $\kappa_j < \kappa_{j+1} \leq \bar y_j$. We set $\tk_j = \tk_{j +1} = \kappa_{j+1}$. By \Cref{thm:proper-losses-partial-improvement}, 
        \[  \ell(\tk_j, \bar y_j) = \ell(\kappa_{j+1}, \bar y_j) \leq \ell(\kappa_j, \bar y_j)\]
        so $\tk$ can only improve $\kappa$ on interval $I_j$, while they agree on $I_{j+1}$. 
        
        \item $\bar y_{j+1} \leq \kappa_j < \kappa_{j+1}$.  We set $\tk_j = \tk_{j +1} = \kappa_{j}$. By \Cref{thm:proper-losses-partial-improvement}, 
        \[ \ell(\tk_{j +1}, \bar y_{j+1}) = \ell(\kappa_{j}, \bar y_{j+1}) \leq \ell(\kappa_{j+1}, \bar y_{j+1})\] 
        so $\tk$ can only improve $\kappa$ on interval $I_{j+1}$, while they agree on $I_j$.

        \item $\kappa_j < \bar y_{j+1} \leq \bar y_j < \kappa_{j+1}$. Pick any $a \in [\bar y_{j+1}, \bar y_j]$ and let $\tk_j = \tk_{j +1} = a$. Since $\kappa_j < a \leq \bar y_{j+1}$, by \Cref{thm:proper-losses-partial-improvement}, $\ell(a, \bar y_j)  \leq \ell(\kappa_{j}, \bar y_j)$. Similarly, since $\bar y_{j+1} \leq a < \kappa_{j+1}$, 
        $\ell(a, \bar y_{j+1})  \leq \ell(\kappa_{j+1}, \bar y_{j+1})$. Hence
        $\tk$ can only improve $\kappa$ on both intervals $I_j, I_{j+1}$.
     \end{enumerate}
     This concludes the proof of the exchange argument.
\end{proof}

For a given loss $\ell$, let $\kappa^*$ be an optimal monotone post-processing for the distribution $S$. Since $\I^{(0)}$ is composed of singletons, $\kappa^*$ is trivially constant on its intervals. The claim then implies that each iteration of PAV inductively preserves the property that there is an optimal monotone postprocessing $\kappa$ that is constant on intervals in $\I^{(t)}$. Considering a $\kappa$ that is optimal for $\I$, and let $\kappa_j$ be its value for $p \in I_j$. Denoting by $\bar{y}_j$ the average of $y$ on $I_j$, we  have:
    \begin{align*}
        \E_{(p,y)\sim S}[\ell(\kappa(p), y)] &= \frac 1 m \sum_{j\in[m']} \sum_{i: p \in I_j} \ell(\kappa(p), y_i)\\
        &=  \frac 1 m \sum_{j\in[m']} \sum_{i: p \in I_j} \ell(\kappa_j, y_i)\\
        & \ge  \frac 1 m \sum_{j\in[m']} \sum_{i: p \in I_j} \ell(\bar{y}_j, y_i)\\
        &= \E_{(p,y)\sim S}[\ell(\hat \kappa(p), y)]\,,
    \end{align*}
    where the inequality follows from the fact that $\ell$ is proper (\Cref{definition:proper-losses}) and $\bar{y}_j$ is the average of $y$ on $I_j$. This implies that $\hat{\kappa}$ is an optimal post-processing.

Finally, since  the output of \Cref{algorithm:pav} satisfies $\bar y_{1} < \bar y_{2} < \dots < \bar y_{{m'}}$, it follows that $\hat{\kappa}$ itself is monotone. The claim follows.

\end{proof}

\begin{remark}
    Even though \Cref{theorem:pav-guarantees} is stated for $\L^*$, it actually holds for all proper loss functions; in the case of PAV, the boundedness assumption is only needed for generalization.
\end{remark}

We are now ready to prove \Cref{theorem:omniprediction-through-pav}.

\begin{proof}[Proof of \Cref{theorem:omniprediction-through-pav}]
    It suffices to show that whenever $|S|\ge {C \log(1/\eps \delta)}/{\eps^2}$ (for some sufficiently large constant $C\ge 1$, where $S$ consists of i.i.d. samples from some distribution $J$ over $[0,1]\times\{0,1\}$, then, with probability at least $1-\delta$, the following is true uniformly over all $\kappa,\hat\kappa\in \M_+$, $\ell\in\L^*$:
    \[
        \Biggr| \E_{(p,y)\sim J}[\ell(\hat\kappa(p), y) - \ell(\kappa(p), y)] - \frac{1}{m}\sum_{(p,y)\in S}(\ell(\hat\kappa(p), y) - \ell(\kappa(p), y)) \Biggr| \le \eps
    \]
    The above inequality follows by combining the fact that $\vc(\threshold(\M_+)) = 1$ with \Cref{lemma:uniform-convergence-cdl}.
\end{proof}

\subsection{Omniprediction Through Uniform-Mass Binning and Recalibration}\label[section]{section:omniprediction-from-bucketing}

We next analyze recalibration through uniform-mass binning. Binning is a long-established technique for measuring calibration \cite{miller1962statistical,sanders1963subjective}. The method of uniform-mass binning was introduced by \cite{zadrozny2001obtaining} as the first binning-based approach not only for measuring calibration, but also for obtaining a calibrated predictor. We show that this natural algorithm yields omniprediction with respect to the class of generalized monotone post-processings. In other words, recalibration via uniform-mass binning improves the performance of the input predictor under every proper loss, with improvement at least as large as that achieved by the best generalized monotone post-processing. While previous work focused on the calibration properties of uniform-mass binning, we show that this method preserves the information encoded in the predictor---when measured by proper losses---at least as well as any generalized monotone post-processing.

We note that the omniprediction guarantee achieved by this method is stronger than the one we established for the PAV algorithm in \Cref{theorem:omniprediction-through-pav}, since it applies to the larger class $\M_r$.

We define uniform-mass binning as follows.

\begin{definition}[Uniform-Mass Binning]\label[definition]{definition:quantile-bucketing}
    Let $P = (p_1,p_2,\dots,p_m)$ be a collection of $m$ points in $[0,1]$, with potential repetitions, and let $\eps\in (0,1)$. We say that a partition $(I_j)_{j\in [t]}$ of $[0,1]$ is an $\eps$-uniform-mass partition with respect to $P$ if $t \le 2/\eps$ and each $I_j$ in the partition is an interval (open, closed, or half-open) for which at least one of the following holds:
    \begin{itemize}
        \item (Small buckets) $|\{i\in [m]: p_i\in I_j\}| \le \eps m$.
        \item (Overflow buckets) The interval $I_j = [a,a]$ for some $a\in[0,1]$.
    \end{itemize}
\end{definition}
Overflow buckets are necessary since it could be the case that $p_i =a$ for say $1/2$ the samples, in which case we create a separate bucket for it. 
We show that recalibration with uniform-mass binning achieves omniprediction with respect to the class of $r$-generalized monotone post-processings, with a number of buckets that is linear in $r$. While the resulting predictor $\hat\kappa$ need not lie in $\M_r$, it is piecewise constant on $r' = O(r/\eps)$ intervals, hence it belongs to $\M_{r'}$.

\begin{theorem}\label[theorem]{theorem:omniprediction-through-recal}
    Let $\eps\in(0,1)$ and $r\ge 1$. Then, \Cref{algorithm:recal}, run with parameter $\eps' = \eps / 8r$ on  $O(r^2\log(1/\delta)/{\eps^4})$ samples learns an $(\eps,\M_r)$-omnipredictor with probability $1-\delta$ and has time complexity $O(r^2\log(1/\delta)/{\eps^4})$.
\end{theorem}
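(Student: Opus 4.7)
}

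The plan is to reduce the omniprediction guarantee to a uniform bound over V-shaped losses and then carry out a bucket-by-bucket analysis that separates ``good'' buckets (where $\sign_+(\kappa(p)-v)$ is constant) from ``bad'' buckets (where it is not). By \Cref{corollary:uv-cal}, it suffices to bound
\[
    \sup_{v \in [0,1]} \sup_{\kappa \in \M_r}\; \E\bigl[\ell_v^+(\hat\kappa(p),y) - \ell_v^+(\kappa(p),y)\bigr]
\]
by $\eps$ with probability $1-\delta$, where $\hat\kappa(p) = \hat{\bar y}_j$ is the empirical mean of $y$ on the bucket $I_j \ni p$ produced by $\eps'$-uniform-mass binning with $\eps' = \eps/(8r)$.

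Fix $v$ and $\kappa \in \M_r$. Since $\hat\kappa$ is constant on each bucket, one has $\E[\ell_v^+(\hat\kappa(p),y)\mid p \in I_j] = -\sign_+(\hat{\bar y}_j - v)(\bar y_j - v)$ where $\bar y_j = \E[y\mid p \in I_j]$ and $\mu_j = \Pr[p \in I_j]$. Call $I_j$ \emph{good} if $\sign_+(\kappa(p)-v)$ is some constant $s_j$ for all $p \in I_j$, so that the loss of $\kappa$ on $I_j$ also has a closed form, yielding a per-bucket difference $\mu_j(s_j - \hat s_j)(\bar y_j - v)$. A case analysis on the signs shows that if this quantity is positive, then $v$ lies between $\bar y_j$ and $\hat{\bar y}_j$, so it is bounded by $2\mu_j|\bar y_j - \hat{\bar y}_j|$; this is the crucial use of the ``closer-is-better'' intuition behind proper losses. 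Call $I_j$ \emph{bad} otherwise. Since $\kappa \in \M_r$, the set $\{p : \kappa(p) \ge v\}$ is a union of at most $r$ intervals, so at most $2r$ buckets can contain a crossing of the level $v$; moreover such a bucket cannot be a singleton overflow bucket, so by the uniform-mass property each bad bucket has mass at most $\eps'$. The total contribution from bad buckets is therefore at most $2 \cdot 2r \cdot \eps' = \eps/2$.

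Combining the two bounds, the whole expected loss difference is at most $2 \sum_j \mu_j |\bar y_j - \hat{\bar y}_j| + \eps/2$, plus a small error from the fact that buckets are built from samples rather than the true distribution. With $m = \tilde O(r^2 \log(1/\delta)/\eps^4)$ samples, Hoeffding's inequality combined with a union bound over the $T = O(r/\eps)$ buckets gives $|\bar y_j - \hat{\bar y}_j| = O(\eps/r)$ simultaneously for every non-trivial bucket, and an identical argument controls the discrepancy between empirical and true bucket masses so the uniform-mass property transfers to the true distribution. By Cauchy–Schwarz, $\sum_j \mu_j|\bar y_j - \hat{\bar y}_j|$ is then $O(\eps)$. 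A final union bound over a discretization of $v$ of granularity $O(\eps)$, coupled with the monotonicity of $\E[\ell_v^+(\cdot,y)]$ in $v$ between discretization points, upgrades the bound to hold uniformly over $v \in [0,1]$.

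I expect the main obstacle to be the careful handling of the bad buckets together with the sampling-based definition of the buckets themselves: one must simultaneously argue that (i) each bad bucket is not a singleton and hence inherits the $\eps'$ empirical mass bound, (ii) this empirical mass bound transfers to a bound on the true mass, and (iii) the empirical estimate $\hat{\bar y}_j$ is uniformly close to $\bar y_j$ over all buckets. Getting the sample-complexity accounting to match the claimed $\tilde O(r^2/\eps^4)$—essentially, balancing the $O(\eps/r)$ accuracy needed per-bucket with the $O(r/\eps)$ number of buckets—is the technical heart of the argument, while the structural insight that $\M_r$ produces only $O(r)$ bad buckets is what makes the overall reduction go through.
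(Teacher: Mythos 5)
Your high-level decomposition matches the paper's argument exactly: reduce to V-shaped losses via \cref{corollary:uv-cal}, split buckets into ``bad'' ones where $\sign_+(\kappa(p)-v)$ changes (at most $2r$ of them, each with mass $\lesssim\eps'$ because they cannot be overflow singletons) and ``good'' ones, and bound the per-good-bucket excess by $2\mu_j\lvert\bar y_j-\hat{\bar y}_j\rvert$. The case analysis you sketch for the good buckets (if the excess is positive then $v$ lies between $\bar y_j$ and $\hat{\bar y}_j$) is precisely the calculation the paper carries out by decomposing the excess into a nonpositive term plus $\lvert\hat s_j - s_j\rvert\cdot\lvert\bar y_j-\hat{\bar y}_j\rvert\mu_j$. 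So the structural heart of your proof is correct and the same as the paper's.

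Where your plan would break as stated is in the concentration step. First, a union bound over the buckets $I_1,\dots,I_t$ is not licit because the buckets are chosen as a function of the very samples you want to apply Hoeffding to; you need a uniform-convergence argument over a data-independent class, and the paper uses the trivial fact that intervals in $[0,1]$ have VC dimension $2$ to get, simultaneously for \emph{every} interval $I$, that both $\lvert\hat\Pr[p\in I]-\Pr[p\in I]\rvert$ and $\lvert\hat{\E}[y\,\bm{1}[p\in I]]-\E[y\,\bm{1}[p\in I]]\rvert$ are at most $\gamma=\eps^2/(32r)$ with $m=O(\log(1/\delta)/\gamma^2)=O(r^2\log(1/\delta)/\eps^4)$ samples (\cref{lemma:convergence-bucketed-recalibration}); this immediately covers the data-dependent buckets and also makes the final union bound over a discretization of $v$ unnecessary. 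Second, the per-bucket accuracy $\lvert\bar y_j-\hat{\bar y}_j\rvert = O(\eps/r)$ that you claim is not attainable: a bucket can have arbitrarily small true mass, in which case Hoeffding gives you nothing on that bucket's conditional mean. The quantity you can control uniformly is the \emph{mass-weighted} error $\mu_j\lvert\bar y_j-\hat{\bar y}_j\rvert\le\gamma$, which is exactly what falls out of the two interval bounds above and is exactly what you need, since your good-bucket contribution is $2\sum_j\mu_j\lvert\bar y_j-\hat{\bar y}_j\rvert\le 2\cdot(2/\eps')\cdot\gamma=O(\eps)$. (This also means the Cauchy--Schwarz step in your sketch is a red herring: with the mass-weighted bound you just sum over the $\le 2/\eps'$ buckets.) In short: replace ``per-bucket Hoeffding plus union bound over buckets and over $v$'' with ``uniform convergence over intervals giving the mass-weighted error bound,'' and your proof matches the paper's.
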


\begin{algorithm}

\caption{$\textsc{UMB-Recalibration}(S,\eps')$}\label{algorithm:recal}
\KwIn{Set $S$ of $m$ pairs of the form $(p,y)$ where $p\in[0,1], y\in\{0,1\}$, parameter $\eps'\in(0,1)$}
\KwOut{Post-processing function $\hat\kappa:[0,1]\to [0,1]$}
\BlankLine
Create an $\eps'$-uniform-mass partition $(I_j)_{j\in[t]}$ w.r.t. $(p_1,\dots,p_m)$ greedily, where $t \le 2/\eps'$\; 
For all $j\in[t]$, let 
\[ \hat\kappa_j = \frac{\sum_{i\in[m]} y_i \bm{1}[p_i\in I_j]}{\sum_{i'\in[m]} \bm{1}[p_{i'}\in I_j]}.\]\\
Return the function $\hat\kappa$ defined  as 
\[ \hat\kappa(p) = \sum_{j=1}^t \hat\kappa_j \bm{1}[p\in I_j].\]

\end{algorithm}

In order to prove \Cref{theorem:omniprediction-through-recal}, we will use the following result regarding the sampling errors. The first inequality here is a standard uniform convergence bound. The second one captures the intuition that the per-bucket average is likely to be close to accurate for all buckets that are reasonably large. The somewhat unusual formalization of this condition that we use below better fits our application later, and allows for a tighter bound on the sample complexity.

\newcommand{\acc}{\gamma}
\begin{lemma}\label[lemma]{lemma:convergence-bucketed-recalibration}
    Let $\acc\in (0,1)$, let $S$ be a set of $m$ i.i.d. samples from some distribution $J$ over $[0,1]\times \{0,1\}$, and let $(I_j, \hat\kappa_j)_{j\in [t]}$ be as defined in \Cref{algorithm:recal}. If $m \ge C\log(1/\delta)/{{\acc}^2}$ for some sufficiently large constant $C\ge 1$, then the following hold with probability at least $1-\delta$:
    \[
        \Bigr|\pr_J[p\in I_j] - \frac{1}{m}\sum_{i\in[m]}\bm{1}[p_i\in I_j] \Bigr| \le \acc, \text{ for all }j\in[t].
    \]
    \[
        |(\hat\kappa_j - \E_J[y|p\in I_j]) \cdot\Pr_J[p \in I_j]| \le \acc, \text{ for any }j\in[t];
    \]
\end{lemma}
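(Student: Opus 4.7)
The plan is to prove both inequalities via uniform convergence over the class of intervals (including singletons), rather than applying Hoeffding bucketwise (which would force a dependence on $t$). The key observation is that although the buckets $(I_j)_{j \in [t]}$ produced by Algorithm~\ref{algorithm:recal} are data-dependent, each one is an interval in $[0,1]$, and the class $\I$ of all intervals has VC dimension $2$. Hence a standard VC-based uniform convergence statement yields concentration for \emph{every} bucket simultaneously, regardless of the number of buckets.

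For the first inequality, I would invoke the Vapnik--Chervonenkis uniform convergence theorem for the class $\I$, applied to the marginal distribution of $p$ under $J$. With $m \ge C \log(1/\delta)/\gamma^2$ samples, with probability at least $1 - \delta/2$ we have
\[
  \sup_{I \in \I}\, \Bigl\lvert \Pr_J[p \in I] - \tfrac{1}{m}\sum_{i\in[m]} \bm{1}[p_i \in I] \Bigr\rvert \;\le\; \gamma .
\]
Since each $I_j$ belongs to $\I$, the first claim follows.

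For the second inequality, consider the auxiliary class $\I' = \{(p,y) \mapsto y\, \bm{1}[p \in I] : I \in \I\}$ of $[0,1]$-valued functions. Because $y \in \{0,1\}$, each element of $\I'$ is just the indicator of a set of the form $I \times \{1\}$ in $[0,1] \times \{0,1\}$; these sets form a class of VC dimension at most $3$. Hence with probability at least $1 - \delta/2$, for the same order of samples,
\[
  \sup_{I \in \I}\, \Bigl\lvert \E_J\bigl[y\, \bm{1}[p \in I]\bigr] - \tfrac{1}{m}\sum_{i \in [m]} y_i\,\bm{1}[p_i \in I] \Bigr\rvert \;\le\; \gamma .
\]
Writing $q_j = \Pr_J[p \in I_j]$, $\hat q_j = \tfrac{1}{m}\sum_i \bm{1}[p_i \in I_j]$, $r_j = \E_J[y\,\bm{1}[p\in I_j]] = \E_J[y\mid p\in I_j]\cdot q_j$, and $\hat r_j = \hat\kappa_j \hat q_j = \tfrac{1}{m}\sum_i y_i\,\bm{1}[p_i \in I_j]$, I will then use the algebraic identity
\[
  \bigl(\hat\kappa_j - \E_J[y\mid p\in I_j]\bigr)\, q_j \;=\; \hat\kappa_j(q_j - \hat q_j) + (\hat r_j - r_j).
\]
Bounding the two terms on the right via the two uniform convergence statements above (and noting $\hat\kappa_j \in [0,1]$) yields an overall bound of $2\gamma$, which gives the claim up to rescaling of the constant $C$.

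The only subtle point is the one flagged above: resisting the temptation to apply Hoeffding per bucket (which would lose a $\log t$ factor and, more seriously, would fail when $\hat q_j$ is very small because $\hat\kappa_j - \E_J[y\mid p \in I_j]$ is an unbounded ratio). The identity above sidesteps this by working throughout with the \emph{weighted} quantity $(\hat\kappa_j - \E_J[y\mid p\in I_j])\,q_j$, for which the uniform convergence bounds on intervals directly apply without any lower bound on bucket mass. The handling of overflow (singleton) buckets requires no special argument, since singletons are themselves intervals.
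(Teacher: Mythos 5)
Your proposal is correct and takes essentially the same approach as the paper: both use VC-based uniform convergence over the class of intervals (to handle the data-dependent buckets without any dependence on $t$), and the algebraic identity $(\hat\kappa_j - \E_J[y\mid p\in I_j])\,q_j = \hat\kappa_j(q_j - \hat q_j) + (\hat r_j - r_j)$ you use is exactly the paper's triangle-inequality decomposition around the empirical average $\hat r_j = \hat\kappa_j \hat q_j$. The only cosmetic difference is that you make the auxiliary class $\I'$ explicit (and quote VC dimension $\le 3$ where $2$ would already suffice), while the paper just cites uniform convergence for intervals directly.
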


\begin{proof}
    The intervals $(I_j)_{j\in[t]}$ in the output of the algorithm depend on the input examples $(p_i,y_i)_{i\in[m]}$ which are drawn i.i.d. from some distribution $J$. Using standard uniform convergence arguments (as for proving \Cref{lemma:uniform-convergence-cdl}), and the fact that the VC dimension of intervals over $[0,1]$ is $2$, we have that with probability at least $1-\delta$, the following guarantees hold as long as $m \ge {C} \log(1/\delta) / {{\acc}^2}$ for some sufficiently large constant $C\ge 1$.
    \begin{equation}
        \Bigr|  \frac{1}{m}\sum_{i\in[m]}y_i \bm{1}[p_i\in I_j] - \E_{(p,y)\sim J}[y \bm{1}[p\in I_j]]  \Bigr| \le \frac{{\acc}}{2}\,, \text{ for all }j\in[t]\label{equation:unif-conv-bucketing}
    \end{equation}
    \begin{equation}
        \Bigr|  \frac{1}{m}\sum_{i\in[m]}\bm{1}[p_i\in I_j] - \pr_{(p,y)\sim J}[p\in I_j]  \Bigr| \le \frac{{\acc}}{2} \,, \text{ for all }j\in[t]. \label{equation:unif-conv-bucketing-probabilities}
    \end{equation}

Fixing a $j \in [t]$, we write:
\begin{align*}
    |(\hat\kappa_j - \E_J[y|p\in I_j]) \cdot\Pr_J[p \in I_j]| &= |\hat\kappa_j\cdot\Pr_J[p \in I_j] - \E_J[y\cdot \bm{1}[p\in I_j]]|\\
    &= \left|\frac{\sum_{i\in[m]} y_i \bm{1}[p_i\in I_j]}{\sum_{i\in[m]} \bm{1}[p_{i}\in I_j]} \cdot \Pr_J[p \in I_j] - \E_J[y\cdot \bm{1}[p\in I_j]]\right|\\
    &\leq \left|\frac{\sum_{i\in[m]} y_i \bm{1}[p_i\in I_j]}{\sum_{i\in[m]} \bm{1}[p_{i}\in I_j]} \cdot \Pr_J[p \in I_j] - \frac{1}{m}\sum_{i\in[m]} y_i \bm{1}[p_i\in I_j]\right| \\
    &\;\;\;\; + \left|\frac{1}{m}\sum_{i\in[m]} y_i \bm{1}[p_i\in I_j] - \E_J[y\cdot \bm{1}[p\in I_j]]\right|\\
    &= \left|\frac{\sum_{i\in[m]} y_i \bm{1}[p_i\in I_j]}{\sum_{i\in[m]} \bm{1}[p_{i}\in I_j]}\right|\cdot \left|\Pr_J[p \in I_j] - \tfrac{1}{m}\sum_{i\in[m]} \bm{1}[p_{i}\in I_j]\right| \\
    &\;\;\;\; + \left|\frac{1}{m}\sum_{i\in[m]} y_i \bm{1}[p_i\in I_j] - \E_J[y\cdot \bm{1}[p\in I_j]]\right|\\
\end{align*}
    Here in the first step, we have used the definition of $\hat\kappa_j$, and used triangle inequality in the second step. The second term in the final expression is easily seen to be at most $\acc/2$ by \cref{equation:unif-conv-bucketing}. Finally note that \[ \left|\frac{\sum_{i\in[m]} y_i \bm{1}[p_i\in I_j]}{\sum_{i\in[m]} \bm{1}[p_{i}\in I_j]}\right|\leq 1\] 
    for every instantiation of the samples, so that \cref{equation:unif-conv-bucketing-probabilities} implies a bound of  $\acc/2$ on the first term as well.    
    This concludes the proof of \Cref{lemma:convergence-bucketed-recalibration}.
\end{proof}

We are now ready to prove \Cref{theorem:omniprediction-through-recal}. At a high level, we will split the decision loss for a loss $\ell_v$ and a post-processing function $\kappa$ across the buckets. We argue that except for a small number ($O(r)$) of buckets, the sign of $(\kappa(p)-v)$ is constant within the bucket, using the fact that $\kappa \in \M_r$. We separately handle these buckets using the fact that each such bucket (which cannot be an overflow bucket) has small probability mass. For the typical buckets that don't have a disagreement, the recalibration, if population-exact, would immediately ensure no decision loss.  We will use \cref{lemma:convergence-bucketed-recalibration} to control the error that arises due to the potential inaccuracy of the sampling-based recalibration. 

\begin{proof}[Proof of \Cref{theorem:omniprediction-through-recal}]
    With foresight, we set 
    \[ \eps' = \frac{\eps}{8r}, \ \acc = \frac{\eps\eps'}{4} = \frac{\eps^2}{32r}.\] 
    
    Our goal is to show the following inequality for all $\kappa\in\M_r$ and $\ell\in\L^*$:
    \begin{equation}
        \E_{(p,y)\sim J}[\ell(\hat\kappa(p), y)] \le \E_{(p,y)\sim J}[\ell(\kappa(p), y)] + \eps \label{equation:bucketing-goal}
    \end{equation}
    By \Cref{theorem:uv-cal}, any proper loss $\ell\in \L^*$ can be written as follows: 
    \[\ell(p,y) = \int_{[0,1]} \ell_v^+(p,y) \,d\mu_\ell^+(v) + \int_{[0,1]} \ell_v^-(p,y) \,d\mu_\ell^-(v)\, ,\] where $\mu_\ell$ is some measure over $[0,1]$ with $\mu_\ell^+([0,1]) +\mu_\ell^-([0,1]) \le 1$ and $\ell_v^{\pm}(p,y) = -(y-v)\sign_\pm(p-v)$. Therefore, it suffices to show \eqref{equation:bucketing-goal} for the losses $(\ell_v^{\pm})_{v\in [0,1]}$. Below, we will focus on the losses $(\ell_v^+)_{v\in [0,1]}$, since the proof for $(\ell_v^-)_{v\in [0,1]}$ will be analogous.

    \paragraph{Controlling Buckets of Disagreement.} We fix an arbitrary $\kappa\in \M_r$ and $v\in[0,1]$, and split $[t]$ into two parts as follows. We let $H= H_{\kappa,v} \subseteq[t]$ be as follows.
    \begin{equation}
        H = \Bigr| \Bigr\{ j\in[t] : \exists p,q\in I_j \text{ such that }\sign_+(\kappa(p)-v) \neq \sign_+(\kappa(q)-v)\Bigr\} \Bigr|\label{equation:disagreement-points}
    \end{equation}
    A key observation is that $|H| \le 2r$, due to the definition of $\M_r$ (\Cref{definition:generalized-monotonicity}). In particular, since $\{p: \kappa(p) \ge v\}$ can be expressed as a union of at most $r$ disjoint intervals $\I$, the number of sign changes of $\kappa(p)-v$ as $p$ increases from $0$ to $1$ is at most $2r$ (at the endpoints of the intervals). Moreover, any $j\in H$ corresponds to at least one distinct point of sign change for $\kappa(p)-v$, and, hence, $|H| \le 2r$. 
    
    Note that whenever $|I_j| = 1$, we have $j\not\in H$. Recall that, due to the construction of $(I_j)_{j\in [t]}$ (\Cref{algorithm:recal}), for any $j$ such that $|I_j| > 1$ we have $|I_j \cap \{p_i: i\in [m]\}|\le \eps' m$. Due to the uniform convergence bound of \Cref{lemma:convergence-bucketed-recalibration} we overall have:
    \begin{equation}
        \pr_{J}[p\in I_j] \le \eps' + \acc \text{ for all }j\in H\label{equation:bounding-probabilities-in-S}
    \end{equation}

    \paragraph{Handling typical buckets.} For an interval $I_j$ not in $H$, the $\sign_+(\kappa(p)-v)$ is constant throughout the interval, and the same is true for $\hat\kappa$, by construction. Let $s_j = \sign_+(\kappa(p) - v)$ for $p\in I_j$, $j\in [t]\setminus H$ and similarly $\hat s_j = \sign_+(\hat\kappa(p) - v)$. For such a bucket, we can write
    \begin{align*}
         & \E[\ell_v^+(\hat\kappa(p),y) \cdot \bm{1}[p\in I_j]] - \E[\ell_v^+(\kappa(p),y) \cdot \bm{1}[p\in I_j]] \\
         & \ \ = \E[-(\E[y|\bm{1}[p\in I_j] - v)\cdot (\hat s_j - s_j) \cdot \bm{1}[p \in I_j]]\\
         & \ \ = \E[-(\hat\kappa_j - v)\cdot (\hat s_j - s_j) \cdot \bm{1}[p \in I_j]] + \E[-(\E[y|\bm{1}[p\in I_j] - \hat\kappa_j)\cdot (\hat s_j - s_j) \cdot \bm{1}[p \in I_j]].
    \end{align*}
    Observe that  $-(\hat\kappa_j - v )\hat s_j = -|\hat\kappa_j-v| \le -(\hat\kappa_j - v ) s_j$ so that the first term is bounded above by zero.
    On the other hand, the second term is controlled by \Cref{lemma:convergence-bucketed-recalibration}:
    \begin{align*}
        \E[-(\E[y|\bm{1}[p\in I_j] - \hat\kappa_j)\cdot (\hat s_j - s_j) \cdot \bm{1}[p \in I_j]] & \leq 2\cdot \E[(\E[y|\bm{1}[p\in I_j] - \hat\kappa_j)\cdot \bm{1}[p \in I_j]]\\
        &\leq 2\acc.
    \end{align*}

    Thus we have shown that for any interval $I_j \not\in H$:
    \begin{equation}
    \E[\ell_v^+(\hat\kappa(p),y) \cdot \bm{1}[p\in I_j]] - \E[\ell_v^+(\kappa(p),y) \cdot \bm{1}[p\in I_j]] \leq 2\acc \label{equation:typical-buckets}
    \end{equation}

    \paragraph{Putting it Together.} We are now ready to prove the theorem. We will split the buckets into the buckets of disagreement and the rest, and use the bounds above to control the total error. We write
    \begin{align*}
        \E_{(p,y)\sim J}[&\ell(\hat\kappa(p), y)] - \E_{(p,y)\sim J}[\ell(\kappa(p), y)]\\
        &= \sum_{j \in [t]}\E_{(p,y)\sim J}[\left(\ell(\hat\kappa(p), y) - \ell(\kappa(p), y)\right)\cdot \bm{1}[p \in I_j]]\\
         &= \sum_{j \in H}\E_{(p,y)\sim J}[\left(\ell(\hat\kappa(p), y) - \ell(\kappa(p), y)\right)\cdot \bm{1}[p \in I_j]] \\
         &\;\;\;\; + \sum_{j \in [t] \setminus H}\E_{(p,y)\sim J}[\left(\ell(\hat\kappa(p), y) - \ell(\kappa(p), y)\right)\cdot \bm{1}[p \in I_j]]\\
         &\leq 2r \cdot (\eps' + \acc) + (2/\eps')\cdot (2\acc)\\
         &\leq 4r\eps' + 4\acc/\eps'\\
         &\leq \eps.
    \end{align*}
    Here we have bounded the sum over $H$ and $[t]\setminus H$ using \cref{equation:bounding-probabilities-in-S} and \cref{equation:typical-buckets} respectively. The claim follows.
 \end{proof}

\newpage
\bibliographystyle{alpha}
\bibliography{refs}

\newpage
\appendix

\section{Why Generalized Monotone Functions}
\label[section]{sec:why-gmr}

Here we discuss why generalized monotone functions are a natural class of post-processing functions. 
Intuitively, monotonicity is a natural constraint on  post-processing functions to apply to predictors  if we believe that the predictors are reasonably good to begin with: if $p = 0.7$, that ought to mean that the probability of the label $1$ is higher than if $p = 0.3$. However, this might not be true for  two close-by values like $0.7$ and $0.6999$. Thus it is natural to relax the condition and allow some violations of monotonicity (see \Cref{fig:crossings} for an example).  

A first attempt might be through allowing small total variation. 
An  increasing sequence $I_n$ of length $n$ in $[0,1]$ is $(p_i)_{i \in [n]}$ such that $0 \leq p_1 < p_2 \cdots < p_n \leq 1$. 
For a function $\kappa:[0,1] \to [0,1]$, we define its total variation as
\[ \mathrm{tv}(\kappa) = \lim_{n \to \infty}\sup_{I_n}\sum_{i=1}^{n-1}|\kappa(p_{i+1}) - \kappa(p_i)|\]
where the supremum is over all increasing sequences $I_n$ of length $n$.
The class $\mathrm{tv}(f) \leq 1$ contains both monotone and Lipschitz functions. But we have seen that $\vc(\threshold(\Lip)) = \infty$ whereas $\vc(\threshold(\M_+)) =1$, so they are very different from $\cdl$ viewpoint. 

Is there a strengthening of $\mtv$ that rules out arbitrary Lipschitz function but extends monotone functions? We show that $\M_r$ is such a class. 

\begin{definition}
For $v \in (0,1)$, let the {\em crossing number} at $v$ denoted  $\mathrm{cr}_v(\kappa)$ denote the largest $n$ for which there is a strictly increasing sequence $(p_i)_{i \in [n+1]}$ such that $\sign(\kappa(p_i) -v)$ alternates. Let 
\begin{align*} 
\mcr(\kappa) &= \sup_{v \in [0,1]} \mathrm{cr}_v(\kappa),\\
\mcr(\K) &= \sup_{\kappa \in \K}\mcr(\kappa)
\end{align*}
\end{definition}

It is easy to show that $\mcr(f)$ enjoys the following properties:
\begin{itemize}
    \item $\mathrm{cr}(\kappa)$ is within a constant factor of the smallest $r$ for which $\kappa \in \M_r$. 
    \item We have $\vc(\threshold(\K)) \leq \mathrm{cr}(\K)$, since the alternating sequence of signs on $\mathrm{cr}(\K) + 1$ points is not realizable within $\threshold(\K)$. 
    \item $\mtv(\kappa) \leq \mcr(\kappa)$. We can see $\mtv(\kappa)$ as a bound on $\E_v[\mcr_v(\kappa)]$ over uniformly random $v \in [0,1]$, whereas  $\mcr(\kappa)$  bounds $\mcr_v(\kappa)$ in the worst case. For more detail, see the following lemma.%\cref{sec:prelim-proofs}.
\end{itemize}

\begin{lemma}
    $\mtv(\kappa) \le \mcr(\kappa)$ for all $\kappa : [0, 1] \to [0, 1]$.
\end{lemma}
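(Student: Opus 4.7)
The plan is to use a layer-cake decomposition of the absolute difference $|\kappa(p_{i+1}) - \kappa(p_i)|$ and interchange sum with integral, then bound the integrand pointwise by $\mcr_v(\kappa)$.

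Fix any finite strictly increasing sequence $p_1 < p_2 < \cdots < p_n$ in $[0,1]$. The first step is to apply the identity
\[
    |a - b| = \int_0^1 \bm{1}\bigl[v \in (\min(a,b), \max(a,b)]\bigr] \, dv,
\]
which is valid for $a, b \in [0,1]$, to each term $|\kappa(p_{i+1}) - \kappa(p_i)|$. By Fubini (the sum is finite) I would then rewrite $\sum_{i=1}^{n-1} |\kappa(p_{i+1}) - \kappa(p_i)|$ as $\int_0^1 N_v \, dv$, where $N_v$ counts the number of indices $i \in [n-1]$ across which $\kappa$ strictly crosses the level $v$, i.e., $\min(\kappa(p_i), \kappa(p_{i+1})) < v \le \max(\kappa(p_i), \kappa(p_{i+1}))$.

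Next, I would interpret $N_v$ as a sign-change count. For every $v$ outside the (finite, hence measure-zero) exceptional set $\{\kappa(p_1), \ldots, \kappa(p_n)\}$, every value $\kappa(p_i) - v$ is nonzero, so $\sign(\kappa(p_i) - v) \in \{+1,-1\}$ is well-defined, and $N_v$ equals exactly the number of indices $i$ with $\sign(\kappa(p_i)-v) \neq \sign(\kappa(p_{i+1})-v)$. If that count equals $k$, then by keeping one representative from each maximal block of equal signs, I extract a subsequence of length $k+1$ along which the signs strictly alternate, which by definition witnesses $\mcr_v(\kappa) \ge k$. So $N_v \le \mcr_v(\kappa) \le \mcr(\kappa)$ for almost every $v \in [0,1]$.

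Integrating this pointwise bound yields
\[
    \sum_{i=1}^{n-1} |\kappa(p_{i+1}) - \kappa(p_i)| \;=\; \int_0^1 N_v \, dv \;\le\; \int_0^1 \mcr(\kappa) \, dv \;=\; \mcr(\kappa),
\]
uniformly over the choice of sequence and $n$. Taking the supremum over sequences and the limit $n \to \infty$ gives $\mtv(\kappa) \le \mcr(\kappa)$, as desired. The only subtlety, and the step I would pay attention to, is ensuring the sign-change argument is not derailed by ties $\kappa(p_i) = v$; the plan handles this by noting the tie set is finite (given the fixed sequence) and therefore Lebesgue-null, so it is harmless once we integrate.
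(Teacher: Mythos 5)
Your proposal is correct and is essentially the paper's proof in slightly different notation: the layer-cake identity $|a-b| = \int_0^1 \bm{1}[v \in (\min(a,b),\max(a,b)]]\,dv$ is exactly the paper's observation that $\Pr_{v\sim[0,1]}[\sign(\kappa(p_{i-1})-v)\neq\sign(\kappa(p_i)-v)] = |\kappa(p_i)-\kappa(p_{i-1})|$, and your Fubini exchange matches the paper's linearity of expectation. Your explicit handling of the measure-zero tie set is a small added detail that the paper leaves implicit, but the argument is the same.
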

\begin{proof}
    Fix any increasing sequence $I_n$ of points $0 \le p_1 < p_2 < \cdots < p_n \le 1$. For any particular index $2 \le i \le n$, the probability over a uniformly random value $v \sim [0, 1]$ that $\sign(\kappa(p_{i-1}) - v) \neq \sign(\kappa(p_i) - v)$ is precisely $\abs{\kappa(p_i) - \kappa(p_{i-1})}$. Let $\mcr_v(\kappa, p)$ denote the number of consecutive indices for which this sign change occurs. Then, by linearity of expectation,
    \[
        \E_{v} [\mcr_v(\kappa, p)] = \sum_{i=2}^n \Pr_v\bigl[\sign(\kappa(p_{i-1})-v) \neq \sign(\kappa(p_i)-v)\bigr] = \sum_{i=2}^n \abs{\kappa(p_{i-1})-\kappa(p_i)}.
    \]
    The supremum of the right side over all sequences $I_n$ is precisely $\mtv(\kappa)$, by definition. The supremum of the left hand side over all sequences $I_n$ is
    \[
        \sup_{I_n} \E_v[\mcr_v(\kappa, p)] \le \sup_{v} \sup_{I_n} \mcr_v(\kappa, p) = \sup_v \mcr_v(\kappa) = \mcr(\kappa).
    \]
\end{proof}

Thus $\M_r =\{\kappa: \mcr(\kappa) \leq O(r)\}$ corresponds to a class of functions that relaxes monotonicity, but excludes functions whose thresholds have high VC dimension, like Lipschitz functions.

\section{Additional Proofs}

\subsection{Additional Proofs From \cref{sec:preliminaries}}
\label[appendix]{sec:prelim-proofs}

\begin{proof}[Proof of \cref{thm:proper-losses-partial-improvement}]
    Take $\varphi, \varphi'$ as in the characterization in \cref{thm:proper-characterization}, so that
    \[\ell(p,q) = \varphi(p) + \varphi'(p)(q - p).\]
    Then we have
    \begin{align*}
        \ell(a, c) - \ell(b, c) &= \varphi(a) + \varphi'(a)(c - a) - \Bigl(\varphi(b) + \varphi'(b)(c - b)\Bigr) \\
        &= \underbrace{\varphi(a) + \varphi'(a)(b-a) - \varphi(b)}_{(*)} + \underbrace{\Bigl(\varphi'(a) - \varphi'(b)\Bigr)}_{(**)}(c - b),
    \end{align*}
    where nonnegativity of $(*)$ and $(**)$ follow from concavity of $\varphi$. The other inequality is analogous.
\end{proof}

\begin{proof}[Proof of \cref{thm:prop-loss-range}]
    By \cref{thm:proper-losses-partial-improvement}, the function $\ell(p, 0)$ is nondecreasing in $p$, and the function $\ell(p, 1)$ is nonincreasing in $p$. We also have $\abs{\ell(p, 1) - \ell(p, 0)} \le 1$ for any $\ell \in \L^*$. Thus, for any $0 \le p \le q \le 1$,
    \[
        \ell(p, 0) \le \ell(q, 0) \le \ell(q, 1) + 1.
    \]
    Considering more cases like this, we see that $\abs{\ell(p, 0) - \ell(q, 1)} \le 1$ for all $p, q \in [0, 1]$. Combining this inequality with $\abs{\ell(p, 1) - \ell(p, 0)} \le 1$, we see that any two outputs of $\ell$ on distinct inputs differ by at most $2$. Equivalently, we conclude that the range of any $\ell \in \L^*$ is contained in an interval of length at most $2$.
\end{proof}

\begin{proof}[Proof of \cref{corollary:uv-cal}]
    First, by \cref{theorem:uv-cal}, we know that any loss $\ell \in \L^*$ that is not of the form $\ell_v^+$ or $\ell_v^-$ or a constant function is redundant and need not be considered in the supremum. Next, for any fixed $(p, y) \in [0, 1] \times \{0, 1\}$ and $v \in [0, 1)$, we have $\lim_{w \to v^+} \ell_w^+(p, y) = \ell_v^-(p, y)$. Therefore, by the bounded convergence theorem, the losses $\ell_v^-$ for $v \in [0, 1)$ are redundant if we take the supremum over all $\ell_w^+$ for $w \in [0, 1]$. Note that $\ell_1^-(p, y) = y - 1$ is also redundant because it does not depend on $p$. Similarly, if $v \in (0, 1]$, then $\lim_{w \to v^-} \ell_w^-(p, y) = \ell_v^+(p, y)$. Therefore, the losses $\ell_v^+$ for $v \in (0, 1]$ are redundant if we take the supremum over $\ell_w^-$ losses for $w \in [0, 1]$. Note that $\ell_0^+(p, y) = -y$ is also redundant because it does not depend on $p$. The same is true of constant functions.
\end{proof}

\subsection{Additional Proofs From \cref{sec:samp-comp}}
\label[appendix]{app:samp-comp}

\begin{proof}[Proof of \Cref{lemma:uniform-convergence-cdl}]
    We define the following quantities for $v\in [0,1]$ and $\kappa\in \K$.
    \begin{align*}
        Q_{v,\kappa}^+ &= \E_{(p,y)\sim J}[\ell_v^+(p,y) - \ell_v^+(\kappa(p),y)] \text{ and } Q_{v,\kappa}^- = \E_{(p,y)\sim J}[\ell_v^-(p,y) - \ell_v^-(\kappa(p),y)] \\
        \hat{Q}_{v,\kappa}^+ &= \E_{(p,y)\sim S}[\ell_v^+(p,y) - \ell_v^+(\kappa(p),y)] \text{ and }\hat{Q}_{v,\kappa}^- = \E_{(p,y)\sim S}[\ell_v^-(p,y) - \ell_v^-(\kappa(p),y)]
    \end{align*}
    Due to \Cref{corollary:uv-cal}, it suffices to show that, with probability at least $1-\delta$ over the choice of $S$, we have the following:
    \begin{equation}
        |Q_{v,\kappa}^{\star} - \hat{Q}_{v,\kappa}^{\star}| \le \eps \,, \text{ for all } v\in[0,1], \kappa\in\K , \star\in \{+,-\}\label{equation:unif-conv}
    \end{equation}
    For the remainder of the proof, we will focus on $|Q_{v,\kappa}^{+} - \hat{Q}_{v,\kappa}^{+}|$, since the other case follows identically, with the additional observation that, for translation-invariant classes $\K$, we have $\vc(\threshold(\K)) = \vc(\threshold_-(\K))$, where $\threshold_-(\K) = \{p\mapsto \sign_-(\kappa(p) - 1/2): \kappa\in \K\}$.

    We will show that condition \eqref{equation:unif-conv} holds with probability at least $1-\delta$ as long as the size of $S$ is $m \ge \frac{Cd}{\eps^2} \log(\frac{1}{\eps \delta})$ for some large enough constant $C\ge 1$. To this end, recall that $\ell_v^+(p,y) = -(y-v) \sign_+(p-v)$.  Therefore:
    \begin{align*}
        Q_{v,\kappa}^+ &= \E_{(p,y)\sim J}[(y-v) (\sign_+(\kappa(p) - v) - \sign_+(p - v))] \\
        \hat{Q}_{v,\kappa}^+ &= \E_{(p,y)\sim S}[(y-v) (\sign_+(\kappa(p) - v) - \sign_+(p - v))]
    \end{align*}
    For the following, we use $\sign$ to denote $\sign_+$ and $Q_{v,\kappa}, \hat Q_{v,\kappa}$ to denote $Q_{v,\kappa}^+, \hat Q_{v,\kappa}^+$.
    Since $\K$ is translation invariant, for any $\kappa\in \K$ and $v\in[0,1]$, there is $\kappa'\in\K$ such that $\kappa'(p) = [\kappa(p) + \frac{1}{2} - v]_0^1$. Then, we have $\sign(\kappa(p)-v) = \sign(\kappa(p)+ 1/2-v - 1/2) = \sign(\kappa'(p) - 1/2)$, for all $p\in[0,1]$, and, therefore:
    \begin{align*}
        Q_{v,\kappa} &= \E_{(p,y)\sim J}[(y-v) (\sign(\kappa'(p) - 1/2) - \sign(p - v))] \\
        \hat{Q}_{v,\kappa} &= \E_{(p,y)\sim S}[(y-v) (\sign(\kappa'(p) - 1/2) - \sign(p - v))]
    \end{align*}

    Let $V = \{\frac{i \epsilon}{16}: i = 0, 1, \dots, \lfloor\frac{16}{\eps}\rfloor \} \cup \{1\}$, and $\proj_{\eps}(v) = \arg \min_{v'\in V}|v-v'|$. We define the following quantities.
    \begin{align*}
        Q_{v,\kappa}^V &= \E_{(p,y)\sim J}[(y-\proj_{\eps}(v)) (\sign(\kappa'(p) - 1/2) - \sign(p - v))] \\
        \hat{Q}_{v,\kappa}^V &= \E_{(p,y)\sim S}[(y-\proj_{\eps}(v)) (\sign(\kappa'(p) - 1/2) - \sign(p - v))]
    \end{align*}
    Note that, by the choice of $V$, we have $|Q_{v,\kappa}^V - Q_{v,\kappa}| \le \eps/8$, and, similarly, $|\hat{Q}_{v,\kappa}^V - \hat{Q}_{v,\kappa}| \le \eps/8$, for all $v\in [0,1]$ and $\kappa\in \K$. Therefore, it suffices to show that with probability at least $1-\delta$, the following holds for any $v\in[0,1]$, any $v'\in V$, and any $\kappa' \in \K$:
    \[
        \Bigr| \E_{(p,y)\sim J}[(y-v') (\sign(\kappa'(p) - 1/2) - \sign(p - v))] - \E_{(p,y)\sim S}[(y-v') (\sign(\kappa'(p) - 1/2) - \sign(p - v))] \Bigr| \le \frac{\eps}{4}\,.
    \]
    Since the size of $V$ is $O(1/\eps)$, it suffices to prove that the above bound holds for each individual $v'\in V$ (but uniformly over $v\in[0,1]$ and $\kappa\in \K$) with probability $1-O(\eps \delta)$. The desired result would then follow by a union bound. For each $v'\in V$, the desired bound is true due to standard uniform convergence arguments, combined with the fact that the VC dimension of the class $\{p \mapsto \sign(\kappa(p)-1/2) : \kappa \in \K\}$ is $d$, and the VC dimension of the class $\{p \mapsto \sign(p-v) : v\in[0,1]\}$ is $1$. In particular, we may combine the following results from \cite{mohri2018foundations}: Corollary 3.8 (which gives a bound on the Rademacher complexity of a binary class in terms of the associated growth function), Theorem 3.17 (Sauer's Lemma, which bounds the growth function in terms of the VC dimension), and Theorem 11.3 (which gives a generalization bound for regression with respect to bounded and Lipschitz losses, in terms of the Rademacher complexity of the underlying function class). The choice of $O(\eps \delta)$ for the failure probability only incurs an additive term of $O(\log(1/\eps))$, and, therefore, our choice for the number of samples $m$ suffices to achieve the desired result.
\end{proof}

\begin{proof}[Proof of Theorem \ref{theorem:sample-complexity} Upper Bound]
    Let $\A_1$ be the algorithm that receives a set $S$ of $m$ i.i.d. examples from some unknown distribution $J$ over $[0,1]\times\{0,1\}$, where $m \ge C \cdot d\log(1/\eps)/\eps^2$, for some sufficiently large universal constant $C\ge 1$, and does the following:
    \begin{enumerate}
        \item Compute the following quantity:
        \[ \cdl_{\K}(S) = \sup_{\kappa \in \K, \ell\in\L^*} \frac{1}{m}\sum_{(p,y)\in S} (\ell(p,y) - \ell(\kappa(p),y)) \]
        \item If $\cdl_{\K}(S) \le \alpha - \epsilon/2$, then output $\accept$. Otherwise, output $\reject$.
    \end{enumerate}

    By \Cref{lemma:uniform-convergence-cdl}, with probability at least $2/3$, we have $|\cdl_{\K}(J) - \cdl_{\K}(S)| \le \eps/2$. Under this event, if $\cdl_{\K}(J) \le \alpha -\eps$, then algorithm $\A_1$ will accept. On the other hand, if $\cdl_{\K}(J) > \alpha$, then $\A_1$ will reject. 
\end{proof}

\begin{proof}[Proof of Corollary \ref{thm:test-audit-sc}]
    The proof of the upper bound is essentially the same as in \cref{theorem:sample-complexity}. The key ingredient of that proof was the uniform convergence bound stated in \cref{lemma:uniform-convergence-cdl}, which holds for a supremum over all $\ell \in \L^*$. In particular, it continues to hold if we only take the supremum over $\ell \in \L^*_{\mu\mathsf{-sc}}$, so the proof still goes through. For the lower bound, we can no longer rely on our argument based on the $\ell_{1/2}^+$ loss, since its associated function $\varphi_{1/2}(p) = -\abs{p-1/2}$ is not strongly concave. To circumvent this issue, consider the following version of the squared loss:
    \[
        \ell_{\mathsf{sq}}(p, y) = (y - p)^2.
    \]
    Then $\partial \ell_{\mathsf{sq}}(p) = 1 - 2p \in [-1, +1]$, so the function $\ell_{\mathsf{sq}}$ indeed belongs to the class $\L^*$. Moreover, $\varphi_{\mathsf{sq}}(p) = \ell_{\mathsf{sq}}(p, p) = p(1-p)$ is $2$-strongly concave, so $\ell_{\mathsf{sq}} \in \L^*_{2\mathsf{-sc}}$. Next, given any $\ell \in \L^*$ and $\mu > 0$, we define the following convex combination, which belongs to $\L^*_{\mu\mathsf{-sc}}$:
    \[
        \ell_\mu = \frac{\mu}{2}\ell_{\mathsf{sq}}  + \Bigl(1 - \frac{\mu}{2}\Bigr)\ell.
    \] We will use $\ell_\mu$ to study the effect of restricting to $\mu$-strongly convex proper losses. First, since we are considering a restricted class of loss functions, we clearly have $\cdl_{\L^*_{\mu\mathsf{-sc}},\K} \le \cdl_\K$. Conversely, let $\kappa$ be a post-processing function that improves $\ell$ by at least $\alpha$, meaning that
    \[
        \E\bigl[\ell_{\mathsf{sq}}(\kappa(p), y)\bigr] \le \E\bigl[\ell_{\mathsf{sq}}(p, y)\bigr] - \alpha.
    \]
    Since the convex combination $\ell_\mu$ puts $1 - \mu/2$ weight on $\ell$, the post-processing $\kappa$ must improve the loss on this part of $\ell_\mu$ by at least $(1-\mu/2)\alpha$. Although $\kappa$ may worsen the loss on $\ell_{\mathsf{sq}}$ arbitrarily, the convex combination $\ell_\mu$ puts only $\mu/2$ weight on $\ell_{\mathsf{sq}}$, so it must worsen the loss on this part of $\ell_\mu$ by at most $\mu_2 \cdot 2$ (recall that all losses in $\L^*$, such as $\ell_{\mathsf{sq}}$ have range bounded in an interval of length $2$, by \cref{thm:prop-loss-range}). In total, $\kappa$ must improve the loss on $\ell_\mu$ by at least
    \[
        \Bigl(1 - \frac{\mu}{2}\Bigr)\alpha - \mu \ge \alpha - 2\mu.
    \]
    
    It follows immediately that an $(\alpha, \beta)$-auditor for $\cdl_{\K}$ is implied by an $(\alpha-2\mu,\beta)$-auditor for $\cdl_{\L^*_{\mu\mathsf{-sc}},\K}$. Thus, setting $\alpha = 1/8$ and $\beta = 0$, our lower bound for auditing carries over to the case of $\L^*_{\mu\textsf{-sc}}$, as claimed.
\end{proof}

\begin{proof}[Proof of Theorem \ref{thm:cdl-smooth}]
    We will prove the upper and lower bounds separately.

    \paragraph{Upper Bound.} From the loss OI lemma (\Cref{thm:loss-oi}), we have that $\ell(p,y) - \ell(\kappa(p),y) \le (\partial\ell(\kappa(p)) - \partial\ell(p) ) (y-p)$. The function $w'(p) = \partial\ell(\kappa(p)) - \partial\ell(p)$ is $6$-Lipschitz, because $\kappa$ is $2$-Lipschitz and $\partial \ell$ is $2$-Lipschitz. Therefore, we have
    \[
        \smcdl(J) \le \sup_{w':\; 6\text{-Lipschitz}} \E\Bigr[w'(p) (y-p)\Bigr] \le 6\cdot\smce(J)
    \]

    \paragraph{Lower Bound.} Suppose that there is a $1$-Lipschitz function $w:[0,1]\to [-1,1]$ such that 
    \[
        \smce(J)= \E_{(p,y)\sim J}[(y-p)\cdot w(p)] = \alpha\,.
    \]
    Then, the post-processing function $\kappa(p) = [p+\alpha\, w(p)]_0^1$ is $2$-Lipschitz and satisfies:
    \[
        \E_{(p,y)\sim J}\Bigr[(y-\kappa(p))^2\Bigr] \le \E_{(p,y)\sim J}\Bigr[(y-p)^2\Bigr] - \alpha^2
    \]
    The squared loss $\ell_{\mathsf{sq}}(p,y) = (p-y)^2/2$ is $1$-Lipschitz and proper. Therefore, $\smcdl(J) \ge \alpha^2/2$.
\end{proof}

\section{Tightness of the Weight-Restricted Calibration Characterization}
\label[appendix]{sec:tightness-of-cdl-vs-ce}

In this section, we show that the quadratic gap in \cref{thm:cdl-vs-ce} is essentially tight, using the example of the class of monotonically nondecreasing post-processings $\K = \M_+$.

\begin{theorem}
\label[theorem]{thm:tightness-of-cdl-vs-ce}
    There exist distributions $J_1, J_2$ over pairs $(p, y) \in [0, 1] \times \{0, 1\}$ such that
    \[
        \cdl_{\M_+}(J_1) \gtrsim \propce(J_1)
    \]
    and
    \[
        \cdl_{\M_+}(J_2) \lesssim \propce(J_2)^2.
    \]
\end{theorem}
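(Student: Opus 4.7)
The plan is to exhibit two concrete distributions, a single-point example for $J_1$ and a uniform-shift example for $J_2$.

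For $J_1$, I take $p = 1/2$ almost surely and $\E[y] = 1/2 + \alpha$ for some $\alpha \in (0, 1/2)$. Since $p$ is supported on a single point, every weight $w \in \threshold'(\M_+)$ takes a value in $\{-1, +1\}$ at $p = 1/2$, so a direct computation gives $\propce(J_1) = \alpha$. For the matching lower bound on $\cdl_{\M_+}$, I pair the V-shaped loss $\ell_v^+$ with $v$ just above $1/2$ with the constant post-processing $\kappa \equiv 1 \in \M_+$: then $\sign_+(p - v) = -1$ while $\sign_+(\kappa(p) - v) = +1$, so the improvement is $2(\E[y] - v) \to 2\alpha$ as $v \to (1/2)^+$, giving $\cdl_{\M_+}(J_1) \ge 2\alpha \gtrsim \propce(J_1)$.

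For $J_2$, I take $p \sim \mathrm{Uniform}[0, 1 - \eta]$ with $y \mid p \sim \Ber(p + \eta)$ for some $\eta \in (0, 1/2)$. Every $w \in \threshold'(\M_+)$ has the form $\pm \sign_+(p - a)$ (the $\pm 1$ constants arise by taking $a$ outside $[0, 1]$). Since $\E[y - p \mid p] = \eta$ is independent of $p$, the corresponding calibration error reduces to $\pm \eta\bigl(1 - 2\Pr[p < a]\bigr)$, whose absolute value is bounded by $\eta$ and is achieved at $\eta$ by the $\pm 1$ constants. Hence $\propce(J_2) = \eta$.

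The key step for $J_2$ is the upper bound $\cdl_{\M_+}(J_2) = O(\eta^2)$. The critical observation is that the Bayes-optimal post-processor $p^*(p) = p + \eta$ is itself monotone, hence belongs to $\M_+$. Because $p^*$ is globally optimal under every proper loss, $\cdl_{\M_+}(J_2) = \cdl_{\Kall}(J_2) = \sup_{\ell \in \L^*} \E\bigl[\ell(p, p^*) - \ell(p^*, p^*)\bigr]$, the expected regret of the identity predictor relative to Bayes. Using the characterization $\ell(p,y) = \varphi(p) + \varphi'(p)(y - p)$ from \Cref{thm:proper-characterization} and letting $\mu = -d\varphi'$, a Fubini-type expansion yields the Bregman identity
\[
    \ell(p, p^*) - \ell(p^*, p^*) = \int_p^{p+\eta}(p + \eta - s)\,d\mu(s),
\]
where $\mu$ is a positive measure on $[0,1]$ with $\mu([0,1]) \le 2$ (using $|\varphi'| \le 1$). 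Exchanging the expectation with the $\mu$-integral, the inner expectation $\E_p\bigl[(p+\eta-s)\,\bm{1}[s-\eta \le p \le s]\bigr]$ is at most $\eta^2/(2(1-\eta))$ for each $s$, since the window $[s - \eta, s]$ carries $p$-mass at most $\eta/(1-\eta)$ and the integrand averages to at most $\eta/2$. Integrating against $\mu$ gives $\cdl_{\M_+}(J_2) \le \eta^2/(1-\eta) \lesssim \propce(J_2)^2$.

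The main obstacle is this upper bound for $J_2$: the generic inequality $\cdl_{\M_+} \lesssim \propce$ from \Cref{thm:cdl-vs-ce} only yields $\cdl_{\M_+}(J_2) \lesssim \eta$, off by the crucial factor of $\eta$. Recovering this factor requires simultaneously exploiting that Bayes is monotone (so $\M_+$ is as powerful as $\Kall$) and that the $\eta$-miscalibration is spread uniformly over $[0, 1-\eta]$, so that regret accumulates at the quadratic rate governed by the concavity of $\varphi$ rather than the naive linear rate $|\varphi'(p) - \varphi'(p^*)|\cdot\eta$.
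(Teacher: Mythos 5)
Your proof is correct and the two constructions match the paper's essentially exactly (the paper uses $p = 1-\eps$, $y=1$ for $J_1$ and the identical uniform-shift distribution for $J_2$). The one place you diverge technically is the $\cdl_{\M_+}(J_2) = O(\eta^2)$ bound: you first reduce to $\cdl_{\Kall}$ by noting the Bayes post-processor $p \mapsto p+\eta$ is monotone (the paper's argument also implicitly uses this when it substitutes $\kappa(p) = p+\eps$), and then compute the residual regret via the Bregman representation $\ell(p,p^*) - \ell(p^*,p^*) = \int_p^{p+\eta}(p+\eta-s)\,d\mu(s)$ with $\mu = -d\varphi'$, while the paper instead invokes \cref{corollary:uv-cal} to restrict to V-shaped losses $\ell_v^+$ and computes the shift gain directly. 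The two routes are dual: the V-shaped decomposition puts the ``integral against $\mu$'' on the outside of the supremum, whereas your Bregman identity builds it in pointwise, so you avoid the reduction lemma entirely at the cost of a Fubini interchange. Both give the same $\Theta(\eta^2/(1-\eta))$ bound, and your version is arguably a more self-contained derivation of why the regret accumulates quadratically in the uniform-shift instance.
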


\begin{proof}
    Our examples are essentially the same ones used by \cite{HuWu24} to establish the tightness of their relationship between $\cdl_{\Kall}$ and $\ece$, which also has a quadratic gap.

    For $J_1$, suppose that $p = 1 - \eps$ and $y = 1$ deterministically. Then,
    \[
        \propce(J_1) \le \ece(J_1) = \eps,
    \]
    but if we set $v = 1-\eps/2$ and consider the monotonic post-processing $\kappa(p) = p + \eps$, then
    \[
        \cdl_{\M_+}(J_1) \ge \E[\ell_{v}^+(p, y)] - \E[\ell_v^+(p + \eps, y)] = \frac{\eps}{2} - \Bigl(-\frac{\eps}{2}\Bigr) = \eps.
    \]
    Therefore, $\cdl_{\M_+}(J_1) \gtrsim \propce(J_1)$.

    For $J_2$, consider a uniform $p \sim [0, 1 - \eps]$, and suppose that $y|p \sim \Be(p + \eps)$. Then,
    \[
        \propce(J_2) \ge \E[y - p] = \eps,
    \]
    but the characterization of $\cdl$ in \cref{corollary:uv-cal} implies 
    \begin{align*}
        \cdl_{\M_+}(J_2) &= \sup_{\substack{v \in [0, 1], \\ \kappa \in \M_+}} \, \E[\ell_v^+(p, y)] - \E[\ell_v^+(p, y)] \\
        &=\sup_{v \in [0, 1]} \, \E[\ell_v^+(p, y)] - \E[\ell_v^+(p + \eps, y)] \\
        &= \sup_{v \in [0, 1]} \, \E[(\sign_+(p + \eps - v) - \sign_+(p - v))(p+\eps-v)] \\
        &= \sup_{v \in [0, 1]} \, 2 \cdot \E[\bm{1}[v - \eps \le p < v](p+\eps-v)]\\
        &\le \sup_{v \in [0, 1]} \, 2 \cdot \E[\bm{1}[v - \eps \le p < v] (v + \eps - v)] \\
        &= 2 \cdot \frac{\eps}{1-\eps} \cdot \eps \\
        &\lesssim \eps^2.
    \end{align*}
    Thus, $\cdl_{\M_+}(J_2) \lesssim \propce(J_2)^2$.
\end{proof}

\section{Properties of Generalized Monotone Post-Processings}\label[appendix]{section:appendix-generalized-monotone}

We provide here some classical results related to generalized monotone post-processings, which, in particular, imply \Cref{theorem:test-audit-generalized-monotone}. We begin with the following result on their VC dimension.

\begin{proposition}\label[proposition]{proposition:properties-of-generalized-monotone}
    For any $r\in \N$, $\vc(\threshold(\M_r))  = 2r$.
\end{proposition}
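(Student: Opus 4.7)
The plan is to prove the equality by combining a structural description of $\threshold(\M_r)$ with matching combinatorial upper and lower bounds. By unpacking the definition, $h\in\threshold(\M_r)$ iff $h(p)=\sign_+(\kappa(p)-1/2)$ for some $\kappa\in\M_r$, so the positive set $\{p : h(p) = +1\}$ equals the $1/2$-superlevel set of $\kappa$, which by definition of $\M_r$ is a union of at most $r$ disjoint intervals. Conversely, for any union $S$ of at most $r$ disjoint intervals in $[0,1]$, the indicator $\kappa(p) = \bm{1}[p \in S]$ belongs to $\M_r$ and yields $h = \sign_+(\kappa - 1/2)$ with positive set exactly $S$. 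Hence $\threshold(\M_r)$ is precisely the class of $\{\pm 1\}$-valued functions on $[0,1]$ whose positive set is a union of at most $r$ disjoint intervals, where each interval may be open, closed, half-open, or a singleton.

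The key combinatorial observation is that for any $h\in\threshold(\M_r)$ and any sorted points $p_1 < \cdots < p_n$ in $[0,1]$, the induced label sequence $(h(p_1),\ldots,h(p_n)) \in \{\pm 1\}^n$ has at most $r$ maximal runs of $+1$'s. Indeed, the leftmost point of each positive run lies in some component interval $I_j$ of $h$'s positive set, and distinct positive runs yield distinct intervals: the point immediately after the first run carries a negative label and therefore lies to the right of $I_j$, forcing the leftmost point of the next positive run into a different component. For the upper bound $\vc(\threshold(\M_r)) \le 2r$, I take any set of $n = 2r+1$ sorted points and exhibit the alternating labelling $(+,-,+,-,\ldots,+)$, which has $r+1$ positive runs and so cannot be realized by any $h \in \threshold(\M_r)$.

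For the lower bound $\vc(\threshold(\M_r)) \ge 2r$, I take any $2r$ distinct sorted points $p_1 < \cdots < p_{2r}$ and show that every labelling $\sigma \in \{\pm 1\}^{2r}$ is realizable. The number $k$ of maximal positive runs of $\sigma$ is at most $\lceil 2r/2 \rceil = r$; letting $[a_1,b_1], \ldots, [a_k,b_k]$ denote their index ranges, the set $S = \bigcup_{j=1}^k [p_{a_j}, p_{b_j}]$ (a singleton when $a_j = b_j$) is a union of $k \le r$ disjoint intervals whose positive indicator realizes $\sigma$: the positive indices lie in $S$ by construction, and the negative indices separating the runs are excluded since $p_{b_j} < p_{a_{j+1}}$ and none of these intermediate sampled points appear in any $[p_{a_{j'}}, p_{b_{j'}}]$. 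The two bounds combine to give $\vc(\threshold(\M_r)) = 2r$. There is no substantive obstacle here—the argument is pure combinatorial bookkeeping on runs in a $\{\pm 1\}$-sequence—and the only minor care point is using singleton intervals (permitted by the definition of $\M_r$) when a positive run consists of a single index.
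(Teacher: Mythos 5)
Your proof is correct and follows essentially the same approach as the paper's: the upper bound comes from observing that the alternating labelling on $2r+1$ points would require the $1/2$-superlevel set to have at least $r+1$ components, and the lower bound comes from building a union of at most $r$ intervals (one per maximal positive run) to realize any labelling. The only cosmetic difference is that you show \emph{every} set of $2r$ distinct points is shattered, whereas the paper exhibits one specific shattered set; both suffice for the VC bound.
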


\begin{proof}
    Consider the points $p_i = \frac{i}{2r}$ where $ i \in [2r]$. We will show the following:
    \[
        \{\vv \in \cube{2r}: \vv = (\sign_+(\kappa(p_i) - 1/2))_{i\in[2r]} \text{ for some }\kappa\in\M_r\} = \cube{2r}
    \]
    In particular, for any $\vv\in \cube{2r}$, we form the intervals $I_1, I_2, \dots, I_r$ as follows. Let $I_1$ be an interval of the form $[p_i, p_j]$, where $i\le j$ and $\vv(i) = \vv(i+1) = \cdots = \vv(j) = 1$ and $\vv(i') = -1$ for all $i' < i$ and $i' = j+1$. We then proceed recursively to form $I_2, \dots, I_r$ after removing the points $p_1, \dots, p_j, p_{j+1}$. Note that for each interval we form, we remove at least $2$ points. One for $p_j$ and one for $p_{j+1}$. We let $\kappa_{\vv}(p) = \bm{1}[p \in \cup_{i\in [r]} I_i]$, where we have $\kappa_{\vv} \in \M_r$ and $(\sign_+(\kappa(p_i) - 1/2))_{i\in[2r]} = \vv$.

    On the other hand, for any set of $2r+1$ distinct points on $[0,1]$, no function in $\M_r$ can generate the labeling $\vv'\in \cube{2r+1}$, where $\vv'(2i-1) = 1$ and $\vv'(2i) = -1$, for $i = 1, 2,\dots, r$, and $\vv'(2r+1) = 1$, because the set $\{p:\kappa(p) \ge 1/2\}$ would then have at least $r+1$ disjoint components. 
\end{proof}

Our testing result in \Cref{theorem:test-audit-generalized-monotone}, which pertains to the specific class $\M_r$, is achieved by instantiating our more general \Cref{theorem:testing-through-proper-agnostic-learning} with the following standard agnostic learner for unions of $r$ intervals, based on dynamic programming (see Section 4.2 in \cite{KearnsSS94}). Indeed, the tester of \Cref{theorem:testing-through-proper-agnostic-learning} makes $O(\log(1/\eps\delta)/\eps)$ non-adaptive calls to the agnostic learner, which has sample complexity $O((r+\log1/\delta)/\eps^2)$ and runtime $\tilde{O}(r^2+r\log1/\delta)/\eps^2)$. Therefore, a union bound over the failure probability of each call implies a tester with total sample complexity $\tilde{O}(r/\eps^2)$ and runtime $\tilde{O}(r^2/\eps^3)$. For completeness, we describe and analyze the proper agnostic learner here.

\begin{theorem}[\cite{KearnsSS94}]\label[theorem]{theorem:erm-generalized-monotone}
    Let $r\ge 1$ and $\C = \threshold(\M_r)$. For any $\eps,\delta\in (0,1)$, there is a proper agnostic $(\eps,\delta)$-learner for $\C$ with sample complexity $O(({r +\log 1/\delta})/{\eps^2})$  and runtime $\tilde{O}( (r^2 + r\log 1/\delta)/\eps^2)$.
\end{theorem}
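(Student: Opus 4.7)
The plan is to follow the classical ``ERM on a class of bounded VC dimension'' recipe. First, note that by Proposition \ref{proposition:properties-of-generalized-monotone}, $\vc(\C) = 2r$. Hence by the standard Vapnik--Chervonenkis uniform convergence theorem, drawing $m = O((r + \log(1/\delta))/\eps^2)$ i.i.d. samples from $D$ guarantees, with probability at least $1-\delta$, that the empirical error of every $h \in \C$ is within $\eps/2$ of its true error. Consequently, any proper empirical risk minimizer $\hat h \in \C$ satisfies the agnostic learning guarantee of \cref{definition:agnostic-learning}. It remains to implement ERM efficiently.

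For ERM, I would first sort the sample $S = \{(p_i, z_i)\}_{i\in[m]}$ so that $p_1 \le \cdots \le p_m$ (in $O(m \log m)$ time). Recall that each $h \in \C$ is of the form $h(p) = 2 \cdot \bm{1}[p \in S_h] - 1$, where $S_h \subseteq [0,1]$ is a union of at most $r$ disjoint intervals. Since empirical error only depends on $h$'s values at the sample points $p_i$, we may assume WLOG that endpoints of $S_h$ lie at (or immediately between) the sorted sample points. Thus ERM reduces to partitioning the sorted sequence $z_1, \ldots, z_m$ into at most $2r+1$ contiguous blocks, alternately labeled $-1$ and $+1$, with at most $r$ blocks of label $+1$, so as to minimize the total number of disagreements with the $z_i$'s. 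This is solved by a dynamic program: let $f(i, j, s)$ be the minimum number of disagreements on $z_1, \ldots, z_i$ when $j$ ``on'' blocks (label $+1$) have been used and the current block has state $s \in \{+1,-1\}$. The transitions either extend the current block (adding $\bm{1}[z_i \neq s]$) or switch state (incrementing $j$ if switching to $+1$). This has $O(rm)$ states and $O(1)$ work per state, giving runtime $O(rm) = \tilde O((r^2 + r \log(1/\delta))/\eps^2)$. Traceback then yields the optimal partition, from which we read off a concrete union of at most $r$ closed intervals $\hat S \subseteq [0,1]$ (placing endpoints at midpoints between consecutive sample indices where the partition changes), and output $\hat h(p) = 2 \cdot \bm{1}[p \in \hat S] - 1 \in \C$.

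I expect the main subtlety to be correctly handling the reduction from continuous intervals (possibly open, half-open, or singleton) to the discrete block-partition problem: we need to argue that restricting attention to ``intervals aligned with the sample'' loses no empirical accuracy, and that the reconstructed $\hat h$ is genuinely in $\threshold(\M_r)$ (hence the learner is \emph{proper}). Both facts follow from the observation that the empirical risk depends only on the $\pm 1$ pattern induced on $p_1, \ldots, p_m$, and that $\M_r$ is rich enough to realize every such pattern with at most $2r$ sign-changes, as already exploited in the proof of \cref{proposition:properties-of-generalized-monotone}. Once this is verified, combining uniform convergence with the DP-based ERM gives the stated $(\eps, \delta)$-agnostic learner with the claimed sample complexity and runtime.
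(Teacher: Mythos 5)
Your proof is correct and takes essentially the same two-step approach as the paper: establish the sample complexity via $\vc(\C) = 2r$ (\cref{proposition:properties-of-generalized-monotone}) and uniform convergence, then implement ERM by dynamic programming on the sorted sample in $O(mr)$ time. The only genuine difference is the DP formulation. The paper rewrites the empirical objective as maximizing $\sum_{i} \sum_{(p,z)\in S} z\,\bm{1}[p \in I_i]$ over at most $r$ disjoint intervals, and recovers $O(1)$ work per state by maintaining prefix sums $Z(\cdot)$ together with auxiliary running maxima $M(k,\cdot)$ and $B(k,\cdot)$ --- without these a direct recursion on $Q(k,q)$ would cost $O(m^2)$ per value of $k$. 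You instead use the classical ``state-machine'' DP $f(i,j,s)$ (position $i$, number of $+1$ blocks used $j$, current block label $s$), which is $O(1)$ per transition by construction; both give $O(mr)$, and yours is arguably the more transparent formulation. One point you should make explicit rather than defer to ``the main subtlety'': if several samples share the same $p$-value, any $h:[0,1]\to\{\pm1\}$ must label them identically, so (as the paper does by merging into $S'$) you must coalesce duplicate $p$'s before running the DP --- otherwise your block partition could place a boundary between two equal $p$'s and report an over-optimistic, infeasible optimum, breaking the properness claim. With that fixed, the traceback construction of $\hat S$ and the conclusion $\hat h \in \threshold(\M_r)$ go through exactly as you outline.
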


\begin{algorithm}[ht]
\caption{$\textsc{EstimateMinimumRisk}(S, r)$}\label{algorithm:erm-generalized-monotone}
\KwIn{Set $S$ of $m$ pairs of the form $(p,z)$ where $p\in[0,1], z\in\cube{}$ and $r\ge 1$.}
\KwOut{A value $R \in [0,1]$.}
\BlankLine
\tcc{Sorting the Samples, in time: $O(m \log(m))$}
\BlankLine
Let ${\cal O} = ((p_1,z_1),(p_2,z_2),\dots,(p_m,z_m))$ be an ordering such that $p_1\le p_2\le \dots\le p_m$\;
If there are any duplicates $p_i = p_{i+1}=\dots=p_{i+k}$, then merge them and set $z' = \sum_{j=0}^{k}z_{i+j}$, so that we obtain ${\cal O}' = ((p'_1,z'_1),(p'_2,z'_2),\dots,(p'_{m'},z'_{m'}))$ with $p'_1 < p'_2<\dots< p'_{m'}$\;
\BlankLine
\tcc{Initializations, in time: $O(m)$}
\BlankLine
Set $Z(i) = \sum_{j\le i} z_j'$, $Z(0) = 0$\;
Set $Q(0,t) = Q(s,1) = 0$ for all $t\in[m'+1]$ and $s \in [r]$\;
Set $M(0,j) = \max_{1\le i \le j}\{- Z(i-1)\}$ for all $j\in[m'+1]$ and $M(s,1) = 0$ for all $s\in[r]$\;
Set $B(0,t) = \max_{1\le j < t}\{Z(j) + M(0,j)\}$ for all $t\in [m'+1]$ and $B(s,1) = 0$ for all $s\in [r]$\;
\BlankLine
\tcc{Dynamic Programming, in time: $O(mr)$}
\BlankLine
\For{$s = 1,2,\dots, r$}{
    \For{$t = 2,3,\dots, m'+1$}{
        $Q(s,t) = \max\{Q(s-1,t), B(s-1,t)\}$\;
    }
    \For{$j = 2,3,\dots, m'+1$}{
        $M(s,j) = \max\{ M(s, j-1), Q(s-1,j) - Z(j-1)\}$\;
        $B(s,j) = \max\{ B(s,j-1), Z(j) - M(s,j)\}$\;
    }
}
Let $R = Q(r,m'+1) / m$\;
\end{algorithm}

\begin{proof}[Proof of \Cref{theorem:erm-generalized-monotone}]
    We will show that there is an algorithm $\erm(\C)$ that takes as input a set $S$ of labeled examples of the form $(p,z)$ where $p\in [0,1]$ and $z\in\{\pm 1\}$, runs in time $O(|S| r + |S|\log |S|)$, and outputs some $h\in \C$ such that the following holds:
    \[
        \pr_{(p,z)\sim S}[h(p) \neq z] \le \min_{f\in \C} \pr_{(p,z)\sim S}[f(p) \neq z]
    \]
    In \Cref{algorithm:erm-generalized-monotone} we present a version of $\erm(\C)$ that does not return $h$, but only returns an estimate of its error. The algorithm $\erm(\C)$ can be implemented based on \Cref{algorithm:erm-generalized-monotone} by using some additional space.
    The agnostic learning result then follows by standard uniform convergence arguments, due to the fact that $\vc(\C) = 2r$ (\Cref{proposition:properties-of-generalized-monotone}), as long as $m:=|S| \ge C(r+\log(1/\delta))/\eps^2$ for some sufficiently large constant $C\ge 1$.
    
    We first observe that $\bm{1}[z \neq f(p)] = ({1 - z\cdot f(p)})/{2}$ for any $f \in \C, z \in \cube{}$. Moreover, any $f\in \C$ can be expressed as follows for $r$ disjoint intervals $I_1,I_2,\dots,I_r\subseteq[0,1]$:
    \[
        f(p) = 2\sum_{i=1}^r \bm{1}[p\in I_i] - 1
    \]
    We therefore have the following:
    \begin{align*}
        \min_{f\in \C} \pr_{(p,z)\sim S}[f(p) \neq z] &= \frac{1}{2} - \frac{1}{2}\max_{f\in \C} \E_{(p,z)\sim S}[z f(p)] \\
        &= \frac{1}{2} + \frac{1}{2} \E_{(p,z)\sim S}[z] - \frac{1}{2} \max_{I_1, \dots, I_r} \sum_{i = 1}^r \E_{(p,z)\sim S}[z \bm{1}[p\in I_i])]
    \end{align*}

    It suffices to find the endpoints of the intervals $(I_i)_i$ that maximize the following quantity:
    \[
        Q := \max_{I_1, \dots, I_r} \sum_{i = 1}^r \sum_{(p,z)\in S}z \bm{1}[p\in I_i]
    \]
    Let $S'$ be the set of pairs of the form $(p,z')$, where each $p\in[0,1]$ appears in $S$ at least once and $z'$ is the sum of all $z$ such that $(p,z)\in S$. Each $p\in [0,1]$ appears in $S'$ at most once. Let $P$ be the set of $p\in[0,1]$ appearing in $S'$. Then, we can write $Q$ as follows:
    \[
        Q = \max_{k\in [r]}\max_{\substack{a_i,b_i\in P \\ a_i \le b_i < a_{i+1}}} \sum_{i = 1}^k \sum_{(p,z')\in S'} z' \bm{1}[p\in [a_i,b_i]]\,.
    \]
    Let $P(q) = P\cap [0,q)$ and let $Q(k,q)$ be defined as follows:
    \[
        Q(k,q) := \max_{k' \le k} \max_{\substack{a_i,b_i\in P(q) \\ a_i \le b_i < a_{i+1}}} \sum_{i = 1}^{k'} \sum_{(p,z')\in S'}z' \bm{1}[p\in [a_i,b_i]]\,.
    \] 
    Then, $Q(k,q)$ satisfies the following recurrence for all $k \in [r]$ and $q\in P\cup \{\infty\}$:
    \[
        Q(k,q) = \max\Biggr\{ Q(k-1,q) , \max_{\substack{a,b\in P(q) \\ a \le b }} \Biggr\{ Q(k-1, a) + \sum_{(p,z')\in S'} z' \bm{1}[p\in [a,b]] \Biggr\}  \Biggr\}\,,
    \]
    as long as the initial conditions are the following for all $q\in P\cup \{\infty\}$ and $k\in [r]$:
    \begin{align*}
        Q(0, q) = Q(k, p_{\min}) = 0\,, \text{ where }p_{\min} = \min_{p\in P}p
    \end{align*}
    Our goal is to estimate the quantity $Q = Q(r,\infty)$, and retrieve the points $(a_i,b_i)_i$ that achieve the maximum. Due to the structure of $Q(k,q)$, the estimation of $Q$ can be achieved in time $\poly(r, |P|)$ via dynamic programming. For the indices $(a_i,b_i)_i$, we associate each pair $(k,q)$ with a set of intervals $\I(k,q)$ of the form $[a,b]$, so that:
    \[
        \I(k,q) = 
        \begin{cases}
            \I(k-1,q)\,, \text{ if }Q(k,q) = Q(k-1, q) \\
            \I(k-1,a)\cup \{[a,b]\}\,, \text{ if }Q(k,q) = Q(k-1, a) + \sum_{(p,z')\in S'} z' \bm{1}[p\in [a,b]]
        \end{cases}
    \]
    In order to obtain an improved time complexity, we may use some additional memory to store intermediate auxiliary quantities as described in \Cref{algorithm:erm-generalized-monotone}. In particular, we define the following quantities for all $a\in P$:
    \[
        Z(a) = \sum_{\substack{(p,z')\in S'}} z'\bm{1}[p\le a], \text{ and } \prev(a) = \max_{a'\in P(a)} a'
    \]
    and we equivalently formulate $Q(k,q)$ as follows:
    \begin{align*}
        Q(k,q) &= \max\Biggr\{ Q(k-1,q) , \max_{\substack{a,b\in P(q) \\ a \le b }} \Biggr\{ Q(k-1, a) + Z(b) - Z(\prev(a)) \Biggr\}  \Biggr\} \\
        &= \max\Biggr\{ Q(k-1,q) , \max_{\substack{b\in P(q)}} \Biggr\{ Z(b) + \max_{\substack{ a \le b }}\Biggr\{ Q(k-1, a) - Z(\prev(a)) \Biggr\}  \Biggr\} \Biggr\}
    \end{align*}
    where we may define the following quantities:
    \begin{align*}
        M(k-1, b) &= \max_{\substack{ a \le b }}\Biggr\{ Q(k-1, a) - Z(\prev(a)) \Biggr\} \\
        B(k-1, q) &= \max_{\substack{b\in P(q)}} \Biggr\{ Z(b) + M(k-1, b)  \Biggr\}
    \end{align*}
    Due to the definitions of the auxiliary quantities, they satisfy the following recurrence relations.
    \begin{align*}
        Q(k,q) &= \max\{ Q(k-1,q) , B(k-1, q)\} \\
        B(k,q) &= \max\{ B(k,\prev(q)), Z(q)+ M(k,q)\} \\
        M(k,q) &= \max\{ M(k,\prev(q)), Q(k-1, q) - Z(\prev(q)) \}
    \end{align*}
    Each step of the above recurrence relations can be computed in $O(1)$, which implies the desired bound on the runtime.
\end{proof}

\end{document}